\documentclass{article}


\usepackage[final,nonatbib]{neurips_2025}




\usepackage[utf8]{inputenc} 
\usepackage[T1]{fontenc}    
\usepackage{hyperref}       
\usepackage{url}            
\usepackage{booktabs}       
\usepackage{amsfonts}       
\usepackage{nicefrac}       
\usepackage{microtype}      
\usepackage{xcolor}         
\usepackage{capt-of}
\usepackage{amsmath}
\usepackage{amssymb}
\usepackage{enumerate}
\usepackage{colortbl}
\usepackage{mathtools}
\usepackage{amsthm}
\usepackage{graphicx}
\usepackage{wasysym}
\usepackage{acronym}
\usepackage{xspace}

\newcommand{\ie}{i.e.,\xspace}
\newcommand{\eg}{e.g.,\xspace}
\newcommand{\etc}{etc.\xspace}
\newcommand{\etal}{\textit{et al.}\xspace}
\newcommand{\eat}[1]{}

\acrodef{iot}[IoT]{Internet of Things}


\newcommand{\change}[1]{\textcolor{cyan}{#1}}
\newcommand{\mapping}[1]{\textcolor{red}{(#1)}}
\newcommand{\adjustspace}[1]{}
\newcommand{\tool}{{FedQS}\xspace}
\acrodef{sgd}[FedSGD]{Federated Stochastic Gradient Descent}

\acrodef{avg}[FedAvg]{Federated Averaging}

\acrodef{iid}[IID]{Independent and Identically Distributed}
\newcommand{\iid}{\ac{iid}\xspace}
\newcommand{\code}{\url{https://github.com/bkjod/FedQS_}}

\newcommand{\modest}{\texttt{Mod\ding{172}}\xspace}
\newcommand{\modada}{\texttt{Mod\ding{173}}\xspace}
\newcommand{\modupd}{\texttt{Mod\ding{174}}\xspace}

\usepackage[caption=false,font=scriptsize,labelfont=rm,textfont=rm]{subfig}
\usepackage{pifont}
\usepackage{wrapfig}
\usepackage{multicol}
\usepackage{multirow}
\usepackage{makecell}
\usepackage{threeparttable}
\usepackage{enumitem}
\usepackage{comment}
\usepackage{lipsum}

\newtheorem{theorem}{Theorem}[section]

\newtheorem{lemma}[theorem]{Lemma}

\theoremstyle{definition}

\newtheorem{assumption}[theorem]{Assumption}
\theoremstyle{remark}
\newtheorem{remark}[theorem]{Remark}

\title{\tool: Optimizing Gradient and Model Aggregation for Semi-Asynchronous Federated Learning}

\renewcommand{\thefootnote}{\fnsymbol{footnote}}

\author{%
Yunbo Li\protect\footnotemark[1]~,~ %
Jiaping Gui\protect\footnotemark[1]~~\footnotemark[2]~,~ %
Zhihang Deng,~ %
Fanchao Meng,~ %
Yue Wu\footnotemark[2]~,\\
School of Computer Science, Shanghai Jiao Tong University, Shanghai, China\\
\texttt{\{li-yun-bo, jgui, dzh1227, mactavishmeng, wuyue\}@sjtu.edu.cn} \\
}


\begin{document}

\maketitle
\footnotetext[1]{Equal contribution.}
\footnotetext[2]{Corresponding authors.}

\renewcommand{\thefootnote}{\arabic{footnote}}
\setcounter{footnote}{0}

\begin{abstract}
Federated learning (FL) enables collaborative model training across multiple parties without sharing raw data, with semi-asynchronous FL (SAFL) emerging as a balanced approach between synchronous and asynchronous FL. However, SAFL faces significant challenges in optimizing both gradient-based (\eg FedSGD) and model-based (\eg FedAvg) aggregation strategies, which exhibit distinct trade-offs in accuracy, convergence speed, and stability. While gradient aggregation achieves faster convergence and higher accuracy, it suffers from pronounced fluctuations, whereas model aggregation offers greater stability but slower convergence and suboptimal accuracy. This paper presents \tool, the first framework to theoretically analyze and address these disparities in SAFL. \tool introduces a \textit{divide-and-conquer strategy} to handle client heterogeneity by classifying clients into four distinct types and adaptively optimizing their local training based on data distribution characteristics and available computational resources. Extensive experiments on computer vision, natural language processing, and real-world tasks demonstrate that \tool achieves the highest accuracy, attains the lowest loss, and ranks among the fastest in convergence speed, outperforming state-of-the-art baselines. Our work bridges the gap between aggregation strategies in SAFL, offering a unified solution for stable, accurate, and efficient federated learning. The code and datasets are available at \code.

\eat{Federated learning (FL) is an effective approach to enable collaborative training among clients while preserving their privacy. Within the context of FL, semi-asynchronous federated learning (SAFL) is introduced to address the issue of client heterogeneity. It has been demonstrated that gradient aggregation and model aggregation exhibit both pros and cons in terms of performance in SAFL. However, the theoretical reason explaining their performance gap remains unclear. Guidance on how to integrate the strengths of both aggregation strategies is also lacking. 
\change{In this paper, we propose \tool, an optimization SAFL framework that simultaneously supports both gradient aggregation and model aggregation strategies. The framework employs a divide-and-conquer strategy to dynamically assign clients to different training mechanisms based on their data distribution characteristics and available computational resources. We further provide rigorous theoretical analysis demonstrating that both aggregation strategies in \tool achieve exponential convergence rates for non-convex optimization tasks, thus establishing theoretical guarantees that mitigate the shortcomings of both aggregation strategies in SAFL.}
Experimental results demonstrate that \tool can seamlessly integrate both aggregation strategies, achieving accurate predictions, rapid convergence, and stable model training under diverse scenarios, outperforming state-of-the-art (SOTA) approaches.}
\end{abstract}

\section{Introduction}

Federated learning (FL) has emerged as a promising paradigm for enabling multiple parties to collaboratively train a shared model without sharing their raw local data~\cite{mcmahan2017communication, li2021fedbn,karimireddy2020scaffold,wu2024cardinality,he2024co,chen2024page,yan2024federated}. FL has found widespread applications in domains such as healthcare~\cite{wu2020fedhome,khoa2021fed,antunes2022federated} and finance~\cite{long2020federated,suzumura2022federated,liu2023efficient}. Among various FL communication modes, semi-asynchronous FL (SAFL) strikes a balance between synchronous FL and fully asynchronous FL, offering a flexible trade-off between model consistency, training latency, and resource utilization~\cite{hannah2017more,zhou2022towards,nguyen2022federated,singh2025hybrid}. In SAFL, devices operate independently with partial coordination, for instance, through buffered updates~\cite{nguyen2022federated} or clustered synchronization~\cite{chai2021fedat}, making it adaptable to heterogeneous network conditions and device capabilities in real-world deployments~\cite{zhang2023fedmds, zang2024efficient}.

\begin{wrapfigure}{r}{0.4\textwidth}
	\vspace{-7.5ex}
	\centering
	\subfloat{\includegraphics[width=0.6\linewidth]{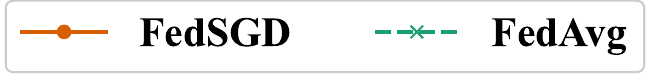}}
	\vspace{-2.5ex}
	\setcounter{subfigure}{0}
	\subfloat{\includegraphics[width=0.45\linewidth]{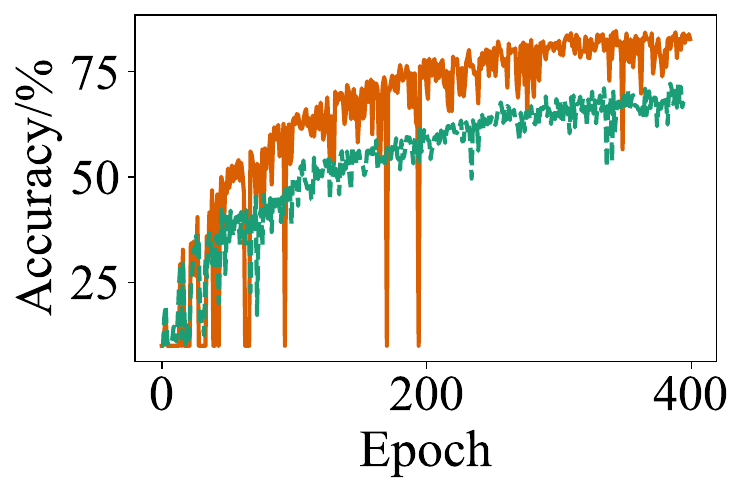}}
	\subfloat{\includegraphics[width=0.45\linewidth]{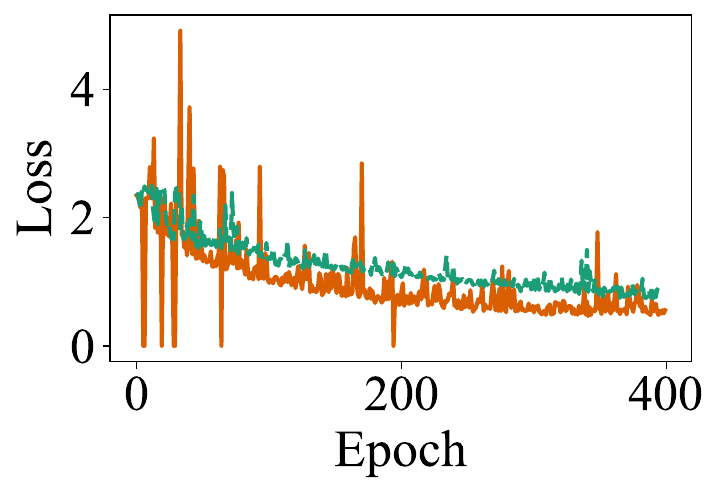}}
	
	\vspace{-1ex}
	\caption{FedSGD vs. FedAvg in SAFL.}
	\label{fig:perf-gap}
	\vspace{-4ex}
\end{wrapfigure}

Despite its advantages, designing an effective SAFL system presents significant challenges, particularly in selecting an appropriate aggregation strategy. Recent empirical studies~\cite{li2024experimental} reveal that gradient-based aggregation (\eg FedSGD~\cite{mcmahan2017communication}) in SAFL achieves higher accuracy and faster convergence but suffers from severe fluctuations, whereas model-based aggregation (\eg FedAvg~\cite{mcmahan2017communication}) offers stability at the cost of slower convergence and reduced accuracy. Figure~\ref{fig:perf-gap} shows the distinct performance of these strategies when training ResNet-18 on CIFAR-10 in SAFL. While existing SAFL research has focused on straggler mitigation~\cite{yu2024staleness}, client drift~\cite{xie2023asynchronous}, resource heterogeneity~\cite{zhang2023fedmds}, and client selection~\cite{hu2023scheduling}, the critical differences between aggregation strategies remain understudied.

Optimizing both gradient and model aggregation in SAFL faces three key challenges: 1) \textbf{Lack of Theoretical Understanding:} Current analyses of aggregation discrepancies~\cite{li2024experimental} are empirical, lacking theoretical foundations to guide solution design. 2) \textbf{Inherent Aggregation Disparity:} In neural network training, the loss function defines a mapping~\cite{dinh2017sharp, li2018visualizing} from the parameter space to the loss space. Gradient aggregation computes first-order derivatives of this mapping, capturing both the direction and magnitude of the local updates in the loss space. In contrast, model aggregation operates directly on the parameter space, which hardly maintains a clear correspondence with the loss space due to the lack of linearity or convexity conditions in loss functions—assumptions rarely satisfied in deep neural networks. 3) \textbf{Server- or Client-Centric Limitations:} Existing approaches are predominantly server-centric, relying on a single aggregation method, while client-centric methods struggle with insufficient global information.

To address these challenges, we propose \tool, the first framework that optimizes both gradient and model aggregation in SAFL. Our key insight is that stale updates and data heterogeneity empirically induce distinct continuity in the optimization trajectories of different aggregation strategies. Building on this observation, we introduce a divide-and-conquer strategy that classifies clients into four types (Fast-but-Strongly-Biased, Fast-and-Weakly-Biased, Straggling-but-Weakly-Biased, and Straggling-and-Strongly-Biased) and adapts their training strategies dynamically. 
We provide a formal convergence analysis of \tool, proving that it achieves exponential convergence rates under both aggregation strategies. Our theoretical results demonstrate that \tool addresses two key limitations in SAFL: the convergence instability of gradient aggregation and the suboptimal convergence capability of model aggregation.

We evaluate \tool on computer vision (CIFAR-10), natural language processing (Shakespeare), and real-world data (UCI Adult) tasks. Results show that \tool consistently outperforms state-of-the-art baselines. Compared to the fastest-converging model aggregation and gradient aggregation baselines, \tool improves average accuracy by 38.98\% and 5.65\%, respectively, while reducing training time by 58.85\% and 3.68\%. Against the highest-precision baselines, \tool achieves 15.74\% and 12.93\% faster convergence (in rounds) and reduces training time by 72.63\% and 48.04\%, respectively. 
Ablation studies validate the impact of each module, while hyperparameter and system setting analyses demonstrate \tool's robustness. Our work bridges the gap between theory and practice in SAFL, offering a principled approach to harness the strengths of both aggregation strategies.

\eat{
We summarize our contributions as follows:
\begin{itemize}
	\item We propose \tool, a novel optimization framework that seamlessly integrates gradient aggregation and model aggregation within SAFL, thereby harnessing the strengths of both aggregation strategies. 
	\item We theoretically prove that \tool converges with a bound but with a rate comparable to existing works for gradient aggregation and converges with a lower bound and a faster rate than existing works for model aggregation.
	\item Our extensive evaluation demonstrates that \tool achieves superior performance, surpassing SOTA works across various SAFL tasks.
\end{itemize}}

\section{Background \& Motivation}
\label{sec:pre}
\textbf{Limitations in Existing SAFL Aggregation Methods.} Existing studies in Semi-Asynchronous Federated Learning (SAFL) predominantly focus on optimizing either gradient aggregation or model aggregation, but not both. For gradient aggregation, prior work has addressed challenges such as model convergence~\cite{zhang2021secure,zhou2022towards}, optimal aggregation frequency~\cite{ma2021fedsa,zhu2022online}, and advanced optimization techniques (e.g., momentum~\cite{reddi2020adaptive}). For model aggregation, solutions target the straggler problem~\cite{wu2020safa,chai2021fedat,zhang2023fedmds}. While AAFL~\cite{cheng2022aafl} incorporates both gradients and models, it ultimately adopts model aggregation, using gradients only as auxiliary validation signals. To our knowledge, only~\cite{li2024experimental} empirically compares gradient and model aggregations but offers no mitigation for their performance gap. In contrast, we propose a unified optimization framework with theoretical guarantees.

\textbf{Federated Learning Basics.} FL involves a server $S$ and clients $\mathcal{C} = \{C_1, C_2, ..., C_N\}$. The server $S$ maintains a global model $w_g$, while each client $C_i$ trains a local model $w_i$ on its dataset $\mathcal{D}_i$ (with $n_i$ samples). The goal is to minimize the global loss: $\min F(w_g) \triangleq \frac{1}{N}\sum_{i=1}^N F_i(w_i)$, where $F_i(\cdot)$ is the local objective. Clients access stochastic gradients $\nabla F_i(w_i; \xi_{i.j})$ for each data sample $\xi_{i.j} \in \mathcal{D}_i$.

\textbf{Synchronous vs. Semi-Asynchronous FL.} Synchronous FL is based on server-coordinated training where only activated clients upload their local updates in one global epoch with others idling. In contrast, clients in SAFL train autonomously and push updates asynchronously, with aggregation triggers upon conditions (\eg sufficient updates~\cite{zhou2022towards}) at the server.

\textbf{Aggregation Strategies.} In Synchronous FL, for gradient aggregation, during the $(t+1)$-th global epoch, the server updates $w_g^t$ by gradient descent via aggregated gradients from activated client set $\mathcal{S}$: $w_g^{t+1} = w_g^t - \eta_g \sum_{i \in \mathcal{S}}\frac{n_i}{n}\nabla F_i(w_i^t)$, where $\nabla F_i(w_i^t) \triangleq \sum_{e=1}^E \nabla F_i(w_{i,e-1}^t;\mathcal{D}_i)$ and $e$ represents local training epochs. For model aggregation, the server averages local parameters directly: $w_g^{t+1} = \sum_{i \in \mathcal{S}}\frac{n_i}{n}w_i^t$. However, in SAFL, staleness arises in the local updates as client $C_i$ may use an outdated global model $w_g^{\tau_i^t} (\tau_i^t \le t)$ for local training, leading to divergent optimization trajectories.

\begin{wrapfigure}{r}{0.5\textwidth}
	\vspace{-4ex}
	\centering
	\small
	\setlength{\tabcolsep}{1pt}
	\captionof{table}{The average best accuracy and corresponding differences between two aggregation strategies under varying influencing factors. }
	\vspace{1.5ex}
        \begin{threeparttable}
	\begin{tabular}{c c|c c|c}
		\hline
		\multicolumn{2}{c|}{Activated Factors} & \multicolumn{2}{c|}{Average Best Acc. (\%)} & \multirow{3}{*}{Gap (\%)} \\
		\cline{1-4}
		\makecell{Factor 1} & \makecell{Factor 2} & \makecell{Gradient \\ Aggregation} & \makecell{Model \\ Aggregation} & \\\hline
		\Circle & \Circle & 90.93 & 91.05 & 0.12 \\
		\CIRCLE & \Circle & 90.73 & 90.51 & 0.22 \\
		\Circle & \CIRCLE & 86.79 & 87.29 & 0.50 \\
		\CIRCLE & \CIRCLE & 82.63 & 71.11 & \textbf{11.52} \\
		\hline
	\end{tabular}
        \begin{tablenotes}
            \scriptsize 
            \item[*] All experiments involve training ResNet-18 on CIFAR-10 across three independent runs. \Circle \ indicates the absence of a factor, while \CIRCLE \ indicates its presence.
            \end{tablenotes}
        \end{threeparttable}
	\label{tab:motivation_results}
	\vspace{-2.5ex}
\end{wrapfigure}

\textbf{Key Observations.} Inspired by empirical results~\cite{li2024experimental}, we identify two factors that cause performance gaps in SAFL: \textit{Stale Updates} (Factor 1) and \textit{Data Heterogeneity} (Factor 2). Staleness affects the continuity of global optimization trajectories differently in gradient and model aggregation. Gradient aggregation preserves continuity by performing gradient descent on the latest global model $w_g^{t-1}$, where stale gradients only influence the current update direction and magnitude. In contrast, model aggregation averages stale parameters directly, resetting the trajectory and disrupting the optimization continuity on the loss landscape. However, under Independent and Identically Distributed (IID) settings, stale updates exhibit limited divergence as all clients optimize identical local objectives derived from the identical data distributions. Conversely, non-IID data distributions exacerbate this issue, since increased local training rounds amplify local-global deviation~\cite{wang2019adaptive} and semi-asynchronous updates bias the server toward frequent updaters~\cite{zhou2022towards} in such distributions. In gradient aggregation, this bias leads to over-optimization on dominant clients' data distribution; in model aggregation, the global model retains more information from frequent clients, restarting optimization from a skewed initial point. These dynamics exacerbate the performance gap between the two strategies in SAFL.

\textbf{Empirical Validation.} We conducted experiments training ResNet-18 on CIFAR-10 with 100 clients. The results show that when both factors are active (\ie SAFL + non-IID), the accuracy gap between gradient aggregation and model aggregation surges to 11.52\%. Table~\ref{tab:motivation_results} shows detailed results. 

This paper proposes \tool, a novel framework that enables clients to select optimal local training modes autonomously by quantifying staleness (by update speed) and heterogeneity (by update similarity), compatible with two aggregation strategies.

\section{Design of \tool}

\subsection{System overview}

As shown in Figure~\ref{fig:system}, \tool\footnote{QS denotes \underline{Q}uadrant \underline{S}election. We also use the meaning of QS rankings to refer to the self-evaluation mechanism in \tool.} consists of three modules: the global aggregation estimation module (\modest), the local training adaptation module (\modada), and the global model aggregation module (\modupd). The first two modules are deployed on clients, while \modupd is deployed on a centralized server. The goal of \modest is to empower each participant to ascertain the approximate gradient update of the global model, which is utilized to compute the gradient update similarity between the server and the client. Since \modest operates independently of the transmitted data required for global model updates, it facilitates the integration of various aggregation strategies. The goal of \modada is to adapt each client to different training mechanisms based on its data distribution and available training resources. This module alleviates the impact of local heterogeneity on the global model, addressing issues such as unstable convergence and sluggish convergence speed. Meanwhile, the goal of \modupd is to selectively weigh local updates for global aggregation through a feedback mechanism, thereby tackling the challenge of low accuracy under the model aggregation strategy.

\begin{figure}[!t]
  \includegraphics[width=\textwidth]{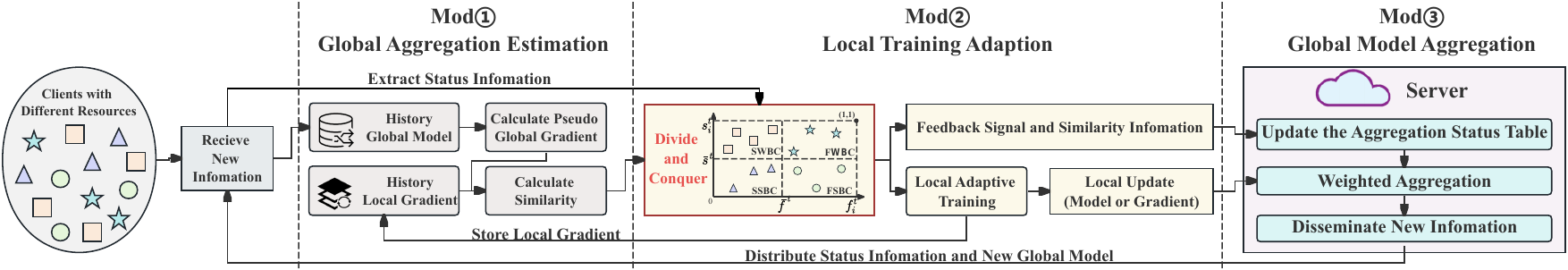}
  \caption{Workflow of \tool, featuring clients with diverse resource capabilities. In \tool, during a global training round, \modest first utilizes the global model distributed by \modupd to compute pseudo-global gradients and sends them to \modada. Then, \modada employs the information disseminated by \modupd as input when determining a local training strategy and leverages the global model from \modupd as the starting point for local training. Finally, \modupd uses the local update data from \modada for global aggregation and leverages the similarity information from \modada to update the global state table.}
  \label{fig:system}
  \adjustspace{-3ex}
\end{figure}


\subsection{Global aggregation estimation (\modest)}
Existing SAFL solutions typically leverage information (\eg from historical local gradients~\cite{zhou2022towards}) uploaded by clients to facilitate global model aggregation. A key problem within these solutions is that they are proposed from the server's perspective, resulting in a design tightly coupled to a specific aggregation strategy. To address this problem, we introduce \modest within \tool to ensure compatibility with both aggregation strategies. Specifically, \modest enables each participant to acquire global aggregation information during the local training phase from the perspective of local clients. To do it, \modest first stores the latest two global models locally and then adopts the existing approach~\cite{nguyen2022federated,zang2024efficient} to derive a pseudo-global gradient $L_g(w_g^t)$ by comparing two consecutive global models, \ie $L_g(w_g^t) = w_g^t - w_g^{t-1}$.

This pseudo-global gradient contains information about the global update within the current round. We then compute the local-global gradient update similarity $s_i^t$ on each client by comparing the latest local update gradient and the derived pseudo-global gradient, utilizing a similarity function such as cosine similarity. A larger $s_i^t$ suggests that this client's updates in the current round are more aligned with the global update and vice versa.


\subsection{Local training adaptation (\modada)}

\begin{wrapfigure}{r}{0.3\textwidth}
    \vspace{-2ex}
    \centering
	\includegraphics[width=0.3\textwidth]{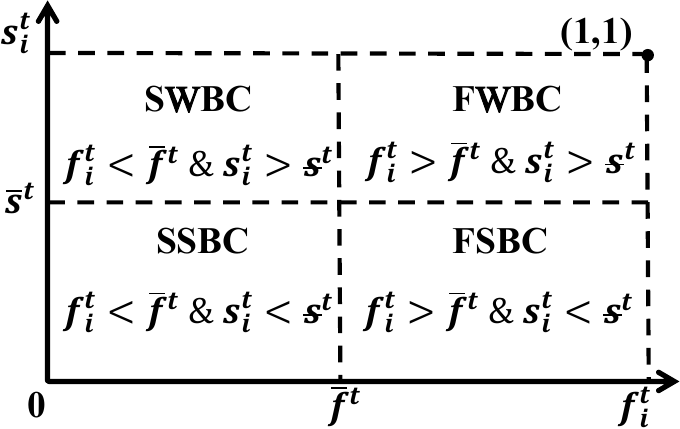}
	\caption{Categorization of clients in \modada.}
	\label{fig:conquer}
    \vspace{-2ex}
\end{wrapfigure}

\modada aims to orchestrate heterogeneous clients and enable each client to adaptively train its local model, facilitating the optimization of global model aggregation in \modupd. A key strength of \modada lies in its ability to enable clients to dynamically adjust their local training strategies. There are two key benefits associated with it: (1) During the initialization phase, the server does not require prior knowledge (\eg performance distribution) of clients. This contrasts with existing algorithms (\eg FedAT~\cite{chai2021fedat} and FedMDS~\cite{zhang2023fedmds}), which rely on such prior knowledge for hierarchical client classification. (2) Given that the local information is processed and shared with the server in real-time, \modada can handle dynamic FL environments where the performance (\eg available resources) of individual clients may vary during training. 


Specifically, \modada adopts a divide-and-conquer strategy to categorize clients into four types, corresponding to the four quadrants in Figure~\ref{fig:conquer}. \modada leverages the local update speed $f_i^t$ and the local-global gradient update similarity $s_i^t$ for categorization. Besides the aforementioned two indicators, the server also calculates the average value of all clients' local speeds ($\bar{f}^t$) and gradient update similarities ($\bar{s}^t$), respectively, which serve as the dividers. Equations~\ref{equ:state_tabel_update} and~\ref{equ:average_borderline} demonstrate the calculation formulas for these indicators.
\adjustspace{-2px}
\begin{equation}
    n(i) =
    \begin{cases}
         n(i), &\text{if } i \notin \mathcal{S} \\
         n(i)+1,&\text{if } i \in \mathcal{S}
    \end{cases},
    s_g(i) =
    \begin{cases}
         s_g(i),&\text{if } i \notin \mathcal{S} \\
         s_i^t,&\text{if } i \in \mathcal{S}
    \end{cases},
    \label{equ:state_tabel_update}
\end{equation}
\adjustspace{1px}
\begin{equation}
    f_i^t = \frac{n(i)}{\sum_{i=1}^N n(i)},\;\;\; \bar{f}^t = \frac{\sum_{i=1}^N f_i^t}{N},\;\;\; \bar{s}^t= \frac{\sum_{i=1}^N s_g(i)}{N} ,
    \label{equ:average_borderline}
\end{equation}
where $n(i)$ is the total number of times client $C_i$ participates in the global model aggregation, $s_g(i)$ is the latest local-global gradient update similarity that $C_i$ shares with the server, and $N$ is the total number of clients.


Below, we explain the training strategy employed for each of the four types of clients.

\textbf{Fast-but-Strongly-Biased Clients (FSBC):} This type of client exhibits a rapid update speed (\ie $f_i^t > \bar{f}^t$) but produces biased local gradient updates that deviate from the global model (\ie $s_i^t < \bar{s}^t$). We attribute this phenomenon to the heterogeneous local data distribution, whose features the server may not fully extract. Due to these biased updates, reducing the local learning rates could hinder the global model's ability to learn sparse features from local data. Therefore, \modada maintains the local learning rates of these clients unchanged. To facilitate the learning of the missing information from local data, \modada instructs the server to assign appropriate (typically higher) aggregation weights to these clients through a feedback mechanism (see \modupd in Section~\ref{sec:mod3}).

\textbf{Fast-and-Weakly-Biased Clients (FWBC):} For such clients, the server has effectively extracted their local information for global model aggregation. However, due to their fast local update speed (\ie $f_i^t > \bar{f}^t$), the global aggregation may become biased towards their local models. To mitigate this effect, \modada reduces the learning rate of these clients (\ie $\eta_i^t = \eta_i^{t-1} - a\mathcal{F}$, where $a$ is the change rate and $\mathcal{F} = \frac{\bar{f}^t}{f_i^t}$) to slow down their update speed. A potential side effect of this operation is that it may decrease the convergence speed of local model training. To address this issue, we introduce a momentum term during the local training phase, as shown in Equation~\ref{equ:momentum_updation}.
\begin{equation}
    w_{i,e}^t = w_{i,e-1}^t - \eta_i^t[\overbrace{\sum_{r=1}^{e} (m_i^t)^r \nabla F_{i,e-r}(w_{i,e-r-1}^{t})}^{\text{Momentum Term}} +\nabla F_{i,e}(w_{i,e-1}^{t})],
	\label{equ:momentum_updation}
\end{equation}
where $m_i^t$ is the momentum rate and has the relationship: $m_i^t = m_0 + k(\frac{1}{\mathcal{G}} - 1)$, where $m_0, k$ are hyperparameters, and $\mathcal{G} = \frac{\bar{s}^t}{s_i^t}$. 

\textbf{Straggling-but-Weakly-Biased Clients (SWBC):} Despite effectively utilizing local information for global model aggregation (\ie $s_i^t > \bar{s}^t$), this type of client experiences a slow update speed (\ie $f_i^t < \bar{f}^t$), suggesting limited local computational resources. Therefore, \modada increases its learning rate (\ie $\eta_i^t = \eta_i^{t-1} + a\mathcal{F}$) to compensate for these resource constraints. However, an excessive increase in the learning rate may hinder the optimization of the local model. To address this issue, \modada employs the same momentum term used for FWBC to expedite the extraction of local data information and facilitate model convergence.

\textbf{Straggling-and-Strongly-Biased Clients (SSBC):} To address the problem posed by these clients' limited computational resources (\ie $f_i^t < \bar{f}^t$), \modada increases the local learning rate (\ie $\eta_i^t = \eta_i^{t-1} + a\mathcal{F}$). To tackle the issue of local update bias diverging from the global model update direction (\ie $s_i^t < \bar{s}^t$), we leverage a local validation set on each of these clients to assess the running environment. If the global model performs similarly on each label of the local validation dataset, we consider the issue to be a straggling problem (\textit{Situation 1}) and adopt a strategy akin to that used for SWBC, enabling momentum optimization algorithms to mitigate the impact of outdated models. Conversely, if the global model exhibits significant performance differences across labels in the validation dataset, we consider the issue to be a dispersed distribution problem (\textit{Situation 2}) and utilize a feedback mechanism similar to that employed for FSBC to alleviate the effects of heterogeneous data distribution. 

Notably, the momentum term in \tool is not primarily a speed accelerator but a trajectory stabilizer for clients whose updates align well with the global model (high $s_i^t$). For these clients, momentum mitigates oscillations while accelerating convergence speed (Equation~\ref{equ:momentum_updation}). In contrast, for high-bias clients (FSBC and SSBC in \textit{Situation 2}), premature momentum application could amplify the divergence between their local updates and the global one, as their updates are not yet globally beneficial.

\subsection{Global model aggregation (\modupd)}
\label{sec:mod3}
This module aims to facilitate the central server in weighing the local models for the aggregation of the global model. Specifically, upon receiving new local update data from client $C_i$, the server first calculates the average speed $\bar{f}^t$, average similarity $\bar{s}^t$, and the local update speed $f_i^t$, and then updates the aggregation status table accordingly using Equations~\ref{equ:state_tabel_update} and~~\ref{equ:average_borderline}. Then, the server will persistently wait and start aggregation once it receives $K$ available local updates.
For each client that has uploaded its local update data, the server assigns an initial weight parameter $p_i = \frac{n_i}{n}$ and then iterates through these clients. If a client has triggered the feedback mechanism (SSBC with \textit{Situation 2} or FSBC), the server updates its weight parameter via $p_i = \frac{\exp(\phi - \mathcal{F})}{2^{\phi - \mathcal{F}}}\cdot\frac{(1+\mathcal{G})^2}{K}$, where $\exp(\cdot)$ is the natural exponential function and $\phi = \frac{K}{N}$; otherwise, the weights remain unchanged. Next, the server normalizes these weight parameters (\ie $p_i = \frac{p_i}{\sum_{i \in \mathcal{S}}p_i}, \  \forall i \in \mathcal{S}$) before proceeding with the global aggregation process. The term $\frac{\exp(\phi - \mathcal{F})}{2^{\phi - \mathcal{F}}}$ addresses the effect of outdated weights (inspired by~\cite{chen2019communication, zhou2022towards}), while $\frac{(1+\mathcal{G})^2}{K}$ accounts for the quadratic relationship between the convergence bound and the model weight difference, as outlined in Theorem~\ref{thm:our_gra_convergence_brief} and Theorem~\ref{thm:our_model_convergence_brief}.

Given that \tool optimizes both aggregation strategies, we denote the gradient aggregation strategy-based implementation as \tool-SGD and the model aggregation-based as \tool-Avg in this paper. For \tool-SGD, \modupd incorporates the momentum term as part of the update data and calculates the pseudo-gradient~\cite{nguyen2022federated, zang2024efficient} as $\Delta F_{i,e}^{\tau_i^t} = \sum_{r=1}^{e} (m_i^{\tau_i^t})^r \nabla F_{i,e-r}(w_{i,e-r-1}^{\tau_i^t})+\nabla F_{i,e}(w_{i,e-1}^{\tau_i^t})$. Then, \modupd aggregates the new global model using $w_g^t = w_g^{t-1} - \sum_{i \in \mathcal{S}} p_i\eta_i^{\tau_i^t}\sum_{e=1}^E\Delta F_{i,e}^{\tau_i^t}$. For \tool-Avg, \modupd performs the weighted aggregation by $w_g^t = \sum_{i \in \mathcal{S}} p_i w_{i}^{\tau_i^t}$.




\section{Convergence analysis of \tool}
In this section, we present the theoretical convergence guarantees for \tool, demonstrating its ability to effectively optimize both gradient and model aggregation strategies within the SAFL framework. Complete theoretical details are provided in Appendix~\ref{sec:convergence_detail} due to space constraints.

\subsection{Assumptions}
\label{sec:ass}
We begin by stating our key assumptions, which are standard in the federated learning literature. The following conditions hold:
\begin{enumerate}[label=(\arabic*)]
	\item For $\forall i$, the loss function $F_i$ is $L$-smooth~\cite{li2019convergence,NEURIPS2024_29021b06}.
	\item The expected squared norm of local stochastic gradients $\nabla F_i(w_i^t)$ is uniformly bounded by $G_c$~\cite{li2019convergence,nguyen2022federated}.
	\item The degree of heterogeneity in the training task is bounded by $\delta$~\cite{karimireddy2020scaffold,zhou2022towards}.
\end{enumerate}

\begin{remark}
Assumption~\ref{sec:ass}(2) is introduced \textit{solely} to simplify the interpretation of convergence bounds by providing a deterministic upper bound for the gradient variation term $\mathcal{W}$ (see Theorems~\ref{thm:our_gra_convergence} and~\ref{thm:our_model_convergence} in the appendix). Crucially, it is not used in proving our core convergence theorems (Theorems~\ref{thm:our_gra_convergence_brief} and~\ref{thm:our_model_convergence_brief}), which rely only on Assumptions~\ref{sec:ass}(1) and (3). The convergence guarantees and heterogeneity analysis in Section~\ref{sec:con-gua} remain fully valid without this condition.
\end{remark}

\subsection{Convergence guarantees}
\label{sec:con-gua}

Our convergence analysis employs the expected function value gap, $\mathbb{E}[F(w_g^t)] - F^*$, as the primary performance metric, following established SAFL literature~\cite{chai2021fedat,ma2021fedsa,wu2022kafl}. This choice is motivated by its direct quantification of solution quality relative to the ideal optimum $F^*$, its unique ability to reveal stability dynamics (\eg oscillations and transient regressions under semi-asynchronous updates that gradient norms often obscure), and its alignment with theoretical and practical SAFL conventions for assessing convergence behavior and aggregation-strategy efficacy, thereby providing a more comprehensive evaluation of \tool's performance under dual aggregation strategies. We present our main convergence results below.

\begin{theorem}[Gradient Aggregation Convergence]
	\label{thm:our_gra_convergence_brief}    
	Under Assumptions~\ref{sec:ass}, let $\beta = \max_{i,t}\{\eta_i^t, \eta_g\}$ with $ \sqrt{\frac{1}{RK - 1}} <\beta < \sqrt{\frac{3}{2RK-3}}$, where $R = \frac{E\theta - E\theta^2 - \theta^2 + \theta^{E+2}}{(1 - \theta)^2}$. In $R$'s formula, $E$ is the maximum local epoch, and $\theta = \max_{i,t} \{m_i^t\}$. Let $K$ be defined in Section~\ref{sec:mod3}, then \tool-SGD satisfies:
	\begin{equation}
		\begin{split}
			\mathbb{E}[F(w_g^t)] - F^* \leq  L\mathcal{V}^t\mathbb{E}[||w_g^{0} - w^* ||^2]+ \mathcal{U} + \mathcal{W},
			\label{equ:our-sgd-convergence}
		\end{split}
	\end{equation}
	where $\mathcal{V} = (3 - \frac{2\beta^2KR}{\beta^2 + 1}) \in (0, 1)$ controls the convergence rate, $\mathcal{U} = \mathcal{O}(\delta^2)$ captures the data heterogeneity, and $\mathcal{W} =  \mathcal{O}(G_c^2)$ bounds gradient variations.  Notice that $\mathcal{W} \leq [4LE^2 + 4LRQ(t) +\frac{(\beta^2L + L)(2RQ(t) + 3E^2)}{2\beta^2R - 2\beta^2 - 2}]\beta^2G_c^2$, where $Q(t)$ denotes the maximum number of clients that execute the momentum update at global round $t$.
\end{theorem}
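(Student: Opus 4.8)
The plan is to track the squared distance $\mathbb{E}\lVert w_g^t - w^*\rVert^2$ to a global minimizer $w^*$, establish a one–step geometric recursion with contraction factor $\mathcal{V}$, unroll it, and finally convert the distance bound into the claimed function–value bound through $L$–smoothness. The conversion step is clean: since $\nabla F(w^*)=0$, Assumption~\ref{sec:ass}(1) gives $F(w_g^t)-F^*\le \tfrac{L}{2}\lVert w_g^t-w^*\rVert^2$ with no convexity needed, which is exactly what produces the leading $L\,\mathcal{V}^t\,\mathbb{E}\lVert w_g^0-w^*\rVert^2$ term of Equation~\ref{equ:our-sgd-convergence} (the factor $2$ coming from a Young step that absorbs the $\tfrac12$). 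So the real work is the recursion itself.

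Writing the \tool-SGD update as $w_g^t=w_g^{t-1}-G^t$ with $G^t=\sum_{i\in\mathcal S}p_i\eta_i^{\tau_i^t}\sum_{e=1}^E\Delta F_{i,e}^{\tau_i^t}$, I would expand
\[
\lVert w_g^t-w^*\rVert^2=\lVert w_g^{t-1}-w^*\rVert^2-2\langle w_g^{t-1}-w^*,\,G^t\rangle+\lVert G^t\rVert^2 .
\]
The first move is to unfold the momentum–augmented pseudo–gradient: since $\sum_{e=1}^E\Delta F_{i,e}^{\tau_i^t}=\sum_{e=1}^E\nabla F_{i,e}(\cdot)+\sum_{e=1}^E\sum_{r=1}^e (m_i^{\tau_i^t})^r\nabla F_{i,e-r}(\cdot)$, the total momentum weight is the double geometric sum $\sum_{e=1}^E\sum_{r=1}^e\theta^r$, whose closed form is exactly the quantity $R$ in the statement (a computation I would verify first). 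This lets me factor the contribution of the $E$ ordinary epochs and the $R$ momentum weights separately. Then the cross term is where contraction comes from: bounding $\langle w_g^{t-1}-w^*,G^t\rangle$ by relating the aggregated local gradients, evaluated at the possibly stale models $w_g^{\tau_i^t}$, to the true global gradient via $L$–smoothness and the weight normalization $\sum_{i\in\mathcal S}p_i=1$, the aligned part yields a negative term proportional to $\tfrac{\beta^2KR}{\beta^2+1}\lVert w_g^{t-1}-w^*\rVert^2$, while the staleness– and heterogeneity–induced mismatch is pushed into an $\mathcal O(\delta^2)$ residual through Assumption~\ref{sec:ass}(3). A short chain of Young/Cauchy–Schwarz splits inflates the retained $\lVert w_g^{t-1}-w^*\rVert^2$ coefficient to $3$, giving precisely $\mathcal V=3-\tfrac{2\beta^2KR}{\beta^2+1}$.

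For $\lVert G^t\rVert^2$ I would apply Jensen/Cauchy–Schwarz across the $K$ participating updates, the $E$ local epochs, and the $R$ momentum weights, then invoke the uniform gradient bound $G_c$ of Assumption~\ref{sec:ass}(2); tracking which clients actually run momentum (the count $Q(t)$) produces the explicit coefficient $[4LE^2+4LRQ(t)+\tfrac{(\beta^2L+L)(2RQ(t)+3E^2)}{2\beta^2R-2\beta^2-2}]\beta^2G_c^2$, i.e. $\mathcal W=\mathcal O(G_c^2)$. Collecting everything yields $\mathbb{E}\lVert w_g^t-w^*\rVert^2\le \mathcal V\,\mathbb{E}\lVert w_g^{t-1}-w^*\rVert^2+\mathcal O(\delta^2)+\mathcal O(G_c^2)$. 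The prescribed range $\sqrt{\tfrac{1}{RK-1}}<\beta<\sqrt{\tfrac{3}{2RK-3}}$ is exactly equivalent to $\mathcal V\in(0,1)$: the lower bound forces $\mathcal V<1$ and the upper bound forces $\mathcal V>0$. Unrolling the geometric recursion ($\sum_{s\ge0}\mathcal V^s=\tfrac1{1-\mathcal V}$) and substituting into the smoothness inequality gives the stated bound with the advertised $\mathcal U$ and $\mathcal W$.

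The main obstacle is extracting the contraction in the cross term \emph{without} a convexity or PL assumption: I must argue that, after averaging over the $K$ clients and accounting for the staleness indices $\tau_i^t$, the aggregated direction $G^t$ stays sufficiently aligned with $w_g^{t-1}-w^*$ for $L$–smoothness alone to deliver a term scaling like $\lVert w_g^{t-1}-w^*\rVert^2$. The coupled bookkeeping — carrying the momentum geometric series $R$, the epoch count $E$, and the client count $K$ consistently through both the cross term and the $\lVert G^t\rVert^2$ term so that the constants collapse to exactly $\mathcal V$ and the displayed $\mathcal W$ coefficient — is the most delicate and error–prone part, in particular keeping the denominator $2\beta^2R-2\beta^2-2$ positive and reconciling it with the admissible $\beta$ range across the two bounds.
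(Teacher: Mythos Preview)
Your proposal has a genuine gap at exactly the point you flag as ``the main obstacle'': extracting the contraction factor $\mathcal{V}$ from the cross term without convexity or a PL condition. You write that $L$-smoothness alone will make the aligned part of $\langle w_g^{t-1}-w^*,\,G^t\rangle$ produce a negative term proportional to $\lVert w_g^{t-1}-w^*\rVert^2$, but $L$-smoothness only yields inequalities of the form $\langle\nabla F(x),x-y\rangle\ge F(x)-F(y)-\tfrac{L}{2}\lVert x-y\rVert^2$, and there is no way to convert this into a quadratic-in-distance lower bound without precisely the convexity/PL structure you are trying to avoid. The paper does \emph{not} obtain the $-\tfrac{2\beta^2 KR}{\beta^2+1}\lVert\cdot\rVert^2$ term from smoothness; it uses a reversed Cauchy--Schwarz (P\'olya--Szeg\H{o}/Kantorovich-type) inequality, applied with parameters $\gamma=\beta$ and $\Gamma=1/\beta$, to lower-bound the inner product between $w-w^*$ and the accumulated momentum gradients directly. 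That is the missing ingredient.

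There is also a structural difference: the paper does not track $\lVert w_g^t-w^*\rVert^2$ directly. It introduces an auxiliary ``ideal'' global sequence $\widetilde{w}_g^t$ obtained by running gradient descent with the ideal global gradient $\nabla F_g$ starting from $w_g^{t-1}$, proves a geometric recursion for $\mathbb{E}\lVert\widetilde{w}_g^t-w^*\rVert^2$ (Lemma~\ref{lemma:our_gra_global_ideal_diff}), separately bounds the one-step drift $\lVert w_g^t-w_g^{t-1}\rVert^2$ (Lemma~\ref{lemma:our_gra_global_real_diff}), and then splits $\lVert w_g^t-w^*\rVert^2\le 2\lVert w_g^t-w_g^{t-1}\rVert^2+2\lVert\eta_g\sum_e\nabla F_g\rVert^2+\lVert\widetilde{w}_g^t-w^*\rVert^2$ before applying smoothness. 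The ideal sequence is what allows the reversed Cauchy--Schwarz step to be set up cleanly, and it is also where the heterogeneity bound $\delta^2$ from Assumption~\ref{sec:ass}(3) enters (via Lemma~\ref{lemma:ideal_global_gradient}, which bounds $\lVert\nabla F_g\rVert^2$ by $\delta^2$). Your direct expansion of $\lVert w_g^t-w^*\rVert^2$ skips this construction and leaves you without a route to the contraction.
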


\begin{theorem}[Model Aggregation Convergence]
	\label{thm:our_model_convergence_brief} 
	Under Assumptions~\ref{sec:ass}, let $0 \leq q < p_i < p \leq 1$, for $\sqrt{\frac{1}{KR+E^2-1}} < \beta < \sqrt{\frac{3}{2RK+2E^2 -3}}$. Let $K$ be defined in Section~\ref{sec:mod3}, then \tool-Avg satisfies:
	\begin{equation}
		\mathbb{E}[F(w_g^t)] - F^* \leq (3LpK^2 + L)\mathcal{V}^t\mathbb{E}[||w_g^0 - w^*||^2]  + \mathcal{U} + \mathcal{W},
		\label{equ:our-avg-convergence}
	\end{equation}
	with $\mathcal{V}, \mathcal{U}, \mathcal{W}$ having similar interpretations as in Theorem~\ref{thm:our_gra_convergence_brief}.	   
\end{theorem}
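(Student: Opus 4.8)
The plan is to prove the bound by reducing the function-value gap to an iterate-distance recursion and then establishing a geometric contraction. First I would invoke $L$-smoothness (Assumption~\ref{sec:ass}(1)) together with $\nabla F(w^*)=0$ to obtain $F(w_g^t) - F^* \le \frac{L}{2}\|w_g^t - w^*\|^2$, so that it suffices to control $\mathbb{E}\|w_g^t - w^*\|^2$. The whole argument then parallels the gradient-aggregation proof of Theorem~\ref{thm:our_gra_convergence_brief}, with the crucial structural difference being that model aggregation \emph{resets} the iterate to a convex combination of stale local models rather than applying a descent step to a single retained global model.

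Next I would decompose the aggregated iterate. Because \modupd normalizes the weights so that $\sum_{i\in\mathcal{S}} p_i = 1$, I can write $w_g^t - w^* = \sum_{i\in\mathcal{S}} p_i\,(w_i^{\tau_i^t} - w^*)$ and substitute the local trajectory $w_i^{\tau_i^t} = w_g^{\tau_i^t} - \eta_i^{\tau_i^t}\sum_{e=1}^{E}\Delta F_{i,e}^{\tau_i^t}$, where $\Delta F_{i,e}^{\tau_i^t}$ carries the momentum term. Applying $\|\sum_{i\in\mathcal{S}}a_i\|^2 \le K\sum_{i\in\mathcal{S}}\|a_i\|^2$ over the $K$ aggregated clients and bounding $p_i \le p$ is what produces the enlarged leading constant $3LpK^2 + L$; this is the price model aggregation pays relative to the plain $L$ coefficient of Theorem~\ref{thm:our_gra_convergence_brief}.

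I would then bound the two resulting groups of terms. For the accumulated local update $\sum_{e=1}^{E}\Delta F_{i,e}^{\tau_i^t}$, the momentum coefficients across the $E$ epochs sum exactly to $R = \frac{E\theta - E\theta^2 - \theta^2 + \theta^{E+2}}{(1-\theta)^2}$, so the bounded-gradient Assumption~\ref{sec:ass}(2) gives $\mathbb{E}\|\sum_e\Delta F_{i,e}^{\tau_i^t}\|^2 = \mathcal{O}((R+E)^2 G_c^2)$, which contributes the $\mathcal{W} = \mathcal{O}(G_c^2)$ term and, after combining with the $L$-smoothness expansion of the cross terms, yields the $KR$ and $E^2$ dependence visible in the admissible window for $\beta$. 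The heterogeneity Assumption~\ref{sec:ass}(3) enters here through the gap between each local gradient direction and the global one, producing $\mathcal{U} = \mathcal{O}(\delta^2)$. Collecting the terms quadratic in $\|w_g^{\tau_i^t}-w^*\|$ gives the contraction factor $\mathcal{V} = 3 - \frac{2\beta^2(KR+E^2)}{\beta^2+1}$; the prescribed range $\sqrt{\frac{1}{KR+E^2-1}} < \beta < \sqrt{\frac{3}{2RK+2E^2-3}}$ is exactly the pair of inequalities $\mathcal{V}<1$ and $\mathcal{V}>0$, so $\mathcal{V}\in(0,1)$. Unrolling the one-step recursion from round $t$ down to round $0$ then produces the $\mathcal{V}^t\,\mathbb{E}\|w_g^0-w^*\|^2$ decay together with the geometric accumulation of the $\mathcal{O}(\delta^2)$ and $\mathcal{O}(G_c^2)$ remainders, giving $\mathcal{U}+\mathcal{W}$.

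The hard part will be closing the recursion across the staleness-induced reset. Unlike gradient aggregation, where $\sum_{i}p_i w_g^{\tau_i^t}$ collapses onto the single retained global model and telescopes cleanly, here each summand is launched from a \emph{different} stale global model $w_g^{\tau_i^t}$, so the aggregated iterate is a convex combination of distinct trajectories with no cancellation available. Bounding the dispersion $\sum_{i\in\mathcal{S}} p_i\|w_g^{\tau_i^t} - w_g^{t-1}\|^2$ — using the staleness-correction factor $\frac{\exp(\phi-\mathcal{F})}{2^{\phi-\mathcal{F}}}$ in $p_i$ to attenuate large delays $t-\tau_i^t$, together with the bounded gradients — is the technical crux, and it is precisely this dispersion that forces the extra $E^2$ term in $\mathcal{V}$ and the $K^2$ factor in the leading constant. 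Verifying that, after all these losses, the coefficient still satisfies $\mathcal{V}<1$ on a nonempty $\beta$-window is what makes the model-aggregation guarantee nontrivial.
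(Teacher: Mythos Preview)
Your high-level strategy—reduce to $\frac{L}{2}\|w_g^t-w^*\|^2$ via smoothness and $\nabla F(w^*)=0$, then establish a geometric recursion—matches the paper. But the route diverges at the two technical pressure points, and your plan does not supply the mechanisms the paper actually relies on.

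First, the paper does \emph{not} work directly with $\|w_g^t-w^*\|^2$ as you propose. It introduces an auxiliary \emph{ideal global model} sequence $\widetilde{w}_g^t$ (obtained by applying the ``ideal'' global gradient $\nabla F_g$ to $w_g^{t-1}$), then splits $\|w_g^t-w^*\|^2 \le 2\|w_g^t-\widetilde{w}_g^t\|^2 + 2\|\widetilde{w}_g^t-w^*\|^2$ and handles the two pieces by separate lemmas. This is exactly how the staleness difficulty you flag is absorbed: because $w_i^{\tau_i^t}$ can be rewritten as $\widetilde{w}_g^{\tau_i^t}$ plus gradient correction terms, the stale anchors are routed through the \emph{ideal} sequence, whose recursion is clean. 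Your proposal to instead bound the dispersion $\sum_i p_i\|w_g^{\tau_i^t}-w_g^{t-1}\|^2$ using the specific staleness-attenuation weight $\frac{\exp(\phi-\mathcal{F})}{2^{\phi-\mathcal{F}}}$ is not how the paper proceeds; the proof uses only the crude bound $p_i\le p$ and never invokes the particular form of the feedback weights.

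Second, and more critically, you do not explain where the contraction comes from. ``Collecting the terms quadratic in $\|w_g^{\tau_i^t}-w^*\|$'' does not by itself produce a coefficient strictly below $1$: without convexity there is no automatic negative term. The paper obtains the $-\frac{2\beta^2(R+E^2)}{\beta^2+1}$ contribution by applying a \emph{reversed} Cauchy--Schwarz (P\'olya--Szeg\H{o}) inequality, with parameters $\gamma=\beta$, $\Gamma=1/\beta$, to the cross term $-2\langle \widetilde{w}_g^{\tau_i^{t-1}}-w^*,\ \sum_e\eta_i[\cdots]\rangle$. That inequality is the engine driving $\mathcal{V}\in(0,1)$, and it is entirely absent from your sketch. (Relatedly, the paper's $\mathcal{V}$ is $3-\frac{2\beta^2(R+E^2)}{\beta^2+1}$, not $3-\frac{2\beta^2(KR+E^2)}{\beta^2+1}$ as you wrote; the $K$ in the $\beta$-window enters at a different step.)
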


\begin{remark}
    \label{remark_total_theorem}
    The exponentially decaying $\mathcal{V}^t$ term accelerates convergence within the sublinear regime characteristic of non-convex FL, yielding a rate strictly faster than the standard $O(1/t)$ while maintaining the overall $O(1/t + \mathcal{U} + \mathcal{W})$ bound. Unlike the linear convergence under strong convexity/Polyak-Lojasiewicz conditions, this confirms that $\mathcal{V}^t$ enhances practical performance without altering the fundamental sublinear convergence landscape.
\end{remark}
\begin{remark}
    The $\mathcal{U}$ term highlights the impact of data heterogeneity ($\delta^2$), while $\mathcal{W}$ captures gradient norm effects ($G_c^2$). The non-vanishing terms $\mathcal{U} + \mathcal{W}$ reflect fundamental limitations inherent to semi-asynchronous FL systems~\cite{ma2021fedsa,wu2022kafl}, capturing unavoidable convergence errors from non-simultaneous aggregation and irreducible data heterogeneity.
\end{remark}

\begin{remark}
	\label{remark_theorem_sgd}
	Theorem~\ref{thm:our_gra_convergence_brief} demonstrates that \tool-SGD limits the gradient variation amplification in $\mathcal{W}$ through controlled momentum updates since the momentum term is only applied to a subset of clients (\ie FWBC, SWBC, and SSBC with \textit{Situation 1}), thereby improving convergence stability. 
\end{remark}

\begin{remark}
	\label{remark_theorem_avg}
	Theorem~\ref{thm:our_model_convergence_brief} shows that \tool-Avg achieves comparable convergence with \tool-SGD, with the $\mathcal{V}^t$ term guaranteeing asymptotic convergence, alleviating the slow convergence speed and suboptimal convergence utility inherent to model aggregation strategies in SAFL.
\end{remark}

\section{Evaluation}
\label{sec:eva}

\subsection{Experimental setup}
\label{sec_exp_setup}

We evaluate \tool on three task types in SAFL: Computer Vision (CV) using ResNet-18~\cite{he2016deep} on CIFAR-10~\cite{krizhevsky2009learning}, Natural Language Processing (NLP) with LSTM~\cite{hochreiter1997long} on Shakespeare~\cite{mcmahan2017communication}, and Real-World Data (RWD) using FCN on UCI Adult~\cite{shokri2017membership}. Resource constraints limited our experiments to moderate-scale models, though our approach remains model-agnostic as FL is an infrastructure independent of models or datasets. We simulate a heterogeneous federated system with clients having uniformly distributed computing resources (default: 100 clients, resource ratio 1:50, \ie the fastest client exhibiting a training speed 50 times that of the slowest). The default similarity function in \modest is cosine similarity.

All experiments were conducted on a Linux system (Ubuntu 22.04 LTS) using an Intel Xeon Platinum 8468 Processor and an NVIDIA H100 80GB HBM3 GPU, with system memory capped at 20 GB per run. The software stack included Python 3.8.0, PyTorch 2.1.0, and Torchvision 0.16.0, all within a Conda environment. Due to space constraints, comprehensive details on datasets, model architectures, client resource distributions, hyperparameters, baseline algorithms, and additional results are provided in Appendix~\ref{sec:exp_detail}.


\begin{table}[t]
	\centering
	\scriptsize
	\setlength{\tabcolsep}{1pt}
	\caption{Accuracy and convergence speed of \tool and the baselines.}
	\begin{threeparttable}
		\begin{tabular}{c|c|c| c c c c|c| c c c cc c c}
			\hline
			\multirow{3}{*}{Metrics}& \multirow{3}{*}{Tasks} & \multicolumn{13}{c}{Algorithms} \\\cline{3-15} 
			& & FedAvg &SAFA & FedAT & \makecell{M-step} & \makecell{\tool\\(Avg)}&FedSGD&FedBuff & WKAFL & FedAC &DeFedAvg & FADAS & C$A^2$FL & \makecell{\tool\\(SGD)}  \\ \hline
			\multirow{7}{*}{\makecell{Accuracy (\%)}} 
			& $x=0.1$  & 56.05 & 56.15 & 28.15 & 62.17 & \textbf{63.91} & 65.71 & 64.43 & 64.66 & 56.52 & 52.33 & 65.34 & 42.29 & \textbf{68.88} \\
			& $x=0.5$  & 73.71 & 58.31 & 45.65 & \textbf{80.49} & 80.26  & 83.87 & 80.73 & 85.14 & 82.65 & 83.51 & 82.2 & 63.79& \textbf{86.11}   \\
			& $x=1$  & 77.86 & 62.16 & 47.58 & 82.46 & \textbf{82.74}  & 85.42 & 81.43 & 86.02 & 85.94 & 86.17 & 84.21 & 70.16 & \textbf{86.79}  \\
			& $R=200$  & 47.04 & 43.65 & 37.13 & 49.38 & \textbf{50.43}  & 48.04 & 48.37 & 50.49 & 50.43 & 40.19 & 46.83 & 25.36 & \textbf{52.22}   \\
			& $R=600$  & 45.52 & 40.90 & 36.35 & 48.12 & \textbf{50.08} & 49.64 & 47.42 & 50.09 & 51.62 & 48.23 & 48.95 & 28.86 & \textbf{52.49}   \\
			& Gender  & 77.10 & 77.05 & 77.01 & 78.20 & \textbf{78.94} & 77.15 & 76.37 & \textbf{78.96} & 77.69 & 77.82 & 78.04 & 76.09 & 78.74   \\
			& Ethnicity  & 77.25 & 77.07 & 77.26 & 78.01 & \textbf{78.85} & 78.33 & 78.71 & 76.97 & 77.71 & 78.11 & 78.54 & 66.68 & \textbf{79.24}   \\\cline{1-15}
			\multirow{7}{*}{\makecell{Conv. speed \\ (\# epochs)}} 
			& $x=0.1$ & 304 & 362 & 317 & 329 & \textbf{276}  & 281 & 334 & 277 & 243 & 307 & 255 & 272 & \textbf{239}   \\
			& $x=0.5$  & 295 & 344 & 304 & 276 & \textbf{234}  & 264 & 293 & 255 & 232 & 257 & 220 & 256 & \textbf{213}   \\
			& $x=1$ & 154 & 272 & 244 & 163 & \textbf{119}  & 144 & 259 & 257 & 136 & 155 & 143 & 200 & \textbf{127}   \\
			& $R=200$  & 288 & 342 & 357 & 264 & \textbf{231}  & 234 & 278 & 221 & 243 & 256 & 220 & 259 & \textbf{188}   \\
			& $R=600$  & 293 & 336 & 375 & 275 & \textbf{249}  & 251 & 303 & 248 & 277 & 238 & 254 & 272 & \textbf{216}   \\
			& Gender  & 43 & 57 & 63 & 46 & \textbf{35} & 29 & 66 & 51 & \textbf{17} & 59 & 25 & 59 & 18   \\
			& Ethnicity & 55 & 67 & 61 & 44 & \textbf{33}  & 36 & 76 & 54 & 27 & 20 & 24 & 325 &\textbf{22}   \\
			\hline
		\end{tabular}
		\begin{tablenotes}
			\scriptsize 
			\item[*] In the ``Tasks'' column, $x$ represents the parameter of the Dirichlet distribution within CV tasks; $R$ denotes the number of roles within NLP tasks; Gender and Ethnicity are the data types within RWD tasks. 
			\item[*] The convergence accuracy is measured as the average global accuracy over the last 20 rounds. The target accuracy for convergence speed is set to 95\% of convergence accuracy in CV and NLP tasks and 98\% of the convergence accuracy in RWD tasks.
		\end{tablenotes}
	\end{threeparttable}
	\label{tab:performance}
	\adjustspace{-3ex}
\end{table}

\begin{table}[t]
	\centering
	\scriptsize
	\setlength{\tabcolsep}{0.8pt}
	\caption{Runtime of \tool and the baselines.}
	\begin{threeparttable}
		\begin{tabular}{c|c c c c c c|c c c c c c c c c c}
			\hline
			  \multirow{3}{*}{Tasks} & \multicolumn{15}{c}{Algorithms} \\\cline{2-16} 
			& \cellcolor[gray]{0.9}\makecell{FedAvg\\(SFL)}&\makecell{FedAvg} &SAFA & FedAT & \makecell{M-step} & \makecell{\tool\\(Avg)}& \cellcolor[gray]{0.9}\makecell{FedSGD\\(SFL)} &\makecell{FedSGD} &FedBuff & WKAFL & FedAC &DeFedAvg & FADAS & C$A^2$FL & \makecell{\tool\\(SGD)}  \\ \hline
			$x=0.1$ & \cellcolor[gray]{0.9}78,048 & 24,204 & 204,588 & 66,851 & 49,971 & 32,827 &\cellcolor[gray]{0.9} 76,592 & 24,184 & 25,654 & 34,862 & 30,329 &30,776&32,380&108,097& 32,784 \\
			$x=0.5$ & \cellcolor[gray]{0.9}77,956 & 24,214 & 144,371 & 66,746 & 49,953 & 33,254& \cellcolor[gray]{0.9}76,822 & 23,921 & 25,720 & 35,286 & 28,721 &31,754&32,966&109,635& 33,656   \\
			$x=1$ & \cellcolor[gray]{0.9}79,095 & 24,208 & 142,594 & 66,851 & 46,859 & 33,477 & \cellcolor[gray]{0.9}79,816 & 24,108 & 25,513 & 32,707 & 29,152 &30,280&31,471&108,756& 33,400   \\
			$R=200$ & \cellcolor[gray]{0.9}22,417 & 4,925 & 27,642 & 21,089 & 116,906 & 6,023& \cellcolor[gray]{0.9}22,118 & 4,669 & 5,096 & 32,849 & 8,340 &4,989&5,118&36,441& 5,248   \\
			$R=600$ & \cellcolor[gray]{0.9}53,784 & 7,358 & 39,475 & 33,300 & 171,965 & 9,528 & \cellcolor[gray]{0.9}53,590 & 7,211 & 8,320 & 59,277 & 12,982 &8,720&9,002&36,548& 9,135   \\
			Gender & \cellcolor[gray]{0.9}30,149 & 5,508 & 10,413 & 6,338 & 12,469 & 5,701 & \cellcolor[gray]{0.9}30,088 & 5,385 & 6,164 & 13,420 & 5,423 &4,841&5,476&19,650& 5,523   \\
			Ethnicity & \cellcolor[gray]{0.9}33,275 & 5,513 & 9,127 & 6,437 & 12,389 & 5,665 & \cellcolor[gray]{0.9}32,787 & 5,148 & 6,037 & 14,215 & 5,711 &4,827&5,299&19,370& 5,465   \\\cline{1-16}
		\end{tabular}
			\begin{tablenotes}
				\scriptsize 
				\item[*] Shadowed columns represent evaluations under synchronous FL (SFL).
		\end{tablenotes}
	\end{threeparttable}
	\label{tab:run_time}
	\adjustspace{-3ex}
\end{table}

\subsection{Performance evaluation}
\label{sec:exp_performance_eva}
As the first solution optimizing both gradient and model aggregation strategies in SAFL, we compare \tool against four model aggregation (FedAvg~\cite{mcmahan2017communication}, SAFA~\cite{wu2020safa}, FedAT~\cite{chai2021fedat}, M-step~\cite{wu2022kafl}) and four gradient aggregation (FedSGD~\cite{mcmahan2017communication}, FedBuff~\cite{nguyen2022federated}, WKAFL~\cite{zhou2022towards}, FedAC~\cite{zang2024efficient}, DeFedAvg~\cite{wang2024achieving}, FADAS~\cite{wang2024fadas}, C$A^2$FL~\cite{wang2023tackling}) baselines. To our best, there is no related work that supports both strategies. We employ the same metrics as those in~\cite{li2024experimental} to evaluate the performance of \tool and baselines: 1) Accuracy and loss performance, which reflect the prediction capabilities of the trained global model on the test dataset. 2) Convergence speed, determined by the number of epochs (denoted as $T_f$) required to achieve the target accuracy for the first time~\cite{li2024experimental}. 3) Runtime, recorded as the duration from the initiation to the completion of the $T$-th ($T$ being the maximum global training epoch) rounds of global aggregation.

\begin{figure}[htp]
	\centering
	\hspace{4mm}
	\subfloat{\includegraphics[width=0.45\linewidth]{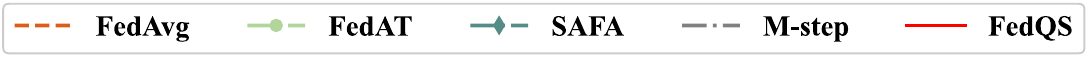}}
	\hspace{4mm}
	\subfloat{\includegraphics[width=0.45\linewidth]{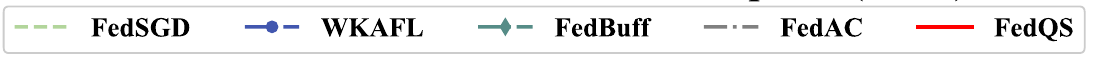}}
	\setcounter{subfigure}{0}
	\subfloat[CV Task]{\includegraphics[width=0.15\linewidth]{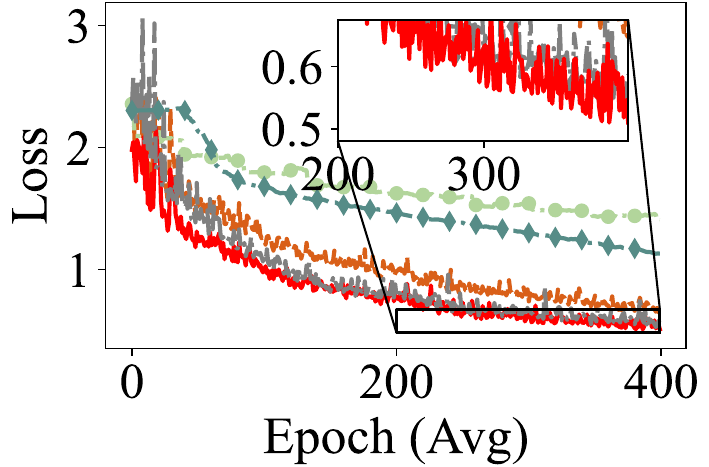}}
	\hspace{1mm}
	\subfloat[NLP Task]{\includegraphics[width=0.15\linewidth]{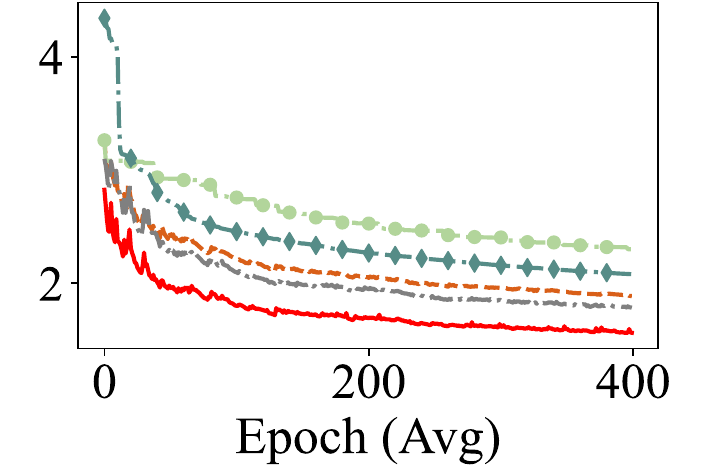}}
	\hspace{1mm}
	\subfloat[RWD Task]{\includegraphics[width=0.15\linewidth]{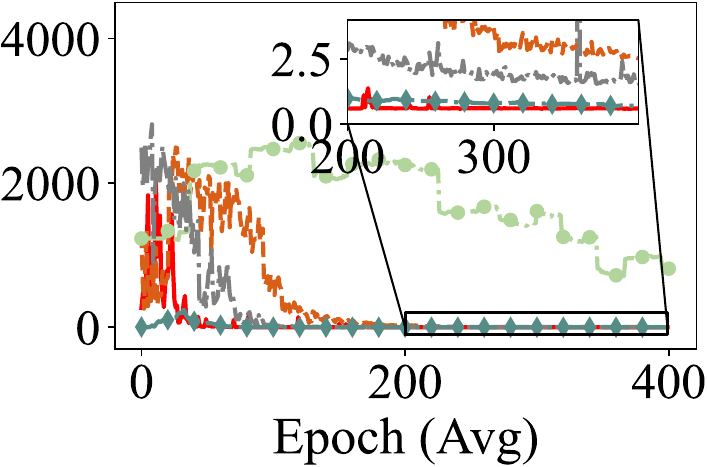}}
	\hspace{1mm}
	\subfloat[CV Task]{\includegraphics[width=0.15\linewidth]{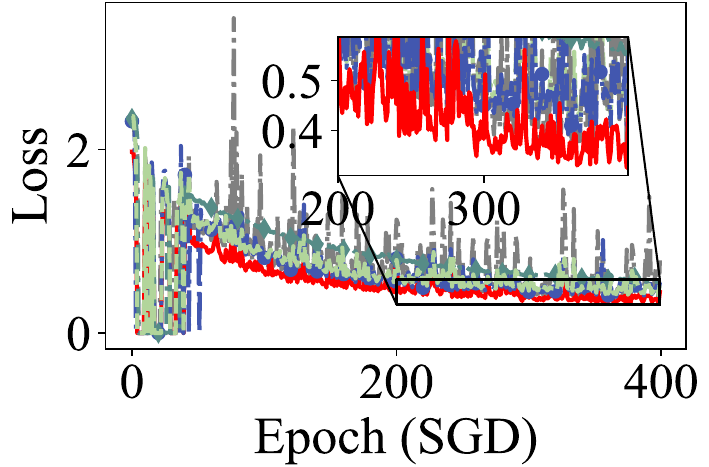}}
	\hspace{1mm}
	\subfloat[NLP Task]{\includegraphics[width=0.15\linewidth]{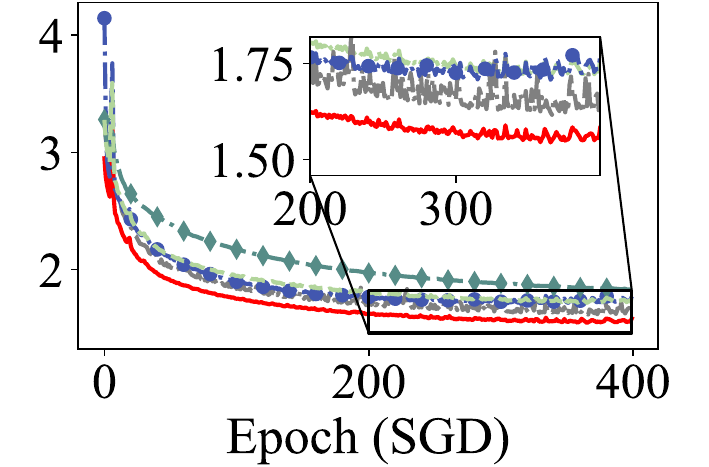}}
	\hspace{1mm}
	\subfloat[RWD Task]{\includegraphics[width=0.15\linewidth]{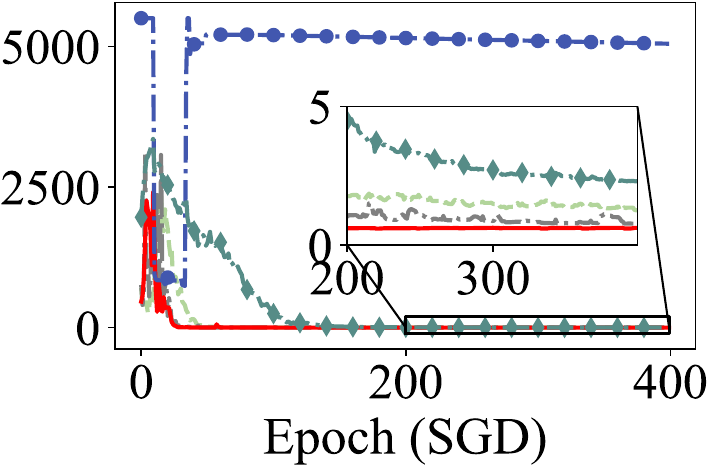}
    \label{fig:rwd_loss}}
	\adjustspace{-4px}
	\caption{Loss of \tool and the baselines under a representative CV task ($x = 0.5$), an NLP task ($R=600$), and an RWD task (gender). The left three subfigures are based on model aggregation, while the right three are based on gradient aggregation.}
	\label{fig:acc-loss}
	\adjustspace{-2ex}
\end{figure}

Table~\ref{tab:performance} presents the experimental results. \tool attains the overall highest accuracy and fastest convergence speed across almost all tasks. Specifically, for model aggregation, \tool's average accuracy is 68.88\%, 1.71\% higher than that of the best baseline (\ie M-step), and for gradient aggregation it is 72.06\%, 2.51\% higher than the best baseline (\ie WKAFL).  SAFA, FedAT, and FedBuff do not perform as well as their corresponding foundational algorithms, FedAvg and FedSGD. This is because these three algorithms sacrifice the accuracy of the global model for a more stable training process (see Table~\ref{tab:performance-convergence-stability} and Figure~\ref{fig:ots} in the appendix for the analysis of convergence stability). 
Figure~\ref{fig:acc-loss} depicts the loss function curves\footnote{WKAFL's overfitting curve in Figure~\ref{fig:rwd_loss} is due to the large learning rate and WKAFL's adaptive learning rate adjustment module.\eat{ See more analyses in Appendix~\ref{sec:appendix_results}.}} of representative tasks for both model and gradient aggregations. This figure illustrates that \tool achieves the minimum loss as training progresses, implying \tool's capability to converge the model to its optimum.

Table~\ref{tab:run_time} shows the runtime performance results. \tool achieves comparable efficiency to the top-performing baselines, demonstrating its feasibility for real-world deployment. For reference, we include the runtime measurements of FedAvg and FedSGD under synchronous settings. When compared to these synchronous baselines, \tool-Avg and \tool-SGD achieve average runtime reductions of 70.34\% and 70.91\%, respectively, highlighting SAFL's superior resource efficiency over synchronous configurations. Meanwhile, as evidenced by Figures~\ref{fig:time_latency_avg} and~\ref{fig:time_latency_sgd} (in the appendix), \tool maintains substantially lower time latency between global aggregation rounds than other SAFL optimization algorithms, further validating its operational efficiency.

\subsection{Effectiveness of \tool under different system settings}
\label{sec: client-number}

In this section, we evaluate the impact of different numbers of clients and resource distributions on the performance of \tool. We vary the number of clients (50 or 200) and the resource distribution ratio (1:20 or 1:100), conducting experiments with $x=0.1,0.5,1$ in CV tasks under both aggregation strategies and reporting average values. Table~\ref{tab:multi} shows representative results of \tool compared to two foundational algorithms in SAFL (with complete results in Table~\ref{tab:multi_full} in the appendix). \tool consistently outperforms both FedAvg and FedSGD across all three metrics, aligning with the results in Table~\ref{tab:performance}.


\begin{table}[tp]
	\begin{minipage}{.5\linewidth}
		\centering
		\scriptsize
		\setlength{\tabcolsep}{1.5pt}
		\caption{Robustness of \tool.}
		\begin{threeparttable}
			\begin{tabular}{c|c|c c c}
				\hline
				\multirow{2}{*}{Scenario} & \multirow{2}{*}{Method} &  \multicolumn{3}{c}{Metrics}\\\cline{3-5}
				& &Accuracy (\%)&Conv. speed&\# Oscillations\\\hline
				\multirow{4}{*}{\makecell{N=50\\(1:20)}} & FedAvg &70.1&224&0.0\\
				&\tool-Avg&\textbf{79.2}&\textbf{179}&3.0\\\cline{2-5}
				&FedSGD&77.4&193&37.0\\
				&\tool-SGD&\textbf{80.7}&\textbf{152}&\textbf{26.3}\\\hline
				\multirow{4}{*}{\makecell{N=200\\(1:100)}} & FedAvg &49.4&277&0.0\\
				&\tool-Avg&\textbf{64.7}&\textbf{256}&0.3\\\cline{2-5}
				&FedSGD&74.4&248&7.3\\
				&\tool-SGD&\textbf{80.1}&\textbf{203}&\textbf{3.0}\\\hline
			\end{tabular}
			\begin{tablenotes}
				\scriptsize 
				\item[*] $N=50$ means the task has 50 clients, and 1:20 means the fastest client exhibits a training speed 20 times that of the slowest one. The threshold used to calculate the number of oscillations is set to 15. The target accuracy for convergence speed is set to 95\% of convergence accuracy
				\item[*] Each result is an average value of three experiments corresponding to $x=0.1,0.5,1$ in CV tasks.
			\end{tablenotes}
		\end{threeparttable}
		\label{tab:multi}
	\end{minipage}
	\begin{minipage}{.45\linewidth}
            \vspace{-1.4ex}
		\centering
		\scriptsize
		\setlength{\tabcolsep}{1.2pt}
		\caption{Ablation study results in CV tasks.}
		\begin{threeparttable}
			\begin{tabular}{c|c|c c|c c|c c}
				\hline
				\multirow{4}{*}{Module}&\multirow{4}{*}{Method}&\multicolumn{6}{c}{Metrics}\\\cline{3-8}
				& & \multicolumn{2}{c|}{Accuracy (\%)}&\multicolumn{2}{c|}{\makecell{Conv. speed\\(\# epochs)}}&\multicolumn{2}{c}{\# Oscillations}\\\cline{3-8}
				& & Avg & SGD & Avg & SGD & Avg & SGD \\\hline
				\multirow{3}{*}{\modest}& Cosine & 74.14 & \textbf{80.59} & 251 & 230 & \textbf{4.0} & \textbf{7.6} \\
				& Euclidean & 75.69 & 79.55 & 244 & 232 & 5.6 & 11.3 \\
				& Manhattan & \textbf{76.56} & 80.28 & \textbf{228} & \textbf{221} & 4.6 & 9.6 \\\hline
				\multirow{2}{*}{\modada} & w/o momentum & 73.21 & 78.88 & 269 & 242 & 4.3 & 9.6 \\
				& with momentum & \textbf{74.14} & \textbf{80.59} & \textbf{251} & \textbf{230} & \textbf{4.0} & \textbf{7.6} \\\hline
				\multirow{2}{*}{\modupd} & w/o feedback & 68.35 & 78.83 & 284 & 268 & \textbf{0.0} & 5.6 \\
				& with feedback & \textbf{74.14} & \textbf{80.59} & \textbf{251} & \textbf{230} & 4.0 & \textbf{7.6} \\
				\hline
			\end{tabular}
			\begin{tablenotes}
				\scriptsize 
				\item[*] Avg and SGD represent \tool-Avg and \tool-SGD, respectively.
				\item[*] Each result is an average value corresponding to $x=0.1,0.5,1$. The threshold used to calculate the number of oscillations is set to 15. The target accuracy for convergence speed is set to 95\% of convergence accuracy
			\end{tablenotes}
		\end{threeparttable}
		\label{tab:ablation}
	\end{minipage} 
\end{table}

To further validate the robustness of \tool in dynamic environments, we conduct additional CV experiments under three scenarios where client resources vary during training: (1) Dynamic Resource Scale (Scenario 1): the speed ratio shifts from 1:50 to 1:100 at round 200; (2) Unstable Resource per Client (Scenario 2): each client's resource fluctuates within $[-10,+10]$ unit times per update, bounded between 1 and 50 units; (3) Client Dropout (Scenario 3): 50\% of clients randomly churn at round 100. As shown in Table~\ref{tab:dynamic_scenarios}, \tool maintains stable convergence across all dynamic scenarios.

These results demonstrate that \tool is effective under various system settings and maintains robustness in dynamic environments. We further discuss the scalability of \tool in Appendix~\ref{sec:discussion}.


\begin{table}[htp]
\centering
\caption{Performance comparison of different algorithms under dynamic scenarios}
\scriptsize
\setlength{\tabcolsep}{1.5pt}
\label{tab:dynamic_scenarios}
\begin{tabular}{c|c|ccc|ccc|ccc}
\hline
\multirow{3}{*}{\makecell{Dynamic \\ Scenario}} & \multirow{3}{*}{Algorithm} & \multicolumn{3}{c|}{x=0.1} & \multicolumn{3}{c|}{x=0.5} & \multicolumn{3}{c}{x=1} \\
\cline{3-11}
 & & Accuracy (\%) & \makecell{Conv. \\ Speed} & Runtime (s) & Accuracy (\%) & \makecell{Conv. \\ Speed} & Runtime (s) & Accuracy (\%) & \makecell{Conv. \\ Speed} & Runtime (s) \\
\hline
\multirow{4}{*}{Scenario 1} & FedAvg & 52.77 & 322 & 46488 & 77.29 & 288 & 47732 & 76.79 & 227 & 48003 \\
 & FedQS-Avg & \textbf{66.47} & \textbf{264} & 53986 & \textbf{82.90} & \textbf{235} & 54575 & \textbf{83.22} & \textbf{201} & 55529 \\\cline{2-11}
 & FedSGD & 68.29 & 244 & 47443 & 85.66 & 211 & 47728 & 87.86 & 147 & 48280 \\
 & FedQS-SGD & \textbf{76.17} & \textbf{206} & 54095 & \textbf{88.12} & \textbf{177} & 55268 & \textbf{88.80} & \textbf{115} & 55257 \\
\hline
\multirow{4}{*}{Scenario 2} & FedAvg & 53.12 & 310 & 47397 & 76.74 & 294 & 47156 & 79.90 & 286 & 47788 \\
 & FedQS-Avg & \textbf{64.52} & \textbf{268} & 55535 & \textbf{83.53} & \textbf{241} & 55816 & \textbf{84.35} & \textbf{252} & 56253 \\\cline{2-11}
 & FedSGD & 65.13 & 268 & 47613 & 85.37 & 207 & 47096 & 88.63 & 159 & 47634 \\
 & FedQS-SGD & \textbf{69.65} & \textbf{223} & 55037 & \textbf{87.96} & \textbf{185} & 55201 & \textbf{89.14} & \textbf{133} & 55808 \\
\hline
\multirow{4}{*}{Scenario 3} & FedAvg & 44.39 & 343 & 58993 & 68.44 & 332 & 59362 & 72.47 & 296 & 59447 \\
 & FedQS-Avg & \textbf{53.22} & \textbf{297} & 62623 & \textbf{79.33} & \textbf{272} & 63308 & \textbf{81.08} & \textbf{253} & 63776 \\\cline{2-11}
 & FedSGD & 58.40 & 292 & 58345 & 81.73 & 288 & 59487 & 84.43 & 262 & 59732 \\
 & FedQS-SGD & \textbf{60.08} & \textbf{263} & 62870 & \textbf{83.39} & \textbf{245} & 63992 & \textbf{86.18} & \textbf{241} & 64703 \\
\hline
\end{tabular}
\end{table}

\subsection{Hyperparameter analysis}
\label{sec:hyperparameter}

This section presents a hyperparameter analysis of \tool. Using a grid search, we fine-tune four key hyperparameters within the adaptive module (\modada): the initial learning rate $\eta_0$ (where $\eta_0 \triangleq \eta_i^0, \forall i$), the learning rate change rate $a$, the initial momentum $m_0$, and the momentum change speed $k$. Figure~\ref{fig:case_study_parameter} summarizes the results for CV tasks (with detailed results provided in Appendix Tables~\ref{tab:learing_rate}-~\ref{tab:k}). Our findings indicate that an excessively large or small $\eta_0$ leads to reduced accuracy and slower convergence, a trend also observed in FedSGD and FedAvg. However, \tool's adaptive learning strategy yields significant improvements in both metrics over these baselines. Furthermore, we observe that overly large values of $a$, $m_0$, and $k$ can adversely affect \tool's performance in both modes. Among these, $a$ has the most pronounced impact on accuracy, while $k$ has the least.

Crucially, \tool exhibits robust performance across broad hyperparameter ranges (\eg $a \in [0.001, 0.005]$ and $k \in [0.001, 0.2]$), achieving strong results with default settings ($a=0.002$, $m_0=0.1$, $k=0.2$) as shown in our main experiments. The adaptive mechanisms in \modada inherently mitigate sensitivity to hyperparameter selection, while \modest and \modupd remain hyperparameter-free. This design substantially reduces tuning overhead and facilitates practical deployment in real-world federated learning scenarios.


\begin{figure}[!t]
    \centering
    \hspace{15mm} \subfloat{\includegraphics[width=0.6\linewidth]{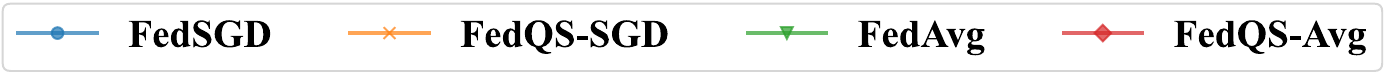}}
    \vspace{-6px}
    \newline
    \setcounter{subfigure}{0}
    \subfloat{\includegraphics[width=0.22\linewidth]{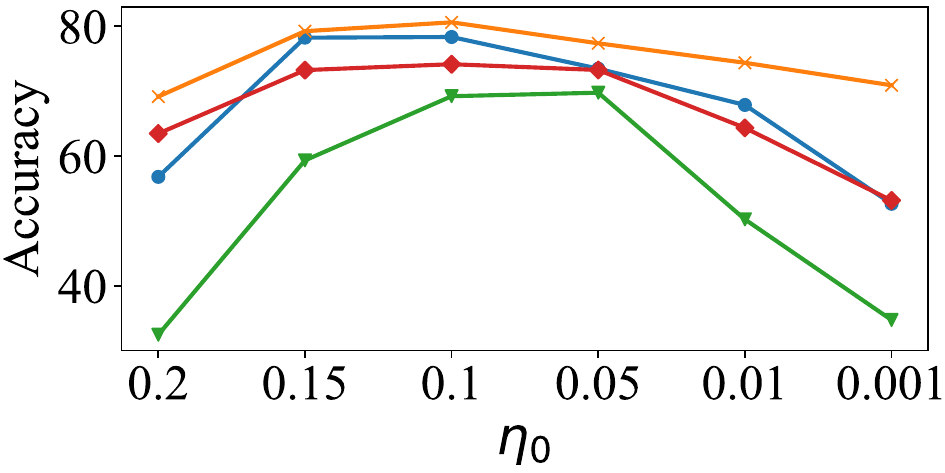}}
    \hspace{1mm}
    \subfloat{\includegraphics[width=0.22\linewidth]{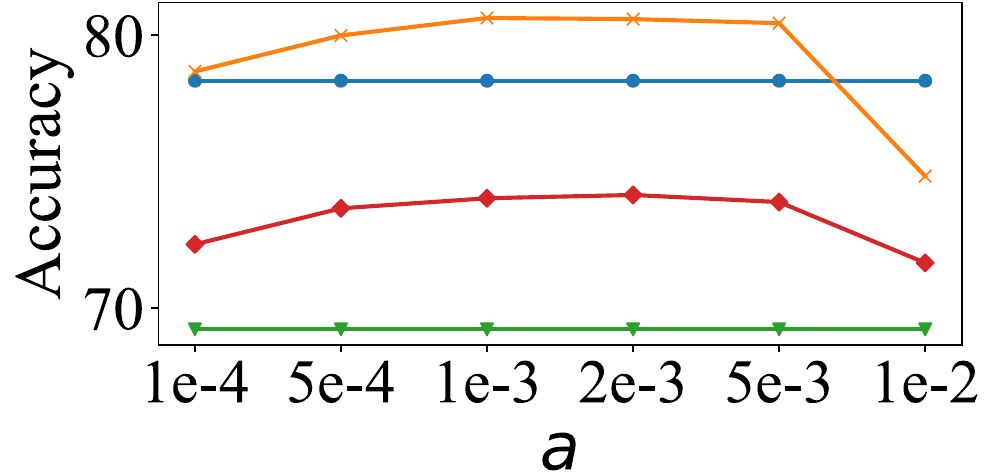}}
    \hspace{1mm}
    \subfloat{\includegraphics[width=0.22\linewidth]{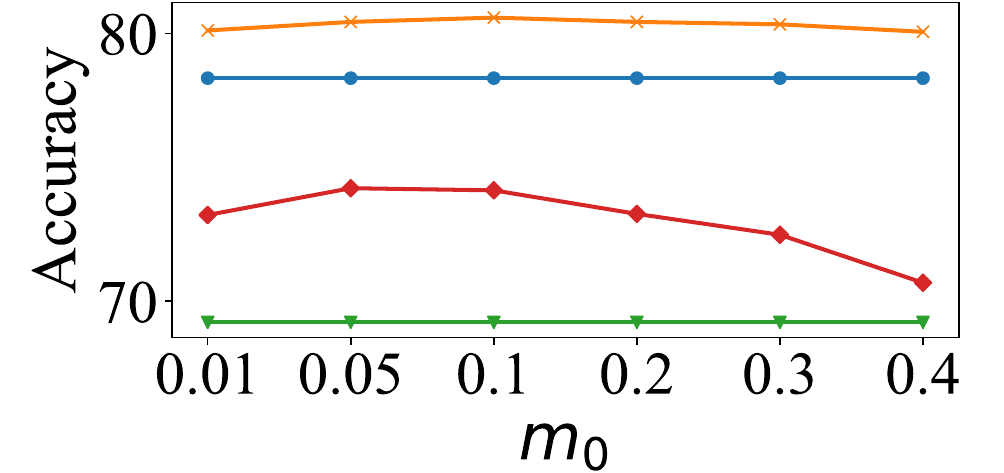}}
    \hspace{1mm}
    \subfloat{\includegraphics[width=0.22\linewidth]{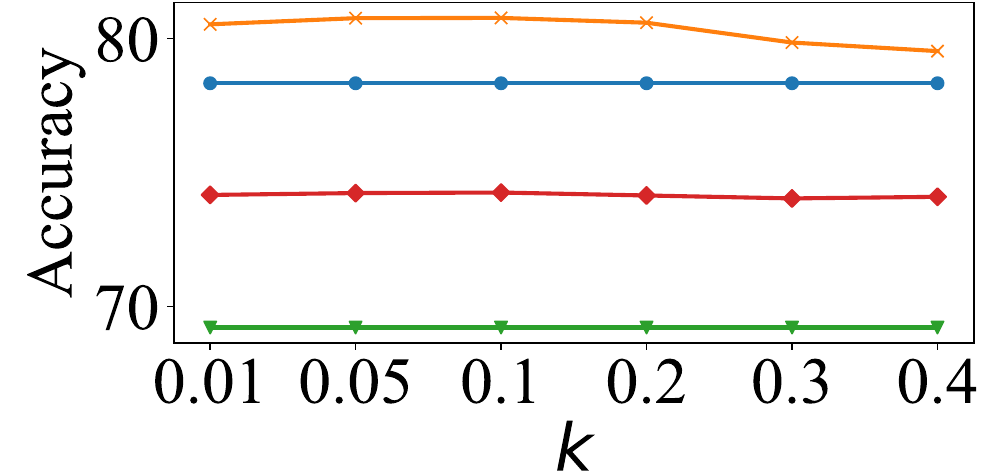}}
    \hspace{1mm}
    \subfloat{\includegraphics[width=0.22\linewidth]{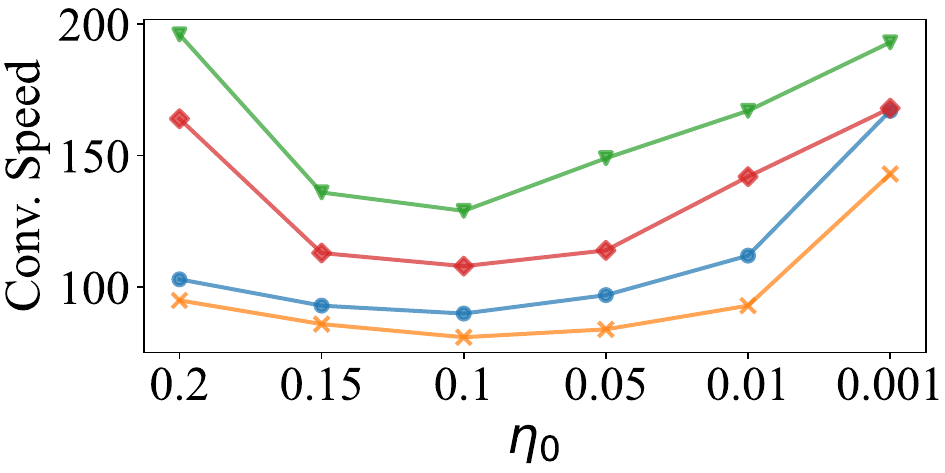}}
    \hspace{1mm}
    \subfloat{\includegraphics[width=0.22\linewidth]{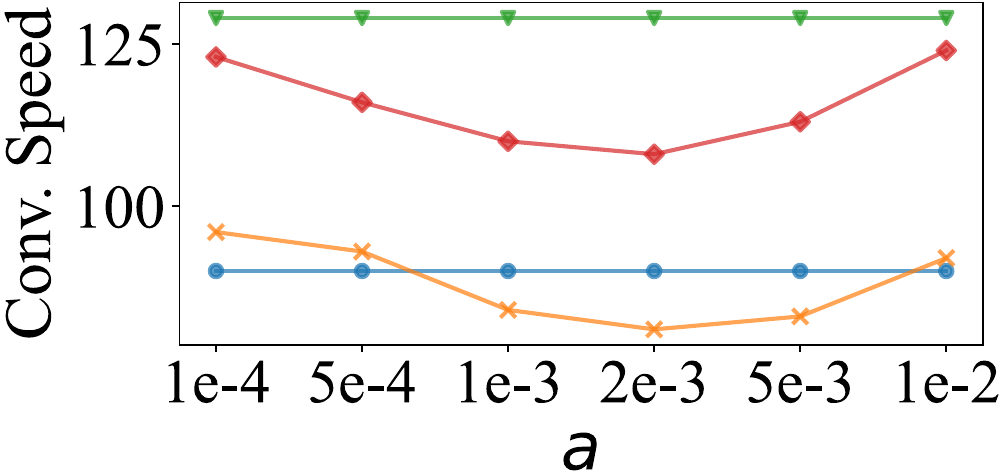}}
    \hspace{1mm}
    \subfloat{\includegraphics[width=0.22\linewidth]{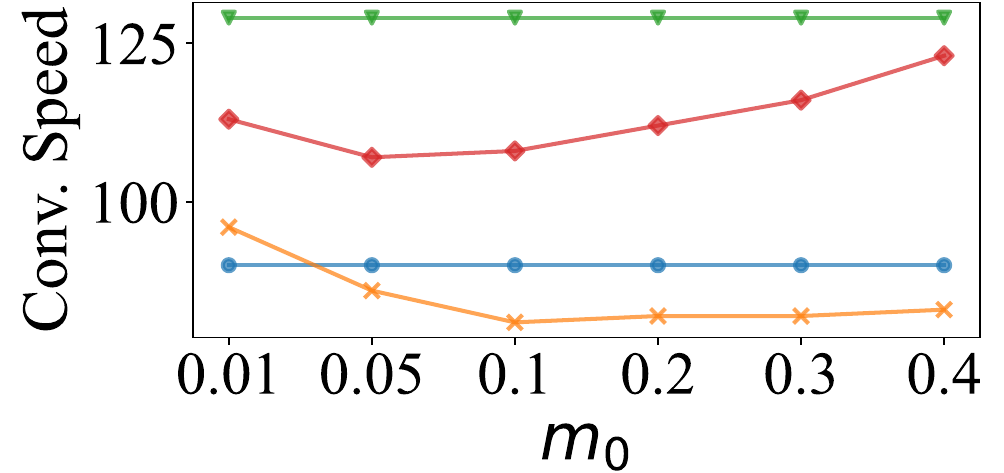}}
    \hspace{1mm}
    \subfloat{\includegraphics[width=0.22\linewidth]{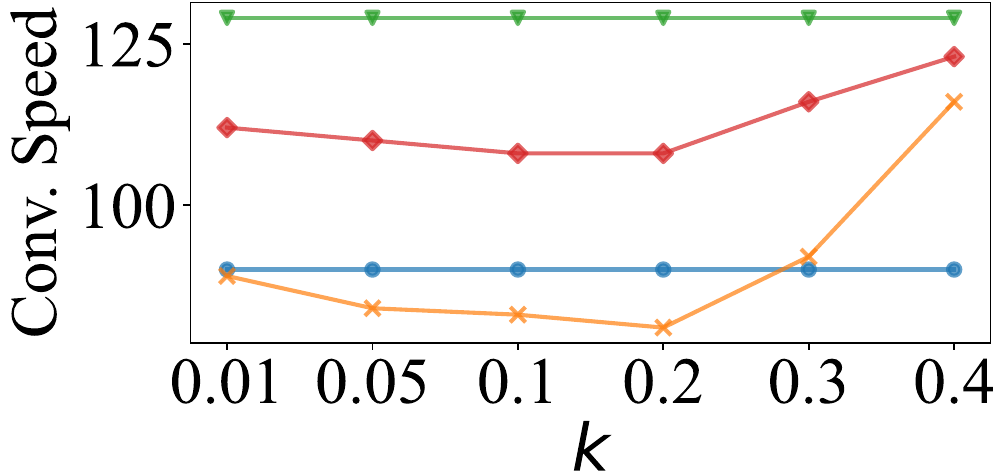}}
    \adjustspace{-4px}
    \caption{Impact of different hyperparameters on FedSGD, FedAvg, and \tool's performance in CV tasks. The target accuracy of convergence speed is set to 80\% of convergence accuracy.}
    \label{fig:case_study_parameter}
    \adjustspace{-2ex}
\end{figure}

\subsection{Ablation studies}
In this section, we conduct ablation studies on \tool's modules, with the results presented in Table~\ref{tab:ablation}.

\textbf{Similarity Function (\modest).} We evaluate three similarity measures: Cosine Function (CF), Euclidean Distance (ED), and Manhattan Distance (MD). While MD achieves slightly higher accuracy, CF exhibits better stability with fewer oscillations. The comparable performance across measures validates \tool's robustness to different similarity metrics.

\textbf{Momentum (\modada).} Removing momentum terms degrades performance significantly, with average accuracy dropping by 4.3\% and convergence requiring 6\% more epochs. This demonstrates momentum's crucial role in both optimization efficiency and final model quality.

\textbf{Feedback Mechanism (\modupd).} Disabling the feedback mechanism causes substantial accuracy degradation (7.81\% for \tool-Avg, 2.18\% for \tool-SGD), highlighting its importance for maintaining model performance. The feedback mechanism proves particularly valuable for averaged models, where its absence leads to more pronounced performance drops.

\section{Conclusions}

In this paper, we presented \tool, a novel framework that jointly optimizes gradient and model aggregation in SAFL. \tool employs a divide-and-conquer strategy that dynamically categorizes clients into four types and adaptively adjusts their local training schemes. Theoretical analysis establishes that \tool achieves convergence under both aggregation paradigms, effectively addressing the instability of gradient-based methods and the suboptimality of model-based ones. Experiments across diverse settings demonstrate that \tool achieves accurate predictions, rapid convergence, and stable training, consistently outperforming state-of-the-art baselines. Future work will integrate differential privacy for stronger security and evaluate under more realistic conditions (\eg non-IID data and limited client-side data volumes) to further validate practical utility.

\clearpage

\begin{ack}
This work was partly supported by the National Key R\&D Program of China (No. 2023YFB2704903), Research and Development Program of the Department of Industry and Information Technology in Xinjiang Autonomous District (No. SA0304173), the Natural Science Foundation of Shanghai (No. 23ZR1429600), and a Huawei research grant (No. TC20250717058).

\end{ack}







{\small
\bibliographystyle{IEEEtran}
\bibliography{ref}

}


\section*{NeurIPS Paper Checklist}

\begin{enumerate}

\item {\bf Claims}
    \item[] Question: Do the main claims made in the abstract and introduction accurately reflect the paper's contributions and scope?
    \item[] Answer: \answerYes{} 
    \item[] Justification: The main claims made in the abstract and introduction accurately reflect the contributions and scope of this paper.
    \item[] Guidelines:
    \begin{itemize}
        \item The answer NA means that the abstract and introduction do not include the claims made in the paper.
        \item The abstract and/or introduction should clearly state the claims made, including the contributions made in the paper and important assumptions and limitations. A No or NA answer to this question will not be perceived well by the reviewers. 
        \item The claims made should match theoretical and experimental results, and reflect how much the results can be expected to generalize to other settings. 
        \item It is fine to include aspirational goals as motivation as long as it is clear that these goals are not attained by the paper. 
    \end{itemize}

\item {\bf Limitations}
    \item[] Question: Does the paper discuss the limitations of the work performed by the authors?
    \item[] Answer: \answerYes{} 
    \item[] Justification: We give the justifications of the assumptions' reasonability in Appendix~\ref{sec:convergence_detail}. We further give the discussions of \tool's scalability, \tool's broader impact, and \tool's limitations in Appendix~\ref{sec:discussion}.
    \item[] Guidelines:
    \begin{itemize}
        \item The answer NA means that the paper has no limitation while the answer No means that the paper has limitations, but those are not discussed in the paper. 
        \item The authors are encouraged to create a separate "Limitations" section in their paper.
        \item The paper should point out any strong assumptions and how robust the results are to violations of these assumptions (e.g., independence assumptions, noiseless settings, model well-specification, asymptotic approximations only holding locally). The authors should reflect on how these assumptions might be violated in practice and what the implications would be.
        \item The authors should reflect on the scope of the claims made, e.g., if the approach was only tested on a few datasets or with a few runs. In general, empirical results often depend on implicit assumptions, which should be articulated.
        \item The authors should reflect on the factors that influence the performance of the approach. For example, a facial recognition algorithm may perform poorly when image resolution is low or images are taken in low lighting. Or a speech-to-text system might not be used reliably to provide closed captions for online lectures because it fails to handle technical jargon.
        \item The authors should discuss the computational efficiency of the proposed algorithms and how they scale with dataset size.
        \item If applicable, the authors should discuss possible limitations of their approach to address problems of privacy and fairness.
        \item While the authors might fear that complete honesty about limitations might be used by reviewers as grounds for rejection, a worse outcome might be that reviewers discover limitations that aren't acknowledged in the paper. The authors should use their best judgment and recognize that individual actions in favor of transparency play an important role in developing norms that preserve the integrity of the community. Reviewers will be specifically instructed to not penalize honesty concerning limitations.
    \end{itemize}

\item {\bf Theory assumptions and proofs}
    \item[] Question: For each theoretical result, does the paper provide the full set of assumptions and a complete (and correct) proof?
    \item[] Answer: \answerYes{} 
    \item[] Justification: We provide a brief assumption in Section~\ref{sec:ass} and make it detailed in Appendix~\ref{sec:convergence_detail}. Proofs are shown in detail in Appendix~\ref{app:proof_details}.
    \item[] Guidelines:
    \begin{itemize}
        \item The answer NA means that the paper does not include theoretical results. 
        \item All the theorems, formulas, and proofs in the paper should be numbered and cross-referenced.
        \item All assumptions should be clearly stated or referenced in the statement of any theorems.
        \item The proofs can either appear in the main paper or the supplemental material, but if they appear in the supplemental material, the authors are encouraged to provide a short proof sketch to provide intuition. 
        \item Inversely, any informal proof provided in the core of the paper should be complemented by formal proofs provided in appendix or supplemental material.
        \item Theorems and Lemmas that the proof relies upon should be properly referenced. 
    \end{itemize}

    \item {\bf Experimental result reproducibility}
    \item[] Question: Does the paper fully disclose all the information needed to reproduce the main experimental results of the paper to the extent that it affects the main claims and/or conclusions of the paper (regardless of whether the code and data are provided or not)?
    \item[] Answer: \answerYes{} 
    \item[] Justification: We fully disclose all the information needed to reproduce the main experimental results in this paper. We also release the source code in \code~to facilitate further research. We are convinced that the obtained results can be reproduced.
    \item[] Guidelines:
    \begin{itemize}
        \item The answer NA means that the paper does not include experiments.
        \item If the paper includes experiments, a No answer to this question will not be perceived well by the reviewers: Making the paper reproducible is important, regardless of whether the code and data are provided or not.
        \item If the contribution is a dataset and/or model, the authors should describe the steps taken to make their results reproducible or verifiable. 
        \item Depending on the contribution, reproducibility can be accomplished in various ways. For example, if the contribution is a novel architecture, describing the architecture fully might suffice, or if the contribution is a specific model and empirical evaluation, it may be necessary to either make it possible for others to replicate the model with the same dataset, or provide access to the model. In general. releasing code and data is often one good way to accomplish this, but reproducibility can also be provided via detailed instructions for how to replicate the results, access to a hosted model (e.g., in the case of a large language model), releasing of a model checkpoint, or other means that are appropriate to the research performed.
        \item While NeurIPS does not require releasing code, the conference does require all submissions to provide some reasonable avenue for reproducibility, which may depend on the nature of the contribution. For example
        \begin{enumerate}
            \item If the contribution is primarily a new algorithm, the paper should make it clear how to reproduce that algorithm.
            \item If the contribution is primarily a new model architecture, the paper should describe the architecture clearly and fully.
            \item If the contribution is a new model (e.g., a large language model), then there should either be a way to access this model for reproducing the results or a way to reproduce the model (e.g., with an open-source dataset or instructions for how to construct the dataset).
            \item We recognize that reproducibility may be tricky in some cases, in which case authors are welcome to describe the particular way they provide for reproducibility. In the case of closed-source models, it may be that access to the model is limited in some way (e.g., to registered users), but it should be possible for other researchers to have some path to reproducing or verifying the results.
        \end{enumerate}
    \end{itemize}

\item {\bf Open access to data and code}
    \item[] Question: Does the paper provide open access to the data and code, with sufficient instructions to faithfully reproduce the main experimental results, as described in supplemental material?
    \item[] Answer: \answerYes{} 
    \item[] Justification: We have provided our source code in Anonymous GitHub \code.
    \item[] Guidelines:
    \begin{itemize}
        \item The answer NA means that paper does not include experiments requiring code.
        \item Please see the NeurIPS code and data submission guidelines (\url{https://nips.cc/public/guides/CodeSubmissionPolicy}) for more details.
        \item While we encourage the release of code and data, we understand that this might not be possible, so “No” is an acceptable answer. Papers cannot be rejected simply for not including code, unless this is central to the contribution (e.g., for a new open-source benchmark).
        \item The instructions should contain the exact command and environment needed to run to reproduce the results. See the NeurIPS code and data submission guidelines (\url{https://nips.cc/public/guides/CodeSubmissionPolicy}) for more details.
        \item The authors should provide instructions on data access and preparation, including how to access the raw data, preprocessed data, intermediate data, and generated data, etc.
        \item The authors should provide scripts to reproduce all experimental results for the new proposed method and baselines. If only a subset of experiments are reproducible, they should state which ones are omitted from the script and why.
        \item At submission time, to preserve anonymity, the authors should release anonymized versions (if applicable).
        \item Providing as much information as possible in supplemental material (appended to the paper) is recommended, but including URLs to data and code is permitted.
    \end{itemize}

\item {\bf Experimental setting/details}
    \item[] Question: Does the paper specify all the training and test details (e.g., data splits, hyperparameters, how they were chosen, type of optimizer, etc.) necessary to understand the results?
    \item[] Answer: \answerYes{} 
    \item[] Justification: We discuss the details in Appendix~\ref{sec:exp_detail}.
    \item[] Guidelines:
    \begin{itemize}
        \item The answer NA means that the paper does not include experiments.
        \item The experimental setting should be presented in the core of the paper to a level of detail that is necessary to appreciate the results and make sense of them.
        \item The full details can be provided either with the code, in appendix, or as supplemental material.
    \end{itemize}

\item {\bf Experiment statistical significance}
    \item[] Question: Does the paper report error bars suitably and correctly defined or other appropriate information about the statistical significance of the experiments?
    \item[] Answer: \answerYes{} 
    \item[] Justification: We discuss the impact of different system settings, hyperparameters, and tasks of \tool in Section~\ref{sec:eva}. Experiments are repeated multiple times, and we provide the average results. 
    \item[] Guidelines:
    \begin{itemize}
        \item The answer NA means that the paper does not include experiments.
        \item The authors should answer "Yes" if the results are accompanied by error bars, confidence intervals, or statistical significance tests, at least for the experiments that support the main claims of the paper.
        \item The factors of variability that the error bars are capturing should be clearly stated (for example, train/test split, initialization, random drawing of some parameter, or overall run with given experimental conditions).
        \item The method for calculating the error bars should be explained (closed form formula, call to a library function, bootstrap, etc.)
        \item The assumptions made should be given (e.g., Normally distributed errors).
        \item It should be clear whether the error bar is the standard deviation or the standard error of the mean.
        \item It is OK to report 1-sigma error bars, but one should state it. The authors should preferably report a 2-sigma error bar than state that they have a 96\% CI, if the hypothesis of Normality of errors is not verified.
        \item For asymmetric distributions, the authors should be careful not to show in tables or figures symmetric error bars that would yield results that are out of range (e.g. negative error rates).
        \item If error bars are reported in tables or plots, The authors should explain in the text how they were calculated and reference the corresponding figures or tables in the text.
    \end{itemize}

\item {\bf Experiments compute resources}
    \item[] Question: For each experiment, does the paper provide sufficient information on the computer resources (type of compute workers, memory, time of execution) needed to reproduce the experiments?
    \item[] Answer: \answerYes{} 
    \item[] Justification: We have provided the experiment setting and hardware details in Section~\ref{sec_exp_setup} and Appendix~\ref{sec:exp_detail}. We further discuss the runtime performance of \tool and baselines in Section~\ref{sec:exp_performance_eva}, with experimental results provided in Table~\ref{tab:run_time}

    \item[] Guidelines:
    \begin{itemize}
        \item The answer NA means that the paper does not include experiments.
        \item The paper should indicate the type of compute workers CPU or GPU, internal cluster, or cloud provider, including relevant memory and storage.
        \item The paper should provide the amount of compute required for each of the individual experimental runs as well as estimate the total compute. 
        \item The paper should disclose whether the full research project required more compute than the experiments reported in the paper (e.g., preliminary or failed experiments that didn't make it into the paper). 
    \end{itemize}
    
\item {\bf Code of ethics}
    \item[] Question: Does the research conducted in the paper conform, in every respect, with the NeurIPS Code of Ethics \url{https://neurips.cc/public/EthicsGuidelines}?
    \item[] Answer: \answerYes{} 
    \item[] Justification: We have read and conformed to the NeurIPS Code of Ethics. We confirm that this paper does not incorporate any ethic concerns of NeurIPS.
    \item[] Guidelines:
    \begin{itemize}
        \item The answer NA means that the authors have not reviewed the NeurIPS Code of Ethics.
        \item If the authors answer No, they should explain the special circumstances that require a deviation from the Code of Ethics.
        \item The authors should make sure to preserve anonymity (e.g., if there is a special consideration due to laws or regulations in their jurisdiction).
    \end{itemize}

\item {\bf Broader impacts}
    \item[] Question: Does the paper discuss both potential positive societal impacts and negative societal impacts of the work performed?
    \item[] Answer: \answerYes{} 
    \item[] Justification: We have discussed the broader impact in  Appendix~\ref{sec:discussion}.
    \item[] Guidelines:
    \begin{itemize}
        \item The answer NA means that there is no societal impact of the work performed.
        \item If the authors answer NA or No, they should explain why their work has no societal impact or why the paper does not address societal impact.
        \item Examples of negative societal impacts include potential malicious or unintended uses (e.g., disinformation, generating fake profiles, surveillance), fairness considerations (e.g., deployment of technologies that could make decisions that unfairly impact specific groups), privacy considerations, and security considerations.
        \item The conference expects that many papers will be foundational research and not tied to particular applications, let alone deployments. However, if there is a direct path to any negative applications, the authors should point it out. For example, it is legitimate to point out that an improvement in the quality of generative models could be used to generate deepfakes for disinformation. On the other hand, it is not needed to point out that a generic algorithm for optimizing neural networks could enable people to train models that generate Deepfakes faster.
        \item The authors should consider possible harms that could arise when the technology is being used as intended and functioning correctly, harms that could arise when the technology is being used as intended but gives incorrect results, and harms following from (intentional or unintentional) misuse of the technology.
        \item If there are negative societal impacts, the authors could also discuss possible mitigation strategies (e.g., gated release of models, providing defenses in addition to attacks, mechanisms for monitoring misuse, mechanisms to monitor how a system learns from feedback over time, improving the efficiency and accessibility of ML).
    \end{itemize}
    
\item {\bf Safeguards}
    \item[] Question: Does the paper describe safeguards that have been put in place for responsible release of data or models that have a high risk for misuse (e.g., pretrained language models, image generators, or scraped datasets)?
    \item[] Answer: \answerNA{} 
    \item[] Justification: This paper does not release any new data or models
    \item[] Guidelines:
    \begin{itemize}
        \item The answer NA means that the paper poses no such risks.
        \item Released models that have a high risk for misuse or dual-use should be released with necessary safeguards to allow for controlled use of the model, for example by requiring that users adhere to usage guidelines or restrictions to access the model or implementing safety filters. 
        \item Datasets that have been scraped from the Internet could pose safety risks. The authors should describe how they avoided releasing unsafe images.
        \item We recognize that providing effective safeguards is challenging, and many papers do not require this, but we encourage authors to take this into account and make a best faith effort.
    \end{itemize}

\item {\bf Licenses for existing assets}
    \item[] Question: Are the creators or original owners of assets (e.g., code, data, models), used in the paper, properly credited and are the license and terms of use explicitly mentioned and properly respected?
    \item[] Answer: \answerYes{} 
    \item[] Justification: Our paper uses publicly available datasets (CIFAR-10, Shakespeare, and UCI Adult), which are all properly cited and introduced in Appendix~\ref{sec:app_dataset_detail}. For the baseline methods, we have give proper citations and introductions in Appendix~\ref{sec:app_baseline_detail}.
    \item[] Guidelines:
    \begin{itemize}
        \item The answer NA means that the paper does not use existing assets.
        \item The authors should cite the original paper that produced the code package or dataset.
        \item The authors should state which version of the asset is used and, if possible, include a URL.
        \item The name of the license (e.g., CC-BY 4.0) should be included for each asset.
        \item For scraped data from a particular source (e.g., website), the copyright and terms of service of that source should be provided.
        \item If assets are released, the license, copyright information, and terms of use in the package should be provided. For popular datasets, \url{paperswithcode.com/datasets} has curated licenses for some datasets. Their licensing guide can help determine the license of a dataset.
        \item For existing datasets that are re-packaged, both the original license and the license of the derived asset (if it has changed) should be provided.
        \item If this information is not available online, the authors are encouraged to reach out to the asset's creators.
    \end{itemize}

\item {\bf New assets}
    \item[] Question: Are new assets introduced in the paper well documented and is the documentation provided alongside the assets?
    \item[] Answer: \answerYes{} 
    \item[] Justification: We provided our source code and dataset in the GitHub at \code, with a documentation named ``README.md'' to provide details in the repository.
    \item[] Guidelines:
    \begin{itemize}
        \item The answer NA means that the paper does not release new assets.
        \item Researchers should communicate the details of the dataset/code/model as part of their submissions via structured templates. This includes details about training, license, limitations, etc. 
        \item The paper should discuss whether and how consent was obtained from people whose asset is used.
        \item At submission time, remember to anonymize your assets (if applicable). You can either create an anonymized URL or include an anonymized zip file.
    \end{itemize}

\item {\bf Crowdsourcing and research with human subjects}
    \item[] Question: For crowdsourcing experiments and research with human subjects, does the paper include the full text of instructions given to participants and screenshots, if applicable, as well as details about compensation (if any)? 
    \item[] Answer: \answerNA{} 
    \item[] Justification: This paper does not involve crowdsourcing nor research with human subjects.
    \item[] Guidelines:
    \begin{itemize}
        \item The answer NA means that the paper does not involve crowdsourcing nor research with human subjects.
        \item Including this information in the supplemental material is fine, but if the main contribution of the paper involves human subjects, then as much detail as possible should be included in the main paper. 
        \item According to the NeurIPS Code of Ethics, workers involved in data collection, curation, or other labor should be paid at least the minimum wage in the country of the data collector. 
    \end{itemize}

\item {\bf Institutional review board (IRB) approvals or equivalent for research with human subjects}
    \item[] Question: Does the paper describe potential risks incurred by study participants, whether such risks were disclosed to the subjects, and whether Institutional Review Board (IRB) approvals (or an equivalent approval/review based on the requirements of your country or institution) were obtained?
    \item[] Answer: \answerNA{} 
    \item[] Justification: This paper does not involve crowdsourcing nor research with human subjects.
    \item[] Guidelines:
    \begin{itemize}
        \item The answer NA means that the paper does not involve crowdsourcing nor research with human subjects.
        \item Depending on the country in which research is conducted, IRB approval (or equivalent) may be required for any human subjects research. If you obtained IRB approval, you should clearly state this in the paper. 
        \item We recognize that the procedures for this may vary significantly between institutions and locations, and we expect authors to adhere to the NeurIPS Code of Ethics and the guidelines for their institution. 
        \item For initial submissions, do not include any information that would break anonymity (if applicable), such as the institution conducting the review.
    \end{itemize}

\item {\bf Declaration of LLM usage}
    \item[] Question: Does the paper describe the usage of LLMs if it is an important, original, or non-standard component of the core methods in this research? Note that if the LLM is used only for writing, editing, or formatting purposes and does not impact the core methodology, scientific rigorousness, or originality of the research, declaration is not required.
    \item[] Answer: \answerNA{} 
    \item[] Justification: The core method development in this research does not involve LLMs as any important, original, or non-standard components.
    \item[] Guidelines: 
    \begin{itemize}
        \item The answer NA means that the core method development in this research does not involve LLMs as any important, original, or non-standard components.
        \item Please refer to our LLM policy (\url{https://neurips.cc/Conferences/2025/LLM}) for what should or should not be described.
    \end{itemize}

\end{enumerate}

\newpage
\appendix
\section{Convergence analysis details}
\label{sec:convergence_detail}

We first provide the detailed descriptions of Assumption~\ref{sec:ass}, which are also adopted in related works.

\begin{assumption}[$L$-smooth~\cite{li2019convergence, NEURIPS2024_29021b06}]
    We assume that for $\forall i$, the loss function $F_i$ is $L$-smooth, \ie for all $x,y$ as input, we have:
    \begin{equation}    	
        F_i(y) - F_i(x) \leq  \langle \nabla F_i(x),(y - x)\rangle +\frac{L}{2}||y - x||^2,
    \end{equation}
    where $L > 0$ is a constant.
    \label{ass:l-smooth}
    \adjustspace{-3px}
\end{assumption}
\begin{assumption}[Bounded Gradient~\cite{li2019convergence, nguyen2022federated}]
    We assume that the expected squared norm of local stochastic gradients $\nabla F_i(w_i^t)$ is uniformly bounded, \ie
    \begin{equation}
        \mathbb{E}[||\nabla F_i(w_i^t)||^2] \leq G_c^2,
    \end{equation}
    where $G_c > 0$ is a constant.
    \label{ass:bounded_gra}
    \adjustspace{-3px}
\end{assumption}
\begin{assumption}[Bounded Heterogeneous Degree~\cite{karimireddy2020scaffold,zhou2022towards}]
    We assume that the degree of heterogeneity in the training task is finite, \ie
    \begin{equation}
        \mathbb{E}[||\nabla F_g(w_g^t) - \nabla F_i(w_i^t)||^2] \leq \delta^2,
    \end{equation}
    where $\delta > 0$ is a constant and $\nabla F_g(\cdot)$ is the ideal global gradient, which is defined in detail in Appendix~\ref{app:proof_details}. This also implies that the discrepancy between the gradients of each individual client and the ideal global gradient is bounded.
    \label{ass:non-iid_degree}
    \adjustspace{-3px}
\end{assumption}

We then give the justifications of the reasonability for these assumptions.

\textbf{Justification of Assumption~\ref{ass:l-smooth}.} Assumption~\ref{ass:l-smooth} is reasonable because loss functions (e.g., sigmoid) in neural networks are generally smooth. The L-smoothness assumption aligns with differentiable loss function requirements, enabling gradient computation via backpropagation. The convergence analyses of FL typically rely on this assumption to facilitate a simplified mathematical analysis. We avoided the convexity assumption in our analysis for a more general analysis, because convexity relates to the difficulty of optimization problems. While convexity properties hold in simpler models like logistic regression or SVMs, deep neural networks generally operate in non-convex loss landscapes.

\textbf{Justification of Assumption~\ref{ass:bounded_gra}.} Assumption~\ref{ass:bounded_gra} is reasonable because, in SAFL, gradient clipping is employed to alleviate the accuracy degradation issue caused by non-IID data distribution. As long as clients apply gradient clipping, this assumption naturally stands. Violating this assumption implies that there exists a client whose gradient is unbounded, leading to the gradient explosion and potentially causing the training to fail completely. This assumption also has a wide application in FL~\cite{li2019convergence,xie2019asynchronous,chai2021fedat,nguyen2022federated,xie2023asynchronous}. Meanwhile, since gradient clipping is commonly adopted in SAFL~\cite{zhou2022towards}, the local gradient magnitudes during training remain bounded by the clipping threshold. Consequently, the gradient clipping threshold can be directly utilized as the upper bound for gradients in our theoretical assumptions. In our experiments, we set the bound $G_c$ as 20.

\textbf{Justification of Assumption~\ref{ass:non-iid_degree}.} Assumption~\ref{ass:non-iid_degree} holds since data distribution heterogeneity is constrained due to limited training data. The heterogeneity of data distribution is determined once participating clients are selected for aggregation. Violating this assumption indicates either an infinite amount of data, which is impractical, or the data distribution would be constantly changing, making it impossible to capture its exact values and ultimately leading to training failure.

Based on these assumptions, we rewrite Theorems~\ref{thm:our_gra_convergence_brief} and~\ref{thm:our_model_convergence_brief} to give the complete theoretical results.

\begin{theorem}[The convergence of \tool-SGD]
Let Assumption~\ref{ass:l-smooth},\ref{ass:bounded_gra},\ref{ass:non-iid_degree} hold and $L,\delta,G_c$ be defined therein. Let $\tau_i^t$ be defined Section~\ref{sec:pre} and $K$ be defined in Section~\ref{sec:mod3}, $\beta = \max_{i,t}\{\eta_i^t, \eta_g\}$ and $ \sqrt{\frac{1}{RK - 1}} <\beta < \sqrt{\frac{3}{2RK-3}}$, where $E$ is the maximum local epoch, $\theta = \max_{i,t} \{m_i^t\}$ and $R = \frac{E\theta - E\theta^2 - \theta^2 + \theta^{E+2}}{(1 - \theta)^2}$. Then, we have the following convergence results:
\begin{equation}
\begin{split}
    \mathbb{E}[F(w_g^t)] - F^* \leq  L\mathcal{V}^t\mathbb{E}[||w_g^{0} - w^* ||^2]+ \mathcal{U} + \mathcal{W},
\end{split}
\label{equ:appendix_sgd_convergence}
\end{equation}
where $\mathcal{U} = [2L\beta^2 + \frac{6\beta^2(\beta^2L + L)}{2\beta^2R - 2\beta^2 - 2}]E^2\delta^2, \mathcal{V} = (3 - \frac{2\beta^2KR}{\beta^2 + 1})$, and 

\begin{equation}
    \begin{split}
        \mathcal{W} &= 4L\mathbb{E}[||\eta_i^{\tau_i^t}\sum_{e=1}^E\nabla F_{i,e}(w_{i,e-1}^{\tau_i^t})||^2] + 4L\mathbb{E}[||\eta_i^{\tau_i^t}\sum_{e=1}^E\sum_{r=1}^e(m_i^t)^r\nabla F_{i,e-r}(w_{i,e-r-1}^{\tau_i^t})||^2]  \\ & \quad + \sum_{j=0}^t\mathcal{V}^j3L\mathbb{E}[||\eta_i^{\tau_i^{t-1}}\sum_{e=1}^E\nabla F_{i,e}(w_{i,e-1}^{\tau_i^{t-1}})||^2] \\ & \quad + \sum_{j=0}^t\mathcal{V}^j2L\mathbb{E}[||\eta_i^{\tau_i^{t-1}}\sum_{e=1}^E\sum_{r=1}^e(m_i^{t-1})^r\nabla F_{i,e-r}(w_{i,e-r-1}^{\tau_i^{t-1}})||^2]\\
        & \leq [4LE^2 + 4LRQ(t) +\frac{(\beta^2L + L)(2RQ(t) + 3E^2)}{2\beta^2R - 2\beta^2 - 2}]\beta^2G_c^2.
    \end{split}
\end{equation} 

Notice that the value of $\mathcal{V}$ is in $(0, 1)$ and $Q(t)$ denotes the maximum number of occasions upon which all participating nodes execute the momentum update module at global round $t$.
\label{thm:our_gra_convergence}    
\end{theorem}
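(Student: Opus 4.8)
The plan is to track the expected squared distance to the optimum, $\mathbb{E}[\|w_g^t - w^*\|^2]$, derive a one-step contraction recursion, unroll it as a geometric series, and finally convert the distance bound into the stated loss-gap bound via $L$-smoothness. Concretely, I would start from the \tool-SGD update $w_g^t = w_g^{t-1} - \sum_{i\in\mathcal{S}} p_i \eta_i^{\tau_i^t}\sum_{e=1}^E \Delta F_{i,e}^{\tau_i^t}$ and expand $\|w_g^t - w^*\|^2$ into the previous distance, a cross (inner-product) term, and a squared-update term. Because the pseudo-gradient $\Delta F_{i,e}^{\tau_i^t}$ splits additively into a plain-gradient part $\nabla F_{i,e}(w_{i,e-1}^{\tau_i^t})$ and a momentum part $\sum_{r=1}^e (m_i^{\tau_i^t})^r\nabla F_{i,e-r}(w_{i,e-r-1}^{\tau_i^t})$, I would keep these two contributions separate throughout the argument, since only the momentum part should ultimately carry the client count $Q(t)$ rather than the full $K$.

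The central step is to extract the contraction factor $\mathcal{V} = 3 - \frac{2\beta^2 KR}{\beta^2+1}$ from the cross term. Here I would insert and subtract the ideal global gradient $\nabla F_g(w_g^{t-1})$, apply Young's inequality to split the squared update into (at most three) pieces, which is the origin of the leading constant $3$, and use $L$-smoothness (Assumption~\ref{ass:l-smooth}) to relate the inner product $\langle w_g^{t-1}-w^*, \text{update}\rangle$ to the progress coefficient $\frac{2\beta^2 KR}{\beta^2+1}$. The learning-rate window $\sqrt{\tfrac{1}{RK-1}} < \beta < \sqrt{\tfrac{3}{2RK-3}}$ is exactly what forces $1-\mathcal{V} = \frac{2\beta^2 KR - 2\beta^2 - 2}{\beta^2+1} > 0$ together with $\mathcal{V} > 0$, so I would verify $\mathcal{V}\in(0,1)$ directly from this constraint before proceeding.

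For the residual error terms, the deviations between local and ideal global gradients are controlled by the bounded-heterogeneity assumption (Assumption~\ref{ass:non-iid_degree}), producing the $\mathcal{U}=\mathcal{O}(\delta^2)$ contribution scaled by $E^2$ (one factor of $E$ from summing over local epochs and one from a Cauchy--Schwarz step across epochs). The raw gradient and momentum magnitudes are bounded by Assumption~\ref{ass:bounded_gra}; squaring the momentum sum $\sum_{r=1}^e\theta^r$ and summing over $e=1,\dots,E$ yields the closed form $R = \frac{E\theta - E\theta^2 - \theta^2 + \theta^{E+2}}{(1-\theta)^2}$ by evaluating the resulting double geometric series, and this is precisely where $Q(t)$, the number of clients actually running the momentum module, multiplies $R$ in place of $K$. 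Collecting the plain-gradient and momentum magnitude bounds then gives the $\mathcal{W}=\mathcal{O}(G_c^2)$ term.

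Finally, I would unroll the one-step recursion to obtain $\mathbb{E}[\|w_g^t-w^*\|^2] \le \mathcal{V}^t\mathbb{E}[\|w_g^0-w^*\|^2] + \sum_{j=0}^{t-1}\mathcal{V}^j(\mathcal{U}'+\mathcal{W}')$ and sum the geometric series $\sum_{j}\mathcal{V}^j \le \frac{1}{1-\mathcal{V}} = \frac{\beta^2+1}{2\beta^2 KR - 2\beta^2 - 2}$; this denominator is the $2\beta^2 R - 2\beta^2 - 2$ factor appearing in $\mathcal{U}$ and $\mathcal{W}$, and it combines with $L$ from smoothness to produce the $(\beta^2 L + L)$ numerators. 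A last application of $L$-smoothness at $w^*$ (where $\nabla F(w^*)=0$) converts the distance bound into $\mathbb{E}[F(w_g^t)] - F^* \le L\mathcal{V}^t\mathbb{E}[\|w_g^0-w^*\|^2] + \mathcal{U} + \mathcal{W}$. The main obstacle is the contraction step: obtaining a genuine linear-rate factor $\mathcal{V}<1$ without a convexity or PL assumption is delicate, and it must hinge on carefully coupling the $L$-smoothness inequality with the aggregation weights and the $\beta$-window rather than on any curvature of $F$. A secondary subtlety is the bookkeeping of the staleness indices $\tau_i^t$ and ensuring that the momentum contributions enter only through $Q(t)$, which is what lets \tool-SGD tighten $\mathcal{W}$ relative to a $K$-scaled bound.
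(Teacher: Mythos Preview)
Your proposal misses two structural ingredients that the paper's proof relies on, and without them the contraction step you flag as ``the main obstacle'' does not go through.

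First, the paper does \emph{not} track $\mathbb{E}[\|w_g^t - w^*\|^2]$ directly. It introduces an auxiliary \emph{ideal global model} $\widetilde{w}_g^t$ (obtained by running gradient descent on an ideal balanced loss $F_g$ starting from $w_g^{t-1}$; see Equations~\ref{equ:ideal_global_update}--\ref{equ:relation_ideal_real}) and proves the contraction recursion for $\mathbb{E}[\|\widetilde{w}_g^t - w^*\|^2]$ in Lemma~\ref{lemma:our_gra_global_ideal_diff}. The actual iterate enters only at the end, via the decomposition $w_g^t - w^* = (w_g^t - w_g^{t-1}) + \eta_g\sum_e\nabla F_g(\widetilde{w}_{g,e}^t) - (\widetilde{w}_g^t - w^*)$, where the first piece is handled by a separate one-step bound (Lemma~\ref{lemma:our_gra_global_real_diff}). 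This virtual-sequence device is what lets the paper trade the stale, client-weighted update for an object whose recursion is tractable; a direct recursion on $\|w_g^t-w^*\|^2$ is never written down.

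Second, and more critically, the contraction factor $\mathcal{V}$ does \emph{not} come from $L$-smoothness. The negative coefficient $-\frac{2\beta^2 R}{\beta^2+1}$ multiplying $\|\widetilde{w}_g^{t-1}-w^*\|^2$ is extracted from the cross term
\[
-2\Bigl\langle \widetilde{w}_g^{t-1}-w^*,\ \sum_{i} p_i\,\eta_i^{\tau_i^{t-1}}\sum_{e=1}^E\sum_{r=1}^e (m_i^{\tau_i^{t-1}})^r\nabla F_i(w_{i,e-r}^{\tau_i^{t-1}})\Bigr\rangle
\]
by applying a \emph{reversed Cauchy--Schwarz} (Polya--Szeg\H{o}--type) inequality with constants $\gamma=\beta$, $\Gamma=1/\beta$; this is how the learning-rate window on $\beta$ manufactures a lower bound on the inner product directly in terms of the two norms. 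You correctly anticipate that $L$-smoothness alone cannot deliver a distance contraction without convexity or a PL condition; the paper's resolution of this obstacle is not a sharper use of smoothness but an entirely different inequality. Smoothness is used only in Lemma~\ref{lemma:ideal_global_gradient} and in the final conversion $F(w_g^t)-F^*\le \frac{L}{2}\|w_g^t-w^*\|^2$. As written, your plan would stall exactly where you predict.

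A minor point: $R$ arises as the single sum $\sum_{e=1}^E\sum_{r=1}^e\theta^r$, not from squaring the momentum partial sums.
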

\begin{proof}
    See Appendix~\ref{sec:proof_theorem3}.
\end{proof}

\begin{theorem}[The convergence of \tool-Avg]
Let Assumption~\ref{ass:l-smooth},\ref{ass:bounded_gra},\ref{ass:non-iid_degree} hold and $L,\delta,G_c$ be defined therein. Let $\tau_i^t$ be defined Section~\ref{sec:pre} and $K$ be defined in Section~\ref{sec:mod3}. Denoted $E$ as the maximum local epoch and assume the aggregation weight parameter $p_i$ satisfies $0 \leq q < p_i < p \leq 1$. Let $\beta = \max_{i,t}\{\eta_i^t, \eta_g\}$ and $\sqrt{\frac{1}{KR+E^2-1}} < \beta < \sqrt{\frac{3}{2RK+2E^2 -3}}$, where $R = \frac{E\theta - E\theta^2 - \theta^2 + \theta^{E+2}}{(1 - \theta)^2}$. In $R$'s formula, $\theta = \max_{i,t} \{m_i^t\}$. Then, we have the following convergence results:
\begin{equation}
\begin{split}
    \mathbb{E}[F(w_g^t)] - F^* &\leq (3LpK^2 + L)\mathcal{V}^t\mathbb{E}[||w_g^0 - w^*||^2] + \mathcal{U} + \mathcal{W},
\end{split}
\label{equ:appendix_avg_convergence}
\end{equation}
where $Q(t)$ is defined in Theorem~\ref{thm:our_gra_convergence}, $\mathcal{V} = (3 - \frac{2\beta^2(R+E^2)}{\beta^2+1}) \in (0,1)$, $\mathcal{U} = [3p^2KL + \frac{8(3pK^2 + 1)(\beta^2L+L)}{2\beta^2(R+E^2) - 2\beta^2 - 2}]\beta^2E^2\delta^2$, and 

\begin{equation}
    \begin{split}
        \mathcal{W} &= (3Lp^2K + L)\sum_{j=1}^t\mathcal{V}^j\mathbb{E}[||\eta_i^{t-1}\sum_{e=1}^E[\nabla F_i(w_{i,e}^{t-1})]||^2] 
        + p^2KL(\mathbb{E}[||\eta_i\sum_{e=1}^E\nabla F_i(w_{i,e}^{\tau_i^t})||^2] \\ & \quad+\mathbb{E}[||\eta_i^t\sum_{e=1}^E\nabla F_i(w_{i,e}^{t})||^2] )
         + (3Lp^2K + L)\sum_{j=1}^t\mathcal{V}^j\mathbb{E}[|| \eta_i^{t-1}\sum_{e=1}^E\sum_{r=1}^{e} (m_i^{t-1})^r \nabla F_i(w_{i,e-r}^{t-1})||^2])
        \\ & \quad + 3p^2KL\mathbb{E}[||\eta_i^t\sum_{e=1}^E\sum_{r=1}^{e} (m_i^t)^r \nabla F_i(w_{i,e-r}^{t})||^2]\\
        & \leq [p^2KL(2E^2+3RQ(t))+ \frac{(3p^2K + 1)(\beta^2L+L)(E^2+RQ(t))}{2\beta^2(R+E^2) - 2\beta^2 - 2}]\beta^2G_c^2.
    \end{split}
\end{equation} 

\label{thm:our_model_convergence}    
\end{theorem}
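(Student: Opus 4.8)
The plan is to bound the squared distance $\mathbb{E}[\|w_g^t - w^*\|^2]$ by a contractive recursion in the global round index $t$ and then convert it to a loss-value bound through $L$-smoothness, mirroring the gradient-aggregation argument of Theorem~\ref{thm:our_gra_convergence}. Since $w^*$ is a global minimizer with $\nabla F(w^*)=0$, the upper-bound form of Assumption~\ref{ass:l-smooth} gives $F(w_g^t) - F^* \le \frac{L}{2}\|w_g^t - w^*\|^2$, so it suffices to control the parameter distance. The leading coefficient $(3LpK^2 + L)$ will then emerge as the smoothness factor $L$ multiplying the constant $(3pK^2+1)$ that accumulates when the averaged, triple-split distance is unrolled.

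First I would expand the model-aggregation rule $w_g^t = \sum_{i\in\mathcal{S}} p_i w_i^{\tau_i^t}$ and write $w_g^t - w^* = \sum_{i\in\mathcal{S}} p_i (w_i^{\tau_i^t} - w^*)$. Applying $\|\sum_{i\in\mathcal{S}} a_i\|^2 \le K\sum_{i\in\mathcal{S}}\|a_i\|^2$ (there are $K$ active updates) together with $p_i \le p$ yields $\|w_g^t - w^*\|^2 \le pK\sum_{i}p_i\|w_i^{\tau_i^t}-w^*\|^2$, which is the origin of the $K^2$ and $p$ factors. Next I would telescope the local update rule (Equation~\ref{equ:momentum_updation}) across the $E$ local epochs to obtain the three-term decomposition $w_i^{\tau_i^t} - w^* = (w_g^{\tau_i^t} - w^*) - \eta_i^{\tau_i^t}\sum_{e=1}^E \nabla F_{i,e}(w_{i,e-1}^{\tau_i^t}) - \eta_i^{\tau_i^t}\sum_{e=1}^E\sum_{r=1}^e (m_i^{\tau_i^t})^r\nabla F_{i,e-r}(w_{i,e-r-1}^{\tau_i^t})$, separating the stale starting point, the plain gradient sum, and the momentum sum.

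I would then split the squared norm of this decomposition via $\|a+b+c\|^2 \le 3(\|a\|^2+\|b\|^2+\|c\|^2)$, which is the source of the factor $3$ in $\mathcal{V}$, and handle each piece. The starting-point term $\|w_g^{\tau_i^t}-w^*\|^2$ feeds the recursion; the gradient sum is bounded by $E^2 G_c^2$ using $\|\sum_{e=1}^E \nabla F_{i,e}\|^2 \le E\sum_e \|\nabla F_{i,e}\|^2$ and Assumption~\ref{ass:bounded_gra}; and the momentum sum is controlled by the geometric-coefficient identity $\sum_{e=1}^E\sum_{r=1}^e\theta^r = R$, so that summing over the $Q(t)$ momentum-running clients produces the $RQ(t)$ dependence. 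Wherever a local gradient must be replaced by the ideal global gradient, I would invoke the heterogeneity bound (Assumption~\ref{ass:non-iid_degree}) to collect the $\delta^2$ contributions that make up $\mathcal{U}$. This produces a one-step inequality $\mathbb{E}[\|w_g^t-w^*\|^2] \le \mathcal{V}\,\mathbb{E}[\|w_g^{t-1}-w^*\|^2] + c_t$ with $\mathcal{V} = 3 - \frac{2\beta^2(R+E^2)}{\beta^2+1}$, where the prescribed window on $\beta$ is exactly what forces $\mathcal{V}\in(0,1)$. Unrolling $D_t \le \mathcal{V} D_{t-1}+c_t$ to $D_t \le \mathcal{V}^t D_0 + \sum_j \mathcal{V}^j c_{t-j}$ gives the $\mathcal{V}^t$ leading term and the geometric tails $\sum_j\mathcal{V}^j$ that assemble the exact $\mathcal{W}$ expression; a final application of Assumption~\ref{ass:bounded_gra} converts $\mathcal{W}$ into its stated closed-form upper bound, and multiplying through by $L$ produces the claimed inequality.

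The main obstacle will be closing the recursion in the simultaneous presence of staleness and momentum. The local models start from stale global states $w_g^{\tau_i^t}$ with $\tau_i^t \le t$, so the starting-point term does not directly reference round $t-1$, and I must argue that the accumulated stale contributions are absorbed into the geometric sum $\sum_j\mathcal{V}^j$ without breaking contractivity. At the same time the momentum couples gradients across local epochs, so the per-epoch descent estimates must be combined so that the momentum weights collapse cleanly to $R$ rather than an expression scaling with $E$ in a way that would violate the learning-rate window. Verifying that the descent contribution exactly offsets the factor-$3$ expansion to leave $\mathcal{V}<1$ under the stated $\beta$-range is the delicate algebraic core of the argument.
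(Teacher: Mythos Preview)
Your high-level roadmap is sensible, but there is a genuine gap at the step where you claim the contractive factor $\mathcal{V}=3-\frac{2\beta^2(R+E^2)}{\beta^2+1}$ arises. You propose to handle $\|w_i^{\tau_i^t}-w^*\|^2$ by the crude split $\|a+b+c\|^2\le 3(\|a\|^2+\|b\|^2+\|c\|^2)$, calling this ``the source of the factor $3$.'' But that inequality discards all cross terms, leaving only a coefficient $3$ (times $pK$) on the starting-point distance plus nonnegative remainders; there is no mechanism in what you have written that can subtract $\frac{2\beta^2(R+E^2)}{\beta^2+1}$ from it. Without convexity there is no automatic ``descent contribution'' in distance to $w^*$, so the offset you allude to in your final paragraph has no source, and the recursion you write down would diverge rather than contract.

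The paper closes this gap by a different decomposition. It introduces an auxiliary \emph{ideal global sequence} $\widetilde{w}_g^t$ (obtained by stepping $w_g^{t-1}$ with the ideal global gradient $\nabla F_g$) and routes the distance through it via two lemmas: one bounding $\mathbb{E}\|\widetilde{w}_g^t-w^*\|^2$ recursively, and one bounding $\mathbb{E}\|\widetilde{w}_g^t-w_g^t\|^2$. This detour has two payoffs your direct approach lacks. First, it inserts $\nabla F_g$ explicitly, so Assumption~\ref{ass:non-iid_degree} and Lemma~\ref{lemma:ideal_global_gradient} can be invoked to generate the $\delta^2$ terms in $\mathcal{U}$; in your outline it is unclear how the ideal global gradient ever appears. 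Second---and this is the key point---the paper expands $\|\widetilde{w}_g^t-w^*\|^2$ \emph{exactly}, retains the cross term $-2\langle \widetilde{w}_g^{\tau_i^{t-1}}-w^*,\ \text{gradient/momentum sums}\rangle$, and applies a reversed Cauchy--Schwarz (P\'olya--Szeg\H{o}) inequality to that inner product to extract the negative coefficient $-\frac{2\beta^2(R+E^2)}{\beta^2+1}$. That reversed inequality, not any norm-splitting bound, is what drives $\mathcal{V}$ below $1$. If you wish to avoid the auxiliary sequence, you must still keep and lower-bound the cross terms rather than discard them.
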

\begin{proof}
    See Appendix~\ref{sec:proof_theorem4}.
\end{proof}

\section{Different communication and aggregation strategies in FL}
\label{sec:full_pre}

\textbf{Synchronous vs. Semi-Asynchronous FL.}
The primary difference between Synchronous FL and Semi-Asynchronous FL (SAFL) is that the former is based on server-controlled coordinated training, while the latter involves fully autonomous client execution with conditional-triggered global aggregation. Specifically, in Synchronous FL, the server initiates each global round by selecting a subset of clients as activated clients. These activated clients are required to complete local training using the latest global model and submit their updates, while inactive clients remain idle until this round concludes. The server performs aggregation through designated aggregation strategies after receiving all activated clients' updates, then broadcasts the modified global model. In contrast, SAFL eliminates centralized coordination, where the server passively awaits client updates and triggers global aggregation immediately upon meeting specific triggering conditions (\eg sufficient accumulated updates~\cite{zhou2022towards,nguyen2022federated}). Each client autonomously executes local training at its own pace. Upon completing local training, clients immediately transmit local updates to the server and verify the availability of the new global model. If it is available, they synchronize their local models accordingly; otherwise, they persist with existing parameters while continuing local training.

\textbf{Gradient aggregation vs. Model aggregation.}
There are two foundational aggregation strategies in FL: gradient aggregation and model aggregation. The key difference between these two strategies is that the former enables the server to train the global model by aggregating gradients and then performing gradient descent on the original one, whereas the latter allows the server to construct the global model based on local model parameters directly. 

In Synchronous FL, due to the server-enforced synchronization requirement, each activated client is restricted to uploading a single update per global round, which is derived from training on the latest global model. Consequently, during the $(t+1)$-th global aggregation round, given the index set (without duplicates) of activated clients in this round as $\mathcal{S}$, each client $\forall i \in \mathcal{S}$ performs local training within $E$ local epochs according to $w_i^t = w_g^t - \eta_i \sum_{e=1}^E \nabla F_i(w_{i,e-1}^t;\mathcal{D}_i)$, where $w_{i,0}^t \triangleq w_g^t$.

Denoted $\eta_g$ as the global learning rate and $n \triangleq \sum_{i \in \mathcal{S}}n_i$, the gradient aggregation strategy can be shown as $w_g^{t+1} = w_g^t - \eta_g \sum_{i \in \mathcal{S}}\frac{n_i}{n}\nabla F_i(w_i^t)$, where $\nabla F_i(w_i^t) \triangleq \sum_{e=1}^E \nabla F_i(w_{i,e-1}^t;\mathcal{D}_i)$.

\begin{remark}
    In FedSGD~\cite{mcmahan2017communication}, the local epoch parameter $E$ is typically set to 1. When $E>1$, this aggregation paradigm is commonly referred to as \textit{model difference aggregation}~\cite{nguyen2022federated,wu2022kafl,xie2023asynchronous}, where clients transmit parameter differences calculated through multiple local epochs as $\Delta w_i^t = w_i^t-w_g^t = \eta_i\sum_{e=1}^E \nabla F_i(w_{i,e-1}^t;\mathcal{D}_i)$. The server then performs aggregation using $w_g^{t+1} = w_g^t - \sum_{i \in \mathcal{S}}\frac{n_i}{n}\Delta w_i^t$. For conciseness and without loss of generality, we collectively refer to scenarios with $E\geq1$ as the gradient aggregation strategy. This unified terminology is justified since both FedSGD $(E=1)$ and model difference aggregation $(E>1)$ fundamentally utilize local gradient information $\nabla F_i(w_{i,e-1}^t;\mathcal{D}_i)$ during global aggregation processes.
\end{remark}

In contrast, the model aggregation strategy requires each activated client to transmit its local model parameters to the server instead of the local gradients. The server then uses $w_g^{t+1} = \sum_{i \in \mathcal{S}}\frac{n_i}{n}w_i^t$ to form the new global model.

In SAFL, due to the autonomously local training across all clients, updates from clients with constrained computational resources may exhibit parameter staleness. Specifically, during the $(t+1)$-th global round, client $C_i$ might complete its local training using the global model from the $\tau_i^t$-th round as $w_i^{t} = w_g^{\tau_i^t} - \eta_i \sum_{e=1}^E \nabla F_i(w_{i,e-1}^{t};\mathcal{D}_i)$, where $w_{i,0}^{t} \triangleq w_g^{\tau_i^t}$.

Therefore, given the index list (possibly with duplicates) of the clients participating in aggregation as $\mathcal{S}$ at the $(t+1)$-th global round, the gradient aggregation strategy and the model aggregation strategy in SAFL can be denoted as $\quad w_g^{t+1} = w_g^{t} - \eta_g \sum_{i \in \mathcal{S}}\frac{n_i}{n}\nabla F_i(w_i^{\tau_i^t})$ and $w_g^{t+1} = \sum_{i \in \mathcal{S}}\frac{n_i}{n}w_i^{\tau_i^t}$, respectively, where $\nabla F_i(w_i^{\tau_i^t}) \triangleq \sum_{e=1}^E \nabla F_i(w_{i,e-1}^{t};\mathcal{D}_i)$ with $w_{i,0}^{t} = w_g^{\tau_i^t}$.

\section{Discussion}
\label{sec:discussion}

\subsection{Superiority of SAFL.} The SAFL framework introduces an asynchronous federated learning paradigm that eliminates the necessity for synchronization among participants, enabling autonomous execution of local model updates and seamless parameter/gradient transmission to the server. This architecture fundamentally differs from conventional Synchronous FL implementations, where frequent idling occurs for inactivated clients to await completion of all activated clients' local training during synchronized global epoch, as well as from Asynchronous FL's aggregation mechanism that processes individual updates immediately upon reception. SAFL's superiority emerges through its conditional aggregation protocol, which employs dynamic triggering criteria based on predefined system conditions (e.g., resource availability thresholds, update quality metrics, or temporal constraints) to optimize both computational efficiency and model convergence characteristics. The framework's trigger-driven aggregation mechanism concurrently addresses Synchronous FL's inherent resource under-utilization during prolonged waiting periods and Asynchronous FL's susceptibility to update volatility caused by premature aggregations, thereby achieving enhanced training throughput while maintaining model stability in dynamic network environments.

\subsection{Scalability of \tool.} \tool demonstrates strong scalability to large-scale networks comprising thousands of clients, as it introduces only minimal overhead. From the client-side perspective, each client performs only one additional similarity calculation and two numerical comparisons compared to baseline methods, with no inter-client communication required. Therefore, increasing the number of clients does not compromise client-side scalability. From the communication perspective, \tool adds a 1-bit signal and one floating-point value to the upstream channel (client $\to$ server), and three floating-point values to the downstream channel (server $\to$ client). These additions constitute only a small fraction of the total transmitted data, which typically includes gradients or model parameters. Hence, the communication overhead introduced by \tool remains negligible. The additional metadata increases the total communication volume by less than 1\%, consisting of a 1-bit signal and one float uplink (4 bytes), and three floats downlink (12 bytes). For a model such as ResNet-18 ($\sim$43.7 MB), this represents an increase of only 0.000037\% over FedAvg/FedSGD. This overhead requires no additional synchronization steps and scales linearly with the number of clients, yet it is substantially outweighed by the significant improvements in accuracy and convergence offered by \tool. From the server-side perspective, the server maintains only a simple state table containing two integer–float key–value pairs compared to baseline approaches. Operations on this table execute in $O(1)$ time, ensuring that server-side scalability remains unaffected as the number of clients grows. Thus, even with large-scale client participation in federated tasks, \tool maintains efficient performance on the server side.

\subsection{System Efficiency Analysis}

This section details the efficiency analysis of \tool's each module and discusses the potential optimizations to enhance the computational and communication efficiency.

\subsubsection{Breakdown Analysis}

Table~\ref{tab:overhead_breakdown} provides a comprehensive breakdown of the computational latency, communication overhead, and memory footprint induced by \tool's core mechanisms. These measurements represent averages over 400 global rounds, with consistent patterns observed across various experimental scenarios and client populations. The computation overhead remains minimal, with similarity scoring and feedback processing introducing only about 4s of additional latency per global round. In terms of communication, each client transmits merely 4B uplink (similarity score) and 1‑bit feedback, while the server broadcasts 12B downlink (averaged thresholds). Memory footprint is also negligible: client-side storage for scores and feedback totals under 5B, and server-side state tables require only about 7.8KB for 100 clients.

\begin{table}[h]
\centering
\small
\setlength{\tabcolsep}{2.5pt}
\caption{Computational latency, communication overhead, and memory footprint of \tool components (averaged over 400 global rounds).}
\label{tab:overhead_breakdown}
\begin{tabular}{lccc}
\hline
\textbf{Operation} & \makecell{Computational \\ Cost} & \makecell{Communication \\ Cost} & \makecell{Memory \\ Footprint} \\
\hline
Calculating pseudo global gradient (\modest) & 13.73s & 0 & 42.8MB \\
Calculating similarity scores (\modada) & 2.23s & 0 & 4B \\
Feedback signal calculation (\modada) & 1.97s & 0 & 1 bit \\
Feedback signal communications (\modada $\to$ \modupd) & 0 & Uplink: 1 bit & 0 \\
Updating aggregation status table (\modupd) & 0.27s & 0 & $\sim$7.8KB \\
Status table communications & 0 & \begin{tabular}{@{}c@{}}Uplink: $\sim$4B \\ Downlink: $\sim$12B\end{tabular} & 0 \\
\hline
\end{tabular}
\end{table}

\subsubsection{Computational Overhead of Divide-and-Conquer Strategy}
We conducted disaggregated experiments to rigorously quantify the detailed additional runtime introduced by \tool's divide-and-conquer strategy. The procedure is partitioned into three critical phases: pseudo-gradient computation in \modest; client classification in \modada; and state-table updating in \modupd. Table~\ref{tab:runtime_breakdown} shows the average time consumed per global round for each step in \tool when training ResNet-18 on CIFAR-10, where ``Average ratio for \modest'' represents the ratio of the time spent calculating the pseudo-gradient in \modest to the average time required to complete one global round.

\begin{table}[h]
\centering
\scriptsize
\setlength{\tabcolsep}{2.5pt}
\caption{Computational overhead breakdown of \tool components when training ResNet-18 on CIFAR-10 under different distributions.}
\label{tab:runtime_breakdown}
\begin{tabular}{cc|c|cc|cc|cc}
\hline
\# Client & Data Dist. & Total Time (s) & \modest Time (s) & \modest Ratio & \modada Time (s) & \modada Ratio & \modupd Time (s) & \modupd Ratio \\
\hline
\multirow{3}{*}{50} & x=0.1 & 44.98 & 6.69 & 14.87\% & 2.17 & 4.82\% & 0.13 & 0.29\% \\
& x=0.5 & 46.59 & 7.43 & 15.94\% & 2.89 & 6.20\% & 0.14 & 0.30\% \\
& x=1 & 43.79 & 7.02 & 16.03\% & 2.33 & 5.32\% & 0.09 & 0.21\% \\
\hline
\multirow{3}{*}{100} & x=0.1 & 80.97 & 14.14 & 17.46\% & 4.51 & 5.57\% & 0.23 & 0.28\% \\
& x=0.5 & 83.41 & 13.93 & 16.70\% & 4.22 & 5.06\% & 0.26 & 0.31\% \\
& x=1 & 82.77 & 13.11 & 15.84\% & 3.96 & 4.78\% & 0.33 & 0.39\% \\
\hline
\multirow{3}{*}{200} & x=0.1 & 165.83 & 22.33 & 13.47\% & 8.11 & 4.89\% & 0.49 & 0.30\% \\
& x=0.5 & 152.05 & 27.80 & 18.28\% & 9.51 & 6.25\% & 0.47 & 0.31\% \\
& x=1 & 153.93 & 26.69 & 17.34\% & 8.97 & 5.82\% & 0.45 & 0.29\% \\
\hline
\multicolumn{2}{c}{\textbf{Average}} & - & - & \textbf{16.21\%} & - & \textbf{5.41\%} & - & \textbf{0.30\%} \\
\hline
\end{tabular}
\end{table}

\subsubsection{Potential Optimization}
\tool employs GPU-accelerated training with streaming aggregation to handle large models efficiently. This design overlaps communication and computation, boosting throughput for resource-constrained clients. Computationally, optimizations are available for the client-state table (caching scores, low-rank approximations) and the primary overhead of pseudo-gradient computation. Staggered client reclassification (\ie reducing the frequency of client reclassification every 5 rounds) reduces overhead by 63.6–72.7\% (1.4–5.2\% accuracy drop), while stratified sampling of clients (\ie 20\% of clients per round re-evaluates the role) reduces it by 53.2–57.5\% (2.2–4.1\% performance loss). These strategies offer configurable tradeoffs within \tool's modular design.

\subsection{Limitations of \tool.} Although \tool mitigates both the instability of gradient-based methods and the suboptimal convergence of model-based approaches, it introduces a few oscillations in model aggregation mode. Meanwhile, we introduce three hyperparameters $a,m_0,k$ in SAFL to propose \tool. Despite the discussions in Section~\ref{sec:hyperparameter}, this raises more difficulties in implementation and reproduction. A potential future work involves the automatic adjustment of these hyperparameters, therefore enhancing the flexibility and accessibility of the \tool.

\textbf{Broader Impact.} This work proposed \tool, a comprehensive and versatile optimization tool that enhances the performance of gradient aggregation and model aggregation strategies across multiple metrics. Additionally, we conduct a theoretical analysis of \tool to demonstrate its superiority. We have identified no potential ethical impacts or noticeable negative social impacts associated with this work. On the contrary, \tool serves as a compatible framework capable of optimizing both aggregation strategies in diverse scenarios, leading to significant performance improvements. This contribution advances the field by offering a practical and theoretically grounded solution for enhancing SAFL systems.

\section{Evaluation details \& More evaluation results}

\label{sec:exp_detail}
\subsection{Details of the dataset}
\label{sec:app_dataset_detail}

\textbf{CIFAR-10~\cite{krizhevsky2009learning}.} CIFAR-10 is a widely used benchmark dataset for image classification tasks in machine learning. This dataset consists of ten labeled classes of images. Each class corresponds to 6,000 images, with 5,000 training samples and 1,000 test samples. In our FL scenario, we pursued benchmark~\cite{JMLR:v24:22-0440}, assuming that all participant data distributions follow the Hetero-Dirichlet distribution $Dir_k(x)$ based on categories, which can be represented by Equation~\ref{equ:hd_distribute}. Each participant will split their local dataset into training and validation sets with an 8:2 ratio.
\begin{equation}
    Dir_k(x) = \frac{\Gamma(\sum_{i=1}^N x_i)}{\Pi_{i=1}^N\Gamma(x_i)}\Pi_{i=1}^N\mathbb{P}_{k,i}^{x_i-1},
    \label{equ:hd_distribute}
\end{equation}
where $k$ represents the $k$-th client, $x$ is a parameter controlling the distribution, $\mathbb{P}_{k,i}$ represents the probability of having data from the $i$-th class and $\Gamma(x)$ is the Gamma-function. We primarily considered three levels of data distribution heterogeneity in our experiments: $x = 0.1$, $0.5$, and $1$.

\textbf{Shakespeare~\cite{mcmahan2017communication}.} The Shakespeare dataset is a text corpus used for natural language processing tasks, comprising excerpts from \textit{The Complete Works of William Shakespeare}. We embedded 80 unique characters, consisting of 26 uppercase English letters, 26 lowercase English letters, 10 numeric digits, and 18 special characters, into corresponding labels. Notably, we referenced benchmark~\cite{caldas2018leaf} and modeled the dialogues of distinct roles in various scripts within Shakespeare as a heterogeneous distribution. Each participant was assigned data drawn from dialogue lines of different characters across diverse scripts, and the roles used by different participants do not overlap. Each participant will split their local dataset into training and validation sets with a 9:1 ratio. We primarily considered two levels of data distribution heterogeneity in our experiments: each client has $2$ roles (i.e., $R=200$) and $6$ roles (i.e., $R=600$).

\textbf{UCI Adult (Census Income)~\cite{shokri2017membership}.} We refer to paper~\cite{shokri2017membership} to utilize the US Adult Income Dataset, a real-world benchmark dataset, to evaluate the efficacy of each algorithm in a practical task. The dataset comprises 48,842 records and 14 attributes, including age, gender, education level, marital status, occupation, working hours, \etc. We primarily predict whether an individual's annual income exceeds \$50,000 based on demographic attributes from the census, assuming that each participant is associated with specific demographic characteristics (gender or ethnicity) corresponding to their income data. The distribution of data quantity among clients with the same characteristics follows a log-normal distribution $Log-\mathcal{N}(0,\sigma^2)$. Each participant will split their local dataset into training and validation sets with an 8:2 ratio. For the heterogeneous distribution based on gender, we primarily considered $\sigma = 1$ in our experiments. For the heterogeneous distribution based on ethnicity, we primarily considered $\sigma = 0.9$ in our experiments.

\subsection{Details of the models}

\textbf{ResNet-18~\cite{he2016deep}.} We employ the Residual Network (ResNet) architecture, a class of neural networks introduced by He \etal~\cite{he2016deep}, which incorporates residual blocks to alleviate vanishing gradients. Specifically, we utilize ResNet-18, a compact variant within the ResNet family, comprising 18 layers and four residual blocks. Each residual block consists of two convolutional layers with 3x3 kernels and a stride of 1, enabling efficient feature extraction and propagation through the network.

\textbf{LSTM~\cite{hochreiter1997long}.} We employ the Long Short-Term Memory (LSTM) network architecture, a subtype of Recurrent Neural Network (RNN) optimized for processing and learning sequential data. In our experimental setup, we employ a straightforward LSTM model comprising an embedding layer, an LSTM recurrent layer, and a fully connected dense layer.

\textbf{FCN.} We employ a Full Connection Neural Network (FCN), a canonical deep learning architecture. The FCN utilized in this paper comprises two fully connected dense layers, each enabled by the ReLU activation function to introduce non-linearity and improve feature extraction capabilities. Additionally, we incorporate a dropout layer to mitigate overfitting and enhance the robustness of our model.

\subsection{Details of hyperparameters and resource distributions}
To ensure reproducibility, we provide a comprehensive summary of the default hyperparameter configurations for \tool, which are used across most experiments in this study. The default settings in our experiments are as follows: number of participants $N = 100$, initial local learning rate $\eta_0 = 0.1$, learning rate bounds $\alpha = 0.001$ and $\beta = 0.2$, learning rate adaptation rate $a = 0.002$, initial momentum $m_0 = 0.1$, momentum adaptation parameter $k = 0.2$, global epochs $T = 400$, local epochs $E = 2$, minimum number of updates for aggregation $K = 10$, gradient clipping threshold $G_c = 20$, and momentum clipping threshold $\theta = 0.9$. \tool introduces three additional hyperparameters—$a$, $m_0$, and $k$—which is comparable to the number used in recent state-of-the-art methods~\cite{chai2021fedat, zhou2022towards}. These parameters control the adaptation rate of the learning rate, the initial momentum value, and the momentum adjustment rate, respectively. As illustrated in Figure~\ref{fig:case_study_parameter}, the method demonstrates stable performance across wide ranges of these hyperparameters ($a \in [0.001,0.005]$, $m_0 \in [0.01,0.1]$, $k \in [0.001,0.2]$), substantially reducing the need for extensive tuning. This behavior aligns with common federated learning practices and improves the practical usability of \tool.

Notice that the simulation of poor communication quality and weak training performance is independent of the dataset and model selection/size. Instead, it is governed by actual client-specific configurations that determine resource demands and execution time. Our base experiments emulate SAFL heterogeneity with 50× speed differentials between the fastest and slowest participants, later extended it through additional tests under 1:20 and 1:100 resource ratios to comprehensively validate \tool's robustness across diverse environments.

\subsection{Details of baselines}
\label{sec:app_baseline_detail}
In this section, we will introduce the details of the baseline algorithms.

\textbf{FedAvg and FedSGD~\cite{mcmahan2017communication}.} These baseline algorithms are the basic aggregation algorithms implemented into the Synchronous and Semi-Asynchronous FL framework, without any additional optimization or improvement.

\textbf{SAFA~\cite{wu2020safa}.} This baseline algorithm employs a caching mechanism to store updated models for each client. At the beginning of each aggregation round, the server updates the cache by incorporating the frequency of local update data uploads from each client. Subsequently, the server performs an aggregation operation on all the cached models and updates the cache once again after completing the aggregation process, taking into account the usage pattern of the latest local updates.

\textbf{FedAT~\cite{chai2021fedat}.} This baseline algorithm is a hierarchical SAFL framework, which synchronizes local model parameters within each layer and asynchronously updates the global model across layers. FedAT proposes a new weighted aggregation heuristic optimization target, where FL servers update different tiers' aggregation weights based on the statistics of different tiers' model aggregation frequencies, thereby balancing different tiers' model parameters.

\textbf{M-step-FedAsync~\cite{wu2022kafl}.} This baseline algorithm introduces a novel metric, model deviation degree, which is computed as the inner product between local model parameters and global model parameters. This metric serves as a key component in the aggregation process, where it is used in conjunction with local update frequency to determine the weights assigned to different model parameters during aggregation.

\textbf{FedBuff~\cite{nguyen2022federated}.} 
This baseline algorithm employs a differential aggregation method, a variant of gradient aggregation, which assigns weights to each updated data. If the staleness of the difference update is high, the corresponding weight becomes smaller.

\textbf{WKAFL~\cite{zhou2022towards}.} This baseline algorithm extracts effective information from outdated gradients by leveraging recently updated gradients. It calculates the weighted aggregated parameters by calculating the cosine value between the unbiased gradient and the locally updated gradient to accelerate aggregation and stabilize the convergence process. 

\textbf{FedAC~\cite{zang2024efficient}.} This baseline algorithm uses temporal gradient evaluation to assess client weights. It employs proactive weighted momentum for adaptive server updates, incorporating fine-grained gradient correction functionality designed by SCAFFOLD~\cite{karimireddy2020scaffold} to address the issue of client drift caused by heterogeneous data.

\textbf{DeFedAvg~\cite{wang2024achieving}.} In non-IID scenarios, DeFedAvg employs uniform sampling with replacement to select clients. Each client continuously receives the latest global model from the server and overwrites its reception buffer with the incoming model. Regardless of participation in global iterations, every client continuously performs local training. The server accepts delayed updates without waiting for participating clients to complete training based on the latest global model.

\textbf{FADAS~\cite{wang2024fadas}.} This algorithm treats the differences in client model updates as pseudo-gradients and employs an Adam-like update scheme to adjust the global model. Building upon FedBuff's local asynchronous training mechanism, FADAS retains the design philosophy of concurrency and buffer size to enable flexible control over the number of active clients and the frequency of global model updates.

\textbf{C$A^2$FL~\cite{wang2023tackling}.} This study confirms that asynchronous delay effects are amplified by highly non-IID data distributions and addresses the resulting client drift issue by employing a server-side mechanism that retains the most recent updates from clients and reuses these cached updates to calibrate the global model.

\subsection{More experimental results}
\label{sec:appendix_results}

\begin{wrapfigure}{r}{0.3\textwidth}
    \centering
	\includegraphics[width=0.3\textwidth]{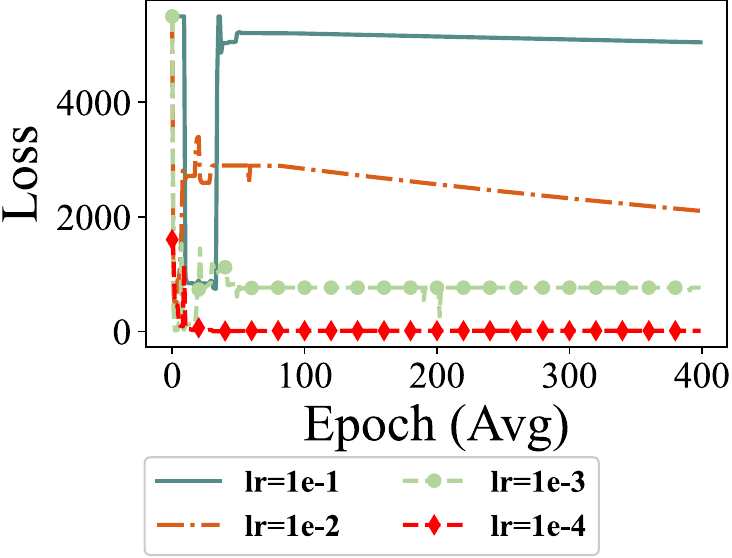}
	\caption{The impact of learning rate on WKAFL's loss in an RWD task (Gender).}
	\label{fig_wkafl_lr}
\end{wrapfigure}

Table~\ref{tab:performance-convergence-stability} shows the discrepancy ($T_s-T_f$) results, representing the convergence stability. Table~\ref{tab:multi_full} shows the results of \tool and two foundational algorithms, FedAvg and FedSGD, under different SAFL system settings. Table~\ref{tab:learing_rate} presents the average experimental results of FedSGD, FedAvg, and \tool under various learning rates. Tables~\ref{tab:a},~\ref{tab:m}, and \ref{tab:k} present the average results of \tool under various settings of hyperparameter $a$, $m_0$, and $k$, respectively.

In the RWD task (Figure~\ref{fig:rwd_loss}), WKAFL exhibits an extreme overfitting loss function curve, indicating poor adaptability of WKAFL to the RWD task when the local learning rate is set to 0.1. The adaptive learning rate adjustment module of WKAFL also contributes to the overfitting of the learning task. Figure~\ref{fig_wkafl_lr} shows the experiment results of WKAFL under various local learning rates. We can see that when the local learning rate is reduced to 0.001, WKAFL no longer exhibits overfitting behavior.

Figure~\ref{fig:appendix_acc-loss} shows the loss of  \tool and baselines under other tasks, which is a supplement of Figure~\ref{fig:acc-loss}. Figures~\ref{fig:time_latency_avg} and~\ref{fig:time_latency_sgd} show the comparison of the accuracy and time latency between \tool and each baseline within model aggregation and gradient aggregation strategies, respectively.

Oscillation is a metric to measure the occurrences when the accuracy of the global model in round $t$ is below that of the previous round $t-1$ by a specific threshold~\cite{li2024experimental}. Figure~\ref{fig:ots} shows the number of total oscillations under different thresholds when using ResNet-18 to train CIFAR-10.

Figure~\ref{fig_breakdown_cv} depicts the performance comparison of \tool between with and without momentum terms. Figure~\ref{fig_feedback_cv_full} depicts the performance comparison of \tool between with and without feedback mechanisms.

\begin{figure}[!htp]
    \centering
    \hspace{4mm}
    \subfloat{\includegraphics[width=0.45\linewidth]{pic/svg_ret/avgloss_both_1.pdf}}
    \hspace{4mm}
    \subfloat{\includegraphics[width=0.45\linewidth]{pic/svg_ret/sgdloss_both_1.pdf}}
    \setcounter{subfigure}{0}
    \subfloat[CV Task ($x=0.1$)]{\includegraphics[width=0.22\linewidth]{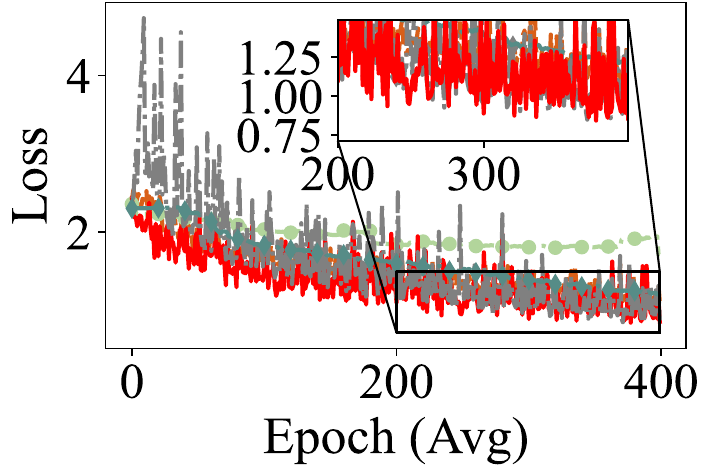}}
    \hspace{1mm}
    \subfloat[CV Task ($x=1$)]{\includegraphics[width=0.22\linewidth]{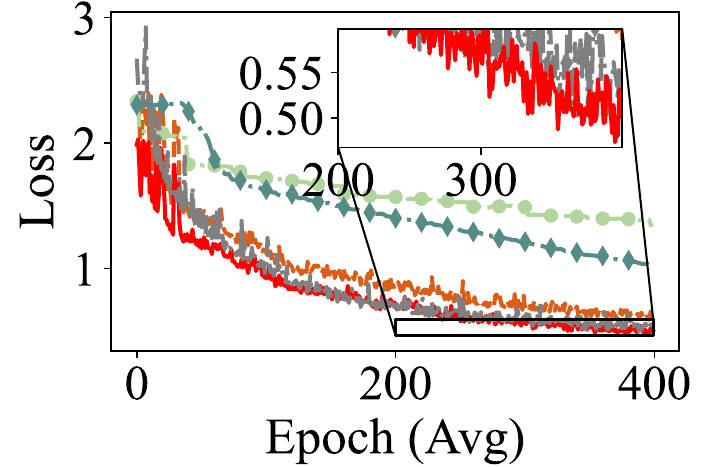}}
    \hspace{1mm}
    \setcounter{subfigure}{4}
    \subfloat[CV Task ($x=0.1$)]{\includegraphics[width=0.22\linewidth]{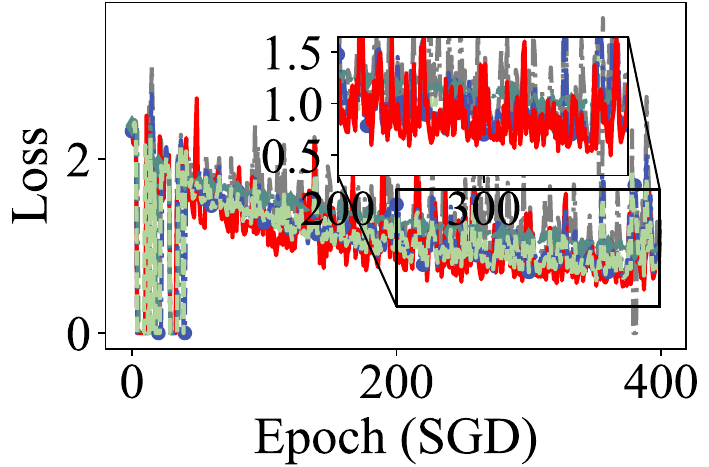}}
    \hspace{1mm}
    \subfloat[CV Task ($x=1$)]{\includegraphics[width=0.22\linewidth]{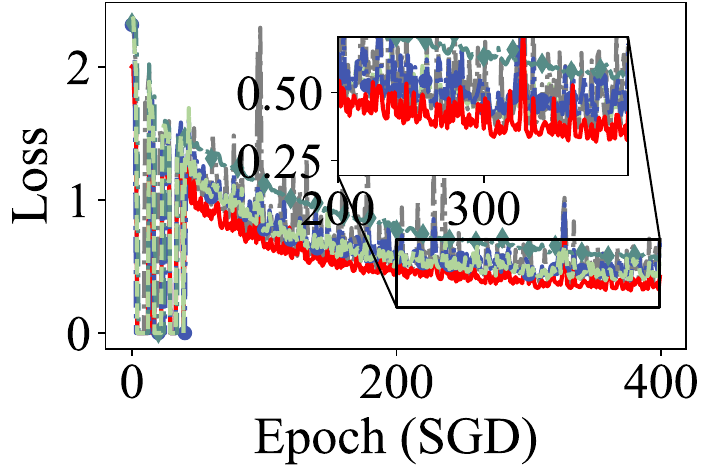}}
    \hspace{1mm}
    \setcounter{subfigure}{2}
    \subfloat[NLP Task ($R=200$)]{\includegraphics[width=0.22\linewidth]{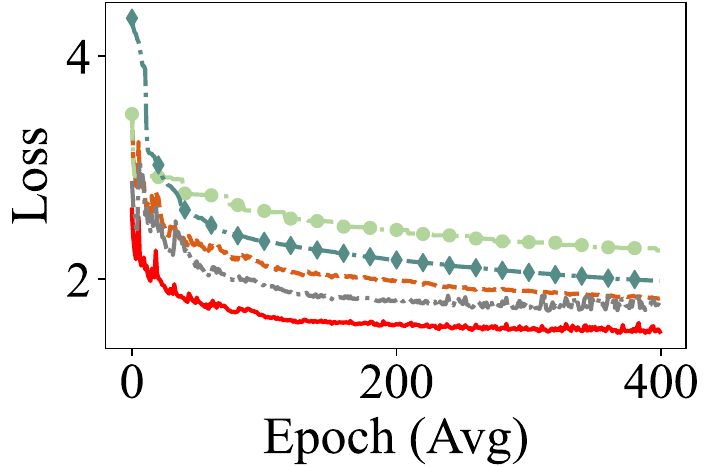}}
    \hspace{-1mm}
    \subfloat[Real-world Task (Ethnicity)]{\includegraphics[width=0.235\linewidth]{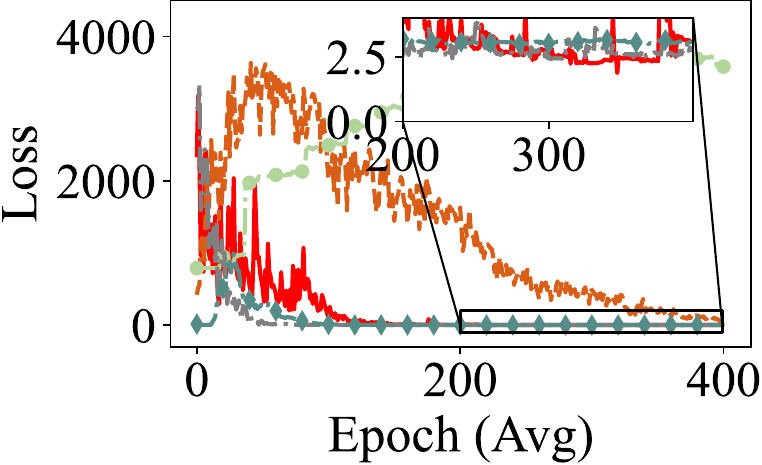}}
    \hspace{1mm}
    \setcounter{subfigure}{6}
    \subfloat[NLP Task ($R=200$)]{\includegraphics[width=0.22\linewidth]{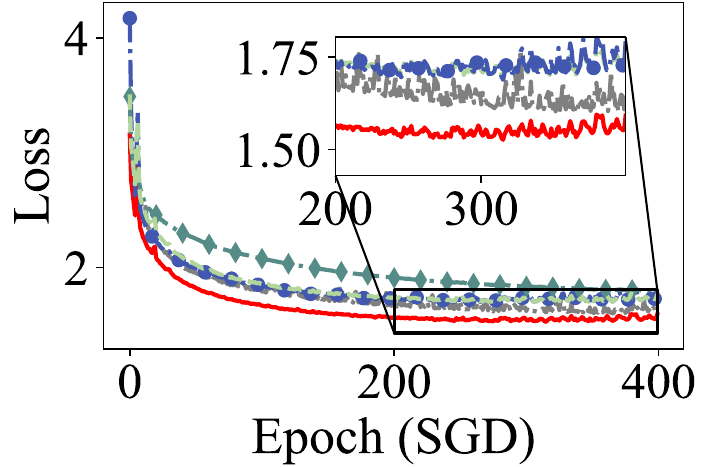}}
    \hspace{-1mm}
    \subfloat[Real-world Task (Ethnicity)]{\includegraphics[width=0.23\linewidth]{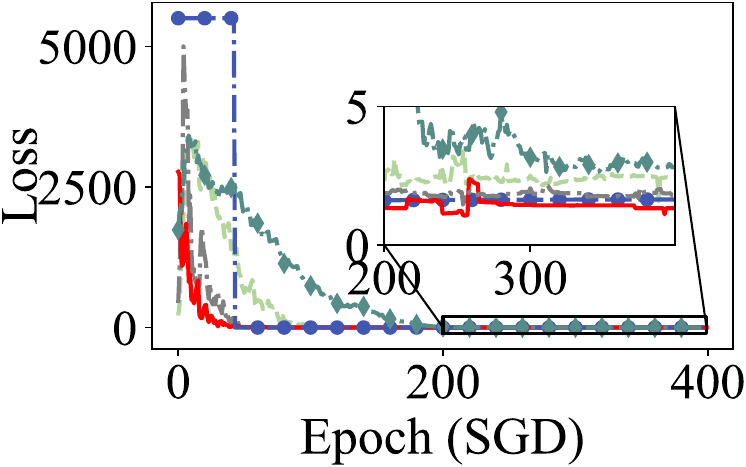}}
    \adjustspace{-4px}
    \caption{Loss of \tool and the baselines under other tasks. The left four subfigures ((a)-(d)) are based on model aggregation, while the right four ((e)-(h)) are based on gradient aggregation.}
    \label{fig:appendix_acc-loss}
    \adjustspace{-4px}
\end{figure}

\begin{figure}[!htp]
    \centering
    \subfloat[\tool-Avg vs. FedAvg ($x=0.5$)]{\includegraphics[width=0.44\linewidth]{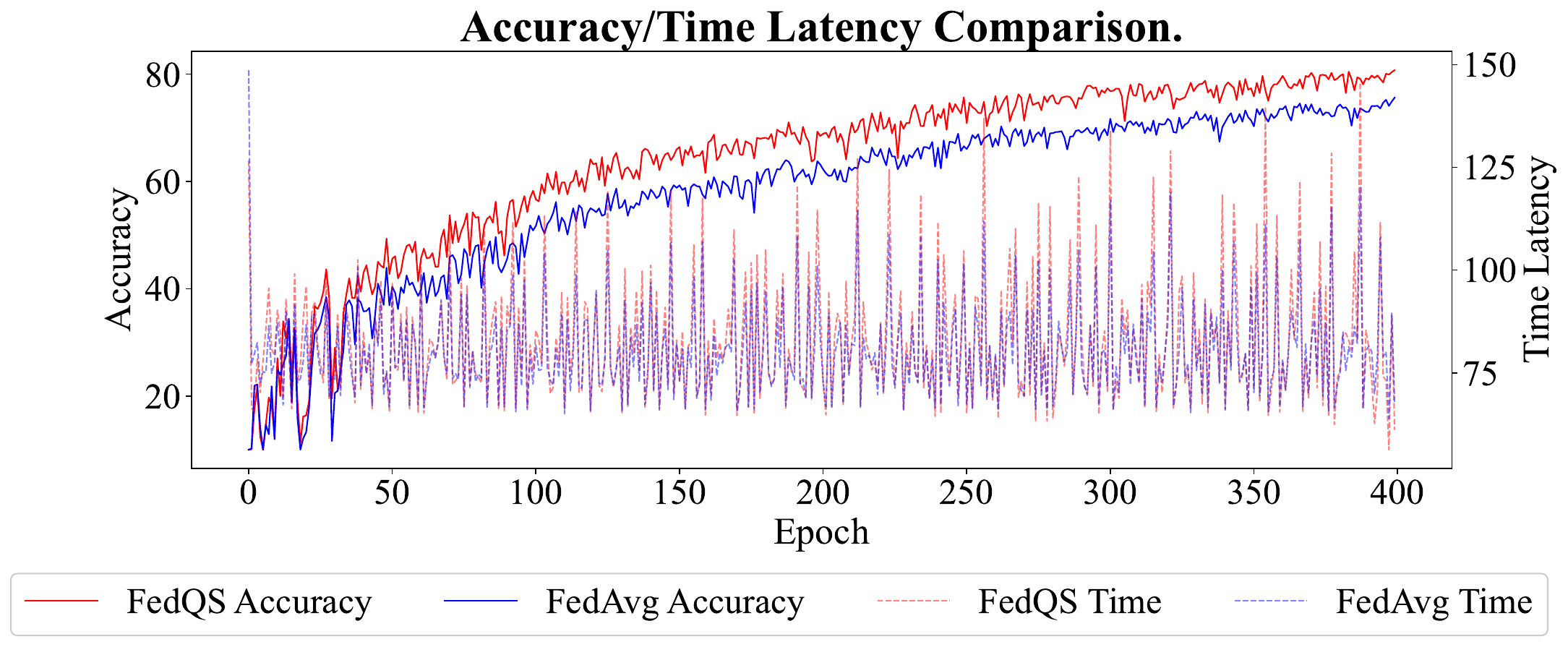}}
    \hspace{1mm}
    \subfloat[\tool-Avg vs. SAFA ($x=0.5$)]{\includegraphics[width=0.44\linewidth]{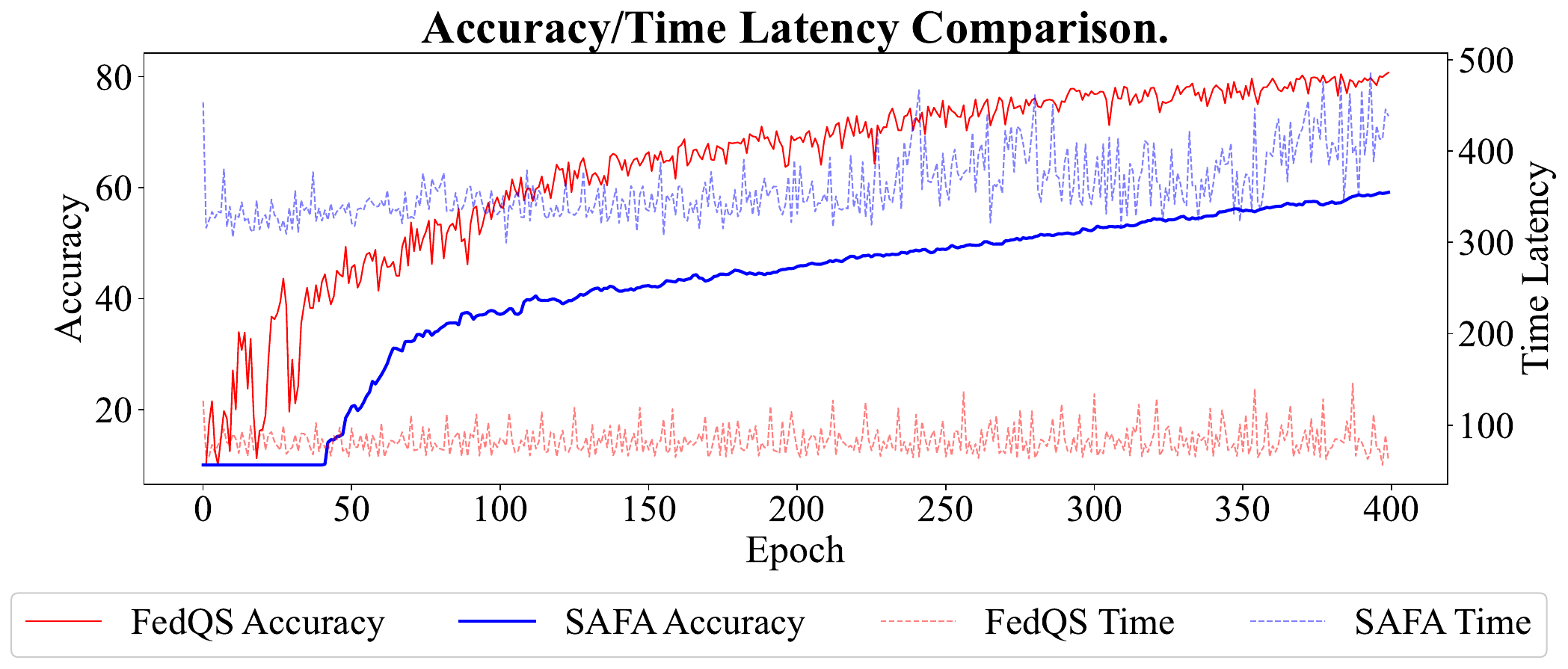}}
    \hspace{1mm}
    \subfloat[\tool-Avg vs. FedAT ($x=0.5$)]{\includegraphics[width=0.44\linewidth]{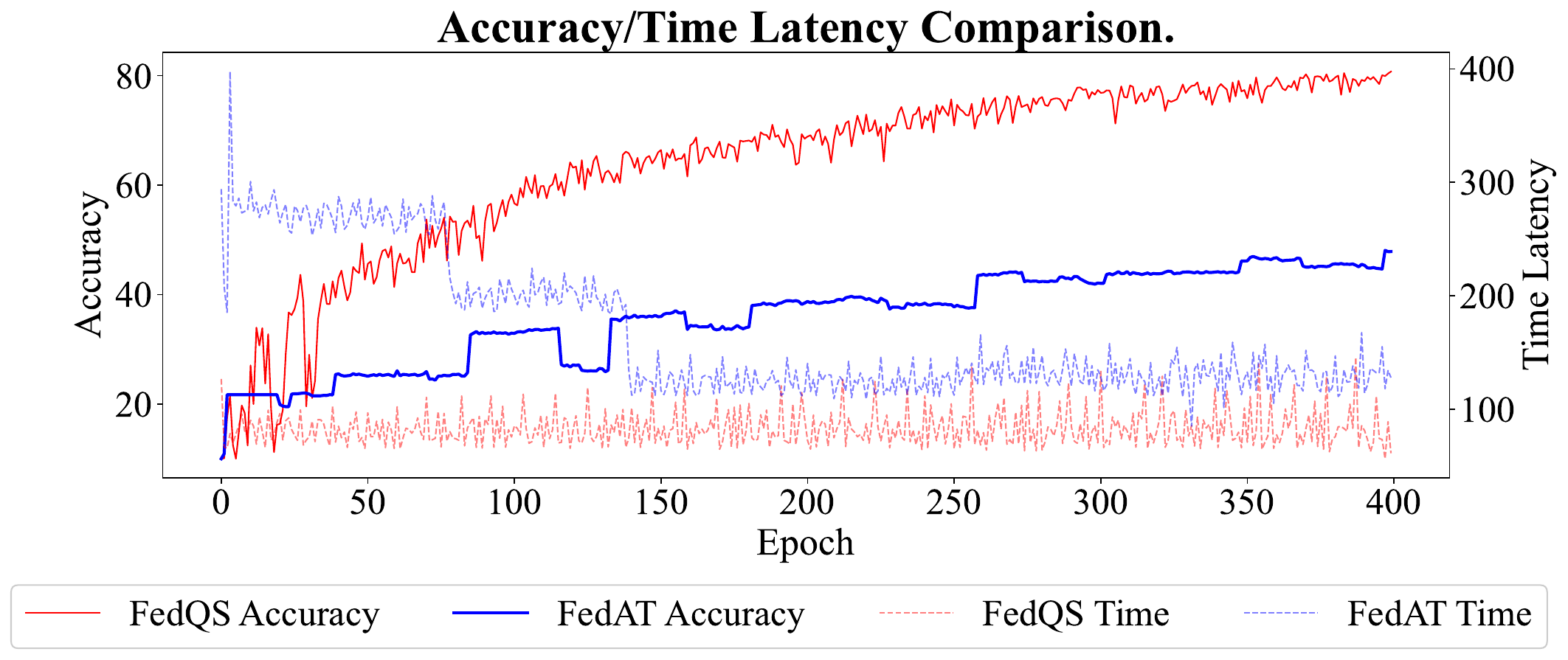}}
    \hspace{1mm}
    \subfloat[\tool-Avg vs. M-step ($x=0.5$)]{\includegraphics[width=0.44\linewidth]{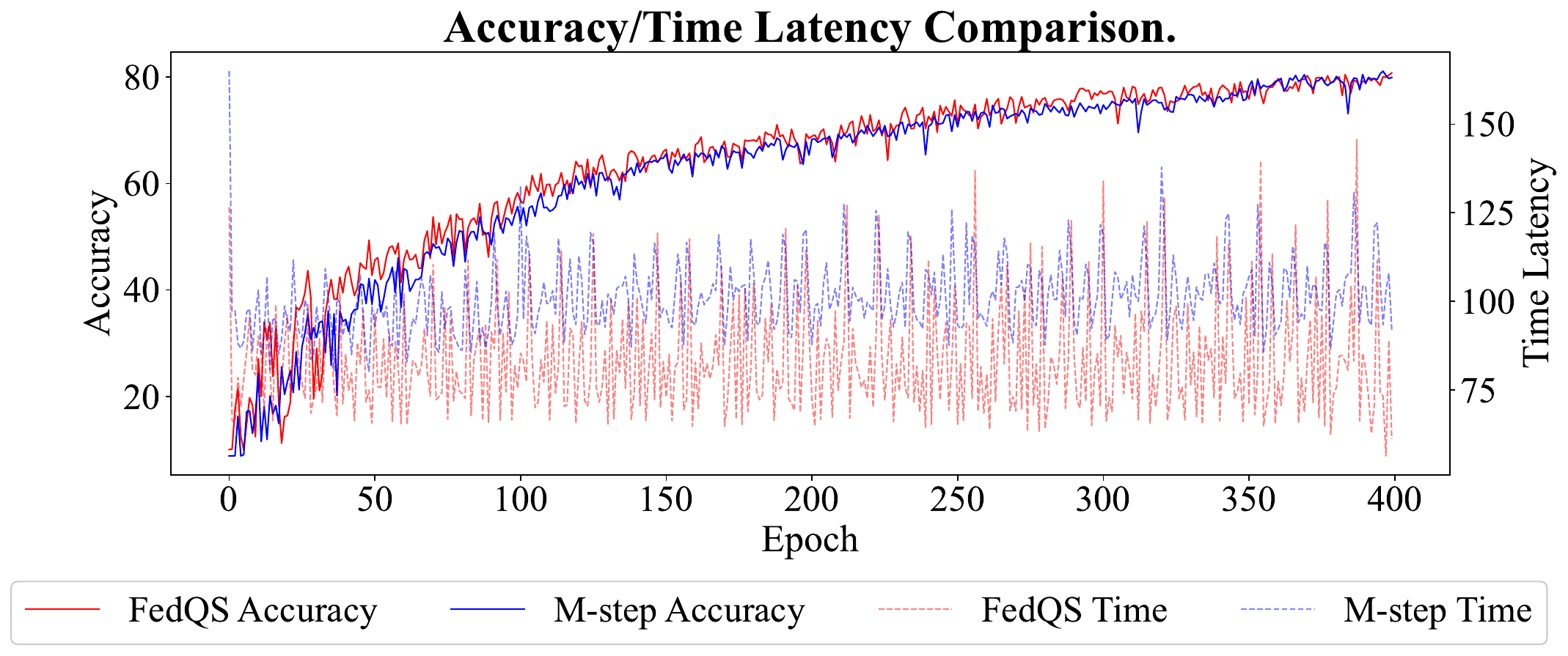}}
    \hspace{1mm}
    \subfloat[\tool-Avg vs. FedAvg ($x=1$)]{\includegraphics[width=0.44\linewidth]{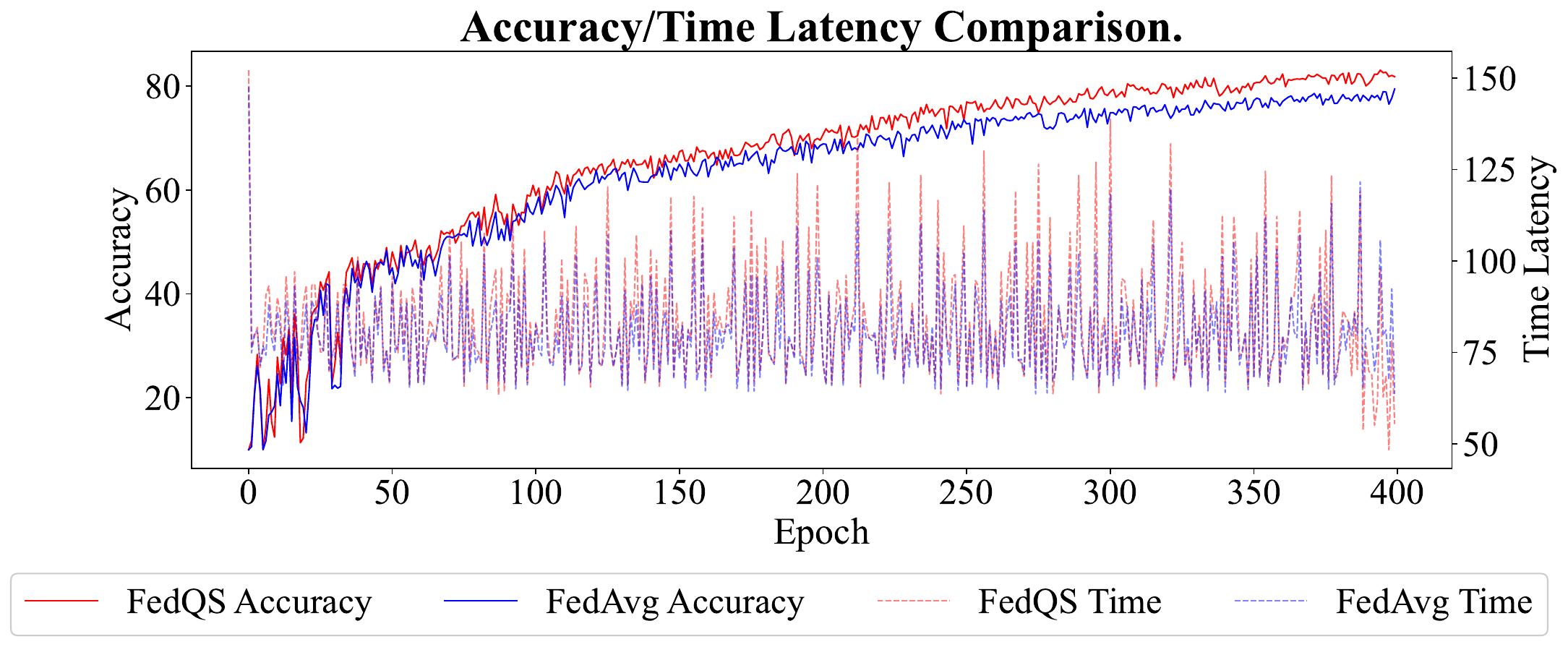}}
    \hspace{1mm}
    \subfloat[\tool-Avg vs. SAFA ($x=1$)]{\includegraphics[width=0.44\linewidth]{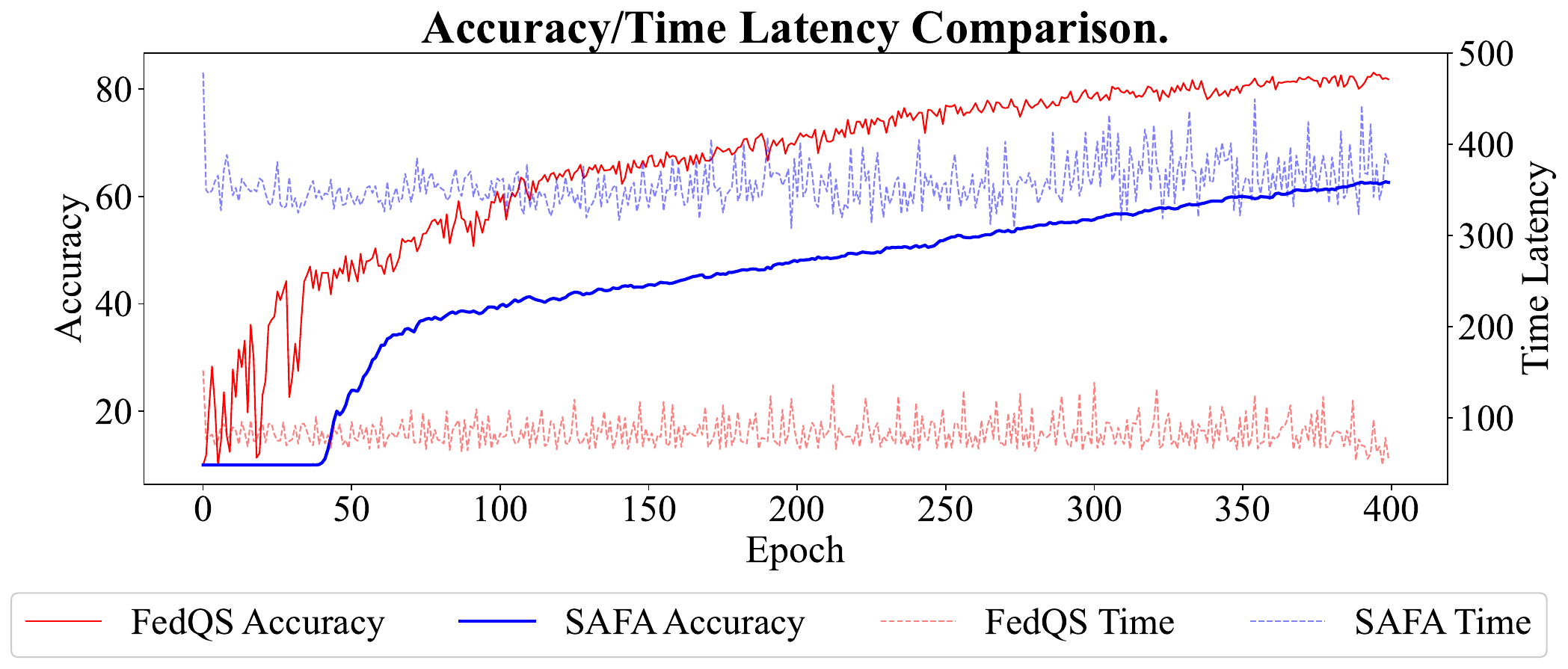}}
    \hspace{1mm}
    \subfloat[\tool-Avg vs. FedAT ($x=1$)]{\includegraphics[width=0.44\linewidth]{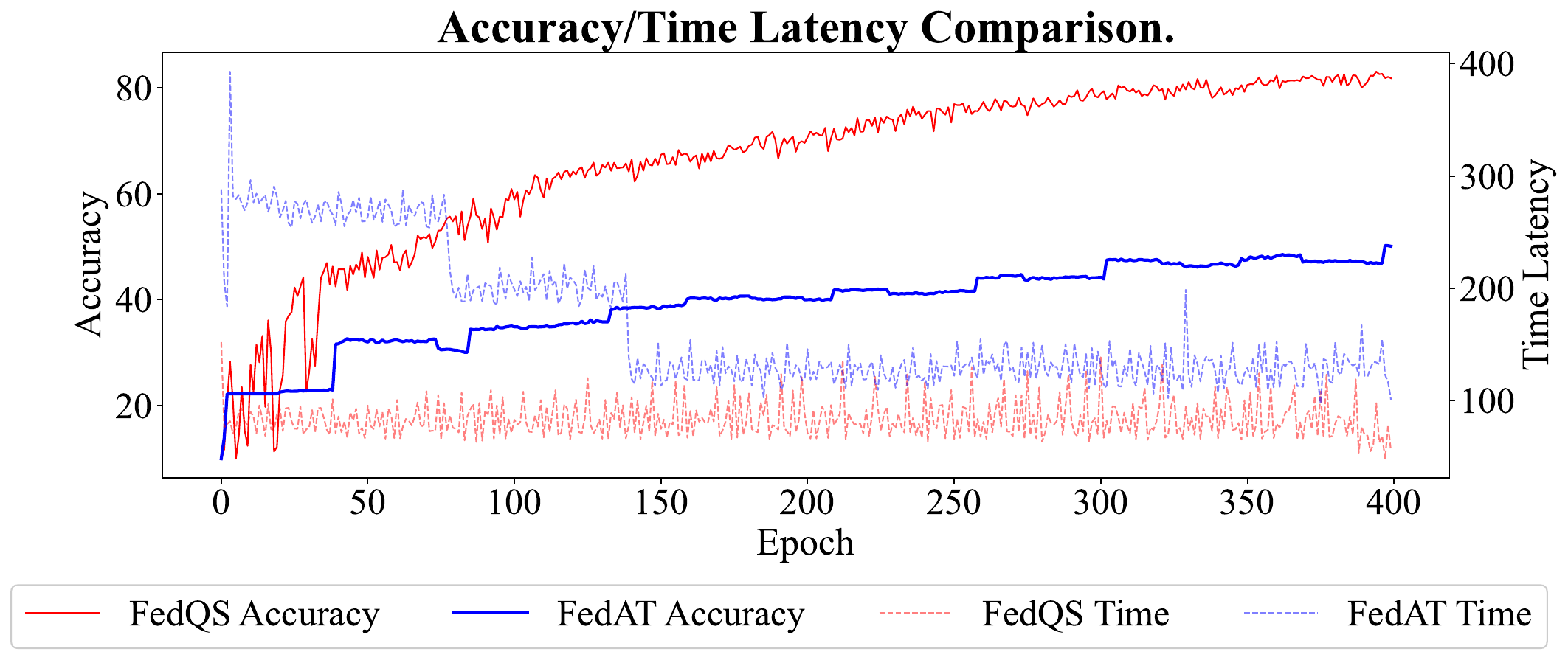}}
    \hspace{1mm}
    \subfloat[\tool-Avg vs. M-step ($x=1$)]{\includegraphics[width=0.44\linewidth]{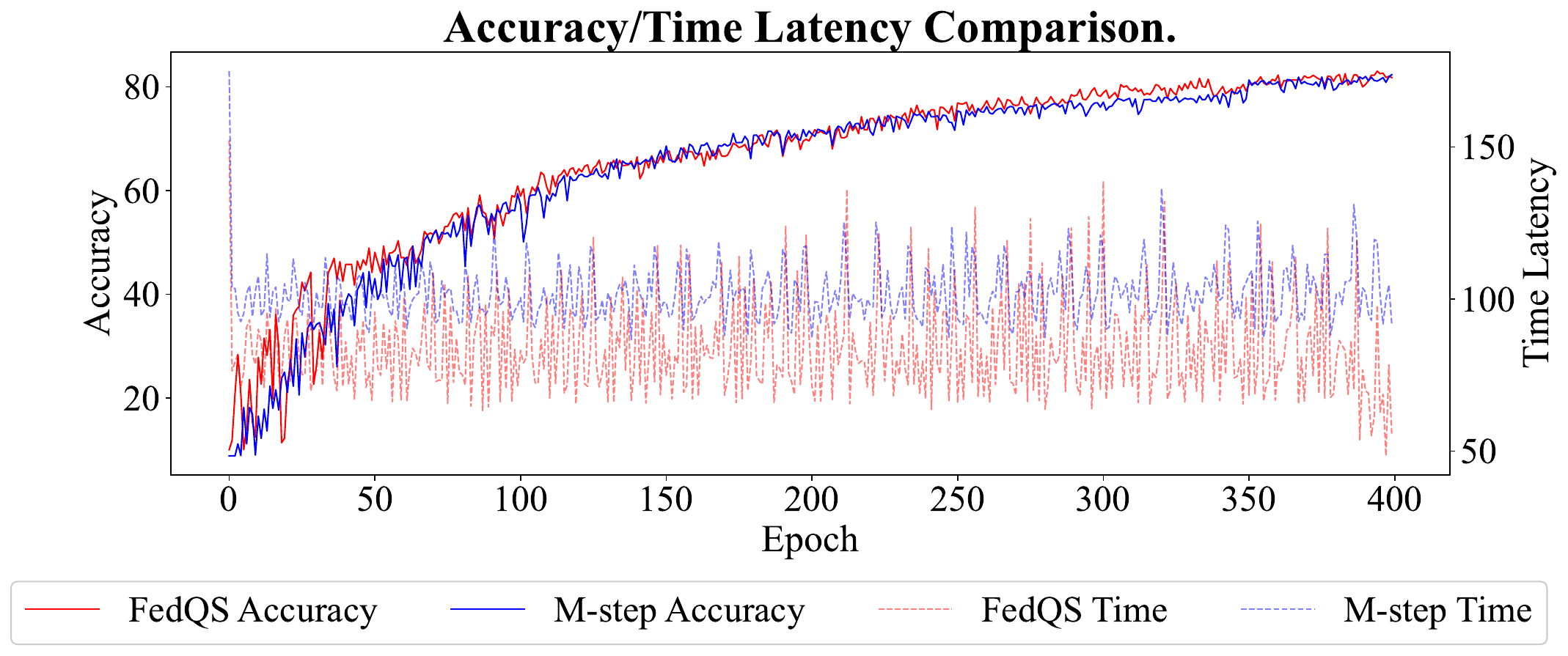}}
    \caption{The comparison of accuracy and time latency between \tool-Avg and the baselines with model aggregation strategy under CV tasks.}
    \label{fig:time_latency_avg}
\end{figure}

\begin{figure}[!htp]
    \centering
    \subfloat[\tool-SGD vs. FedSGD ($x=0.5$)]{\includegraphics[width=0.44\linewidth]{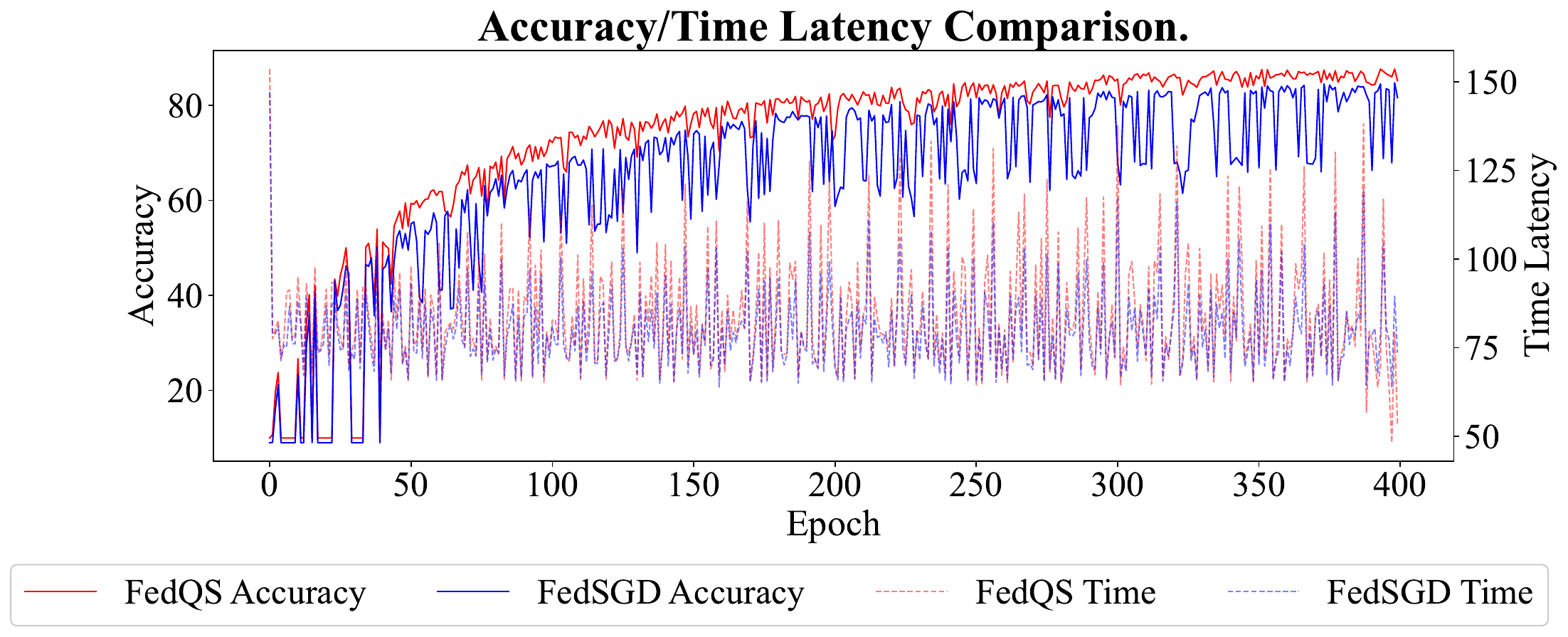}}
    \hspace{1mm}
    \subfloat[\tool-SGD vs. WKAFL ($x=0.5$)]{\includegraphics[width=0.44\linewidth]{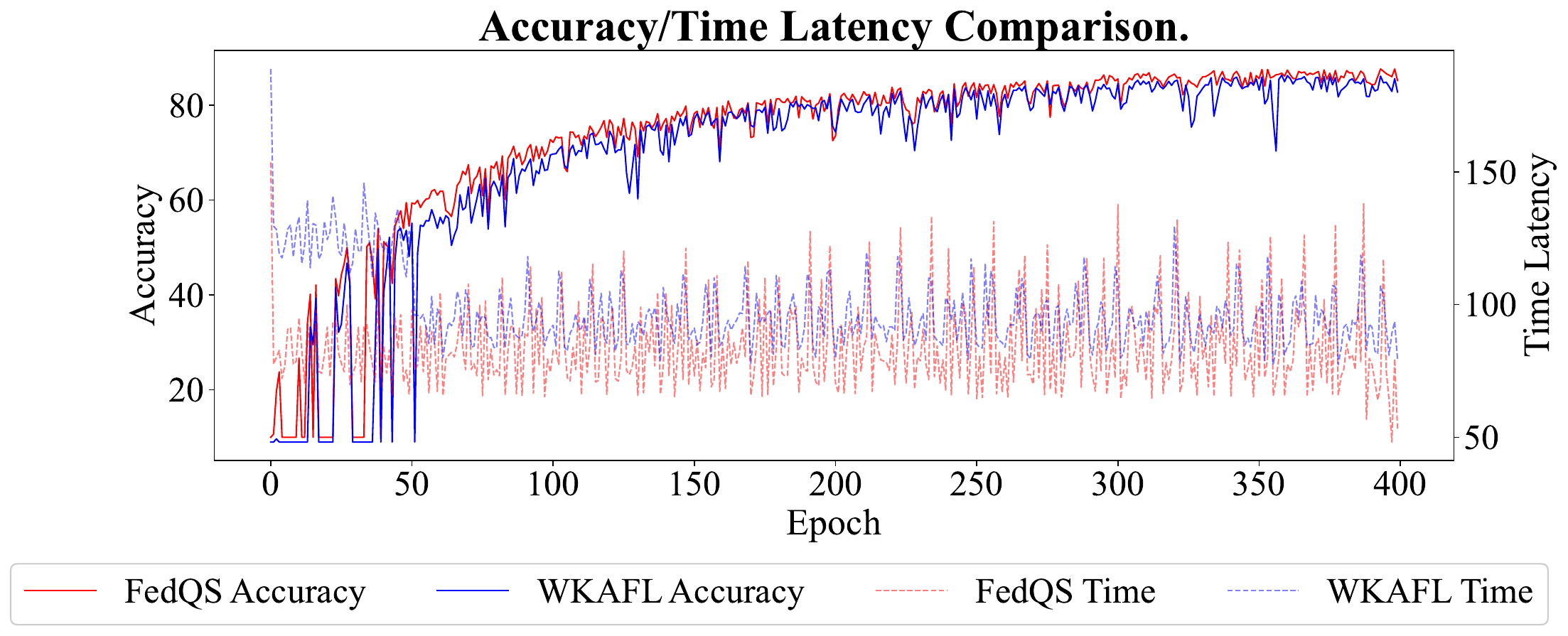}}
    \hspace{1mm}
    \subfloat[\tool-SGD vs. FedBuff ($x=0.5$)]{\includegraphics[width=0.44\linewidth]{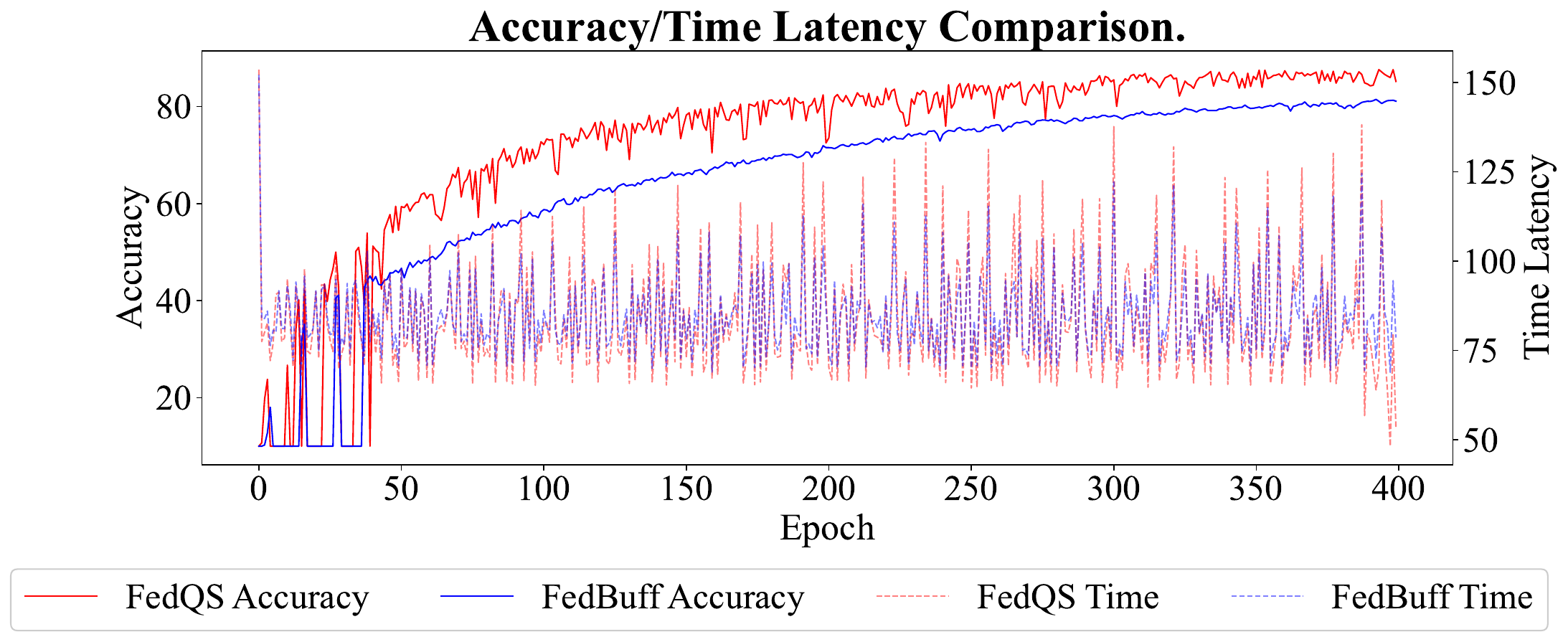}}
    \hspace{1mm}
    \subfloat[\tool-SGD vs. FedAC ($x=0.5$)]{\includegraphics[width=0.44\linewidth]{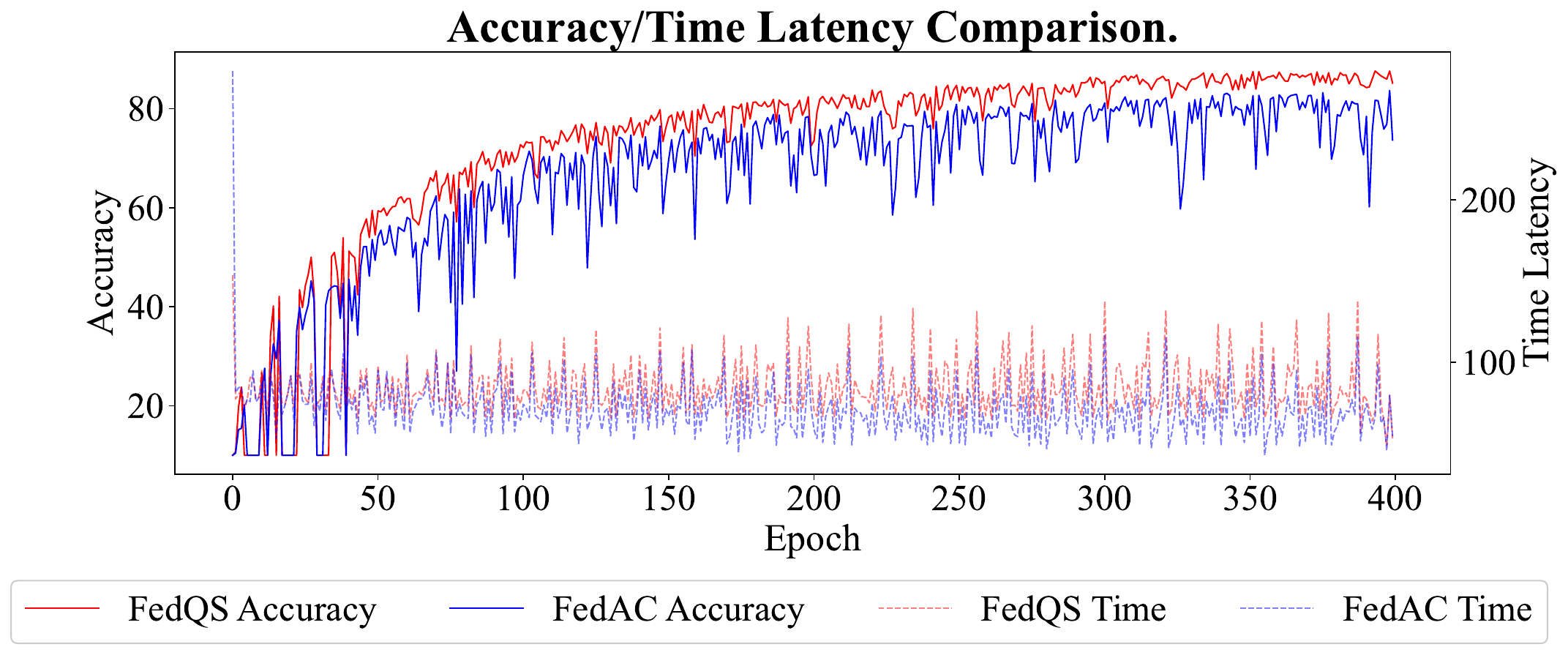}}
    \hspace{1mm}
    \subfloat[\tool-SGD vs. FedSGD ($x=1$)]{\includegraphics[width=0.44\linewidth]{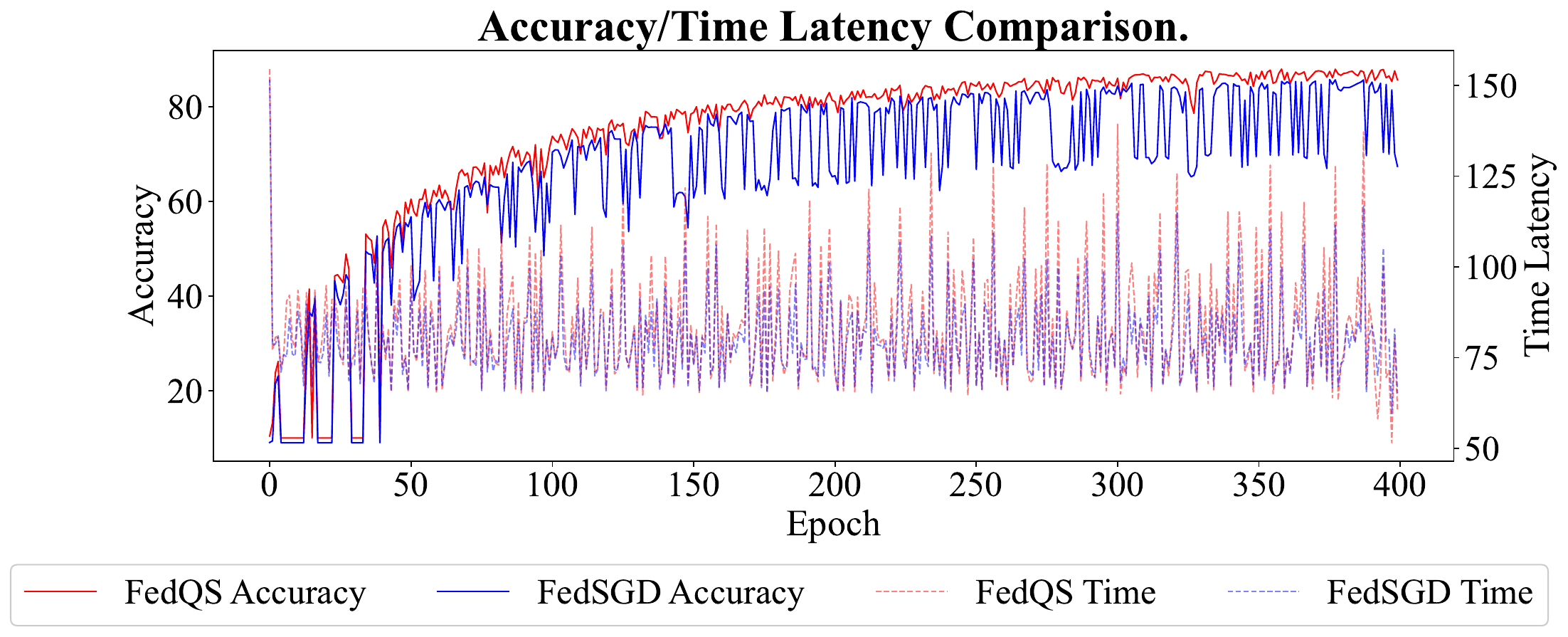}}
    \hspace{1mm}
    \subfloat[\tool-SGD vs. WKAFL ($x=1$)]{\includegraphics[width=0.44\linewidth]{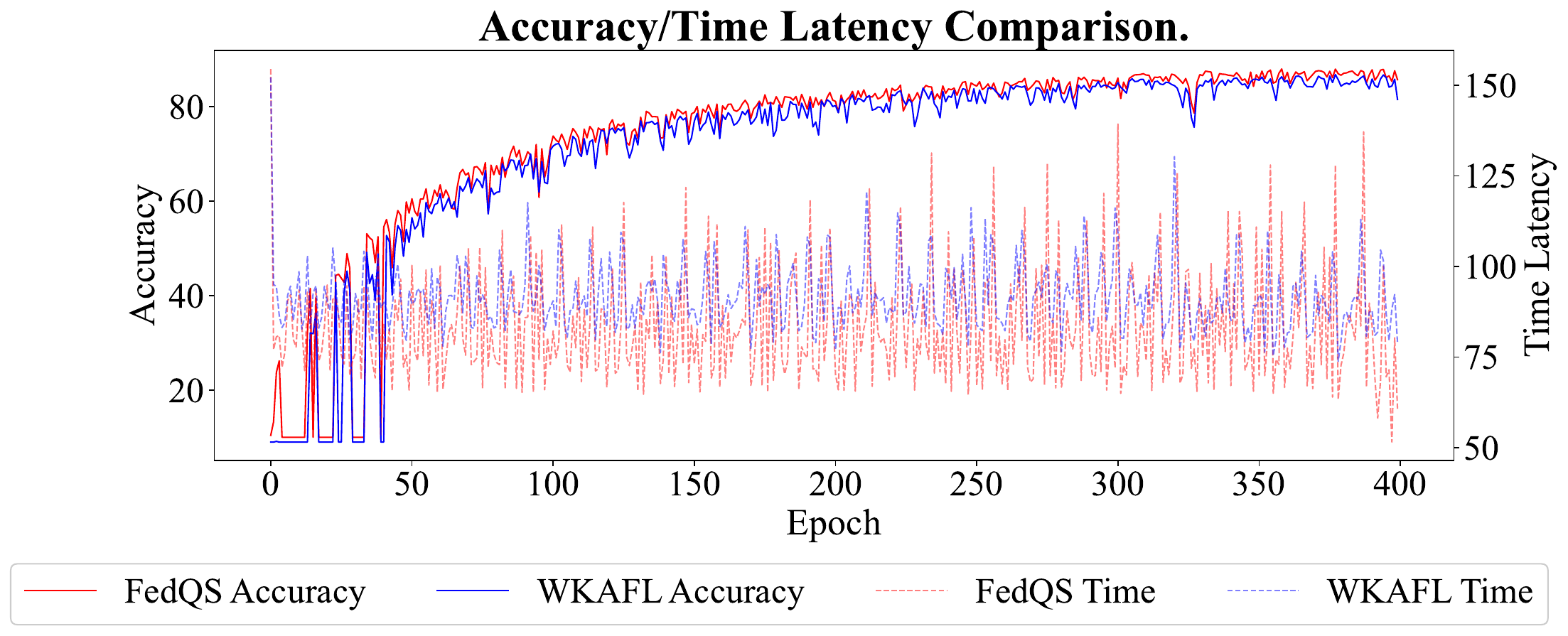}}
    \hspace{1mm}
    \subfloat[\tool-SGD vs. FedBuff ($x=1$)]{\includegraphics[width=0.44\linewidth]{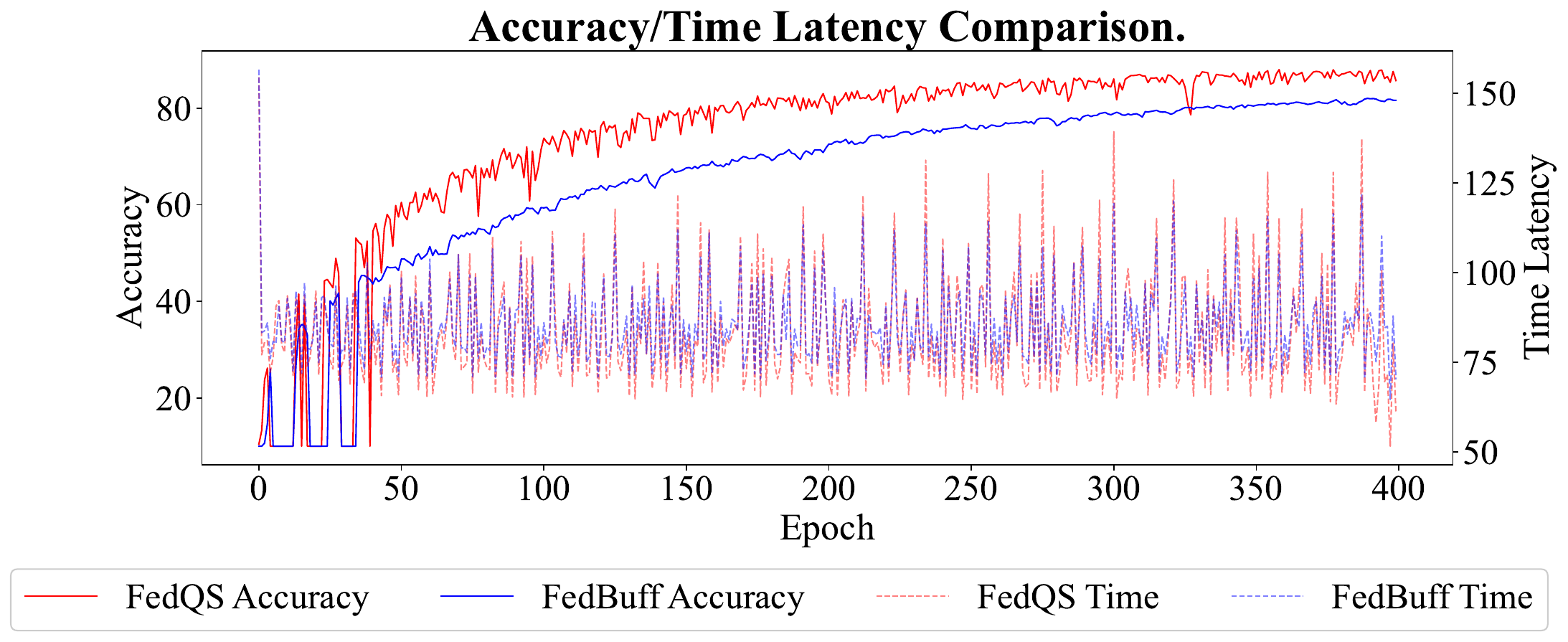}}
    \hspace{1mm}
    \subfloat[\tool-SGD vs. FedAC ($x=1$)]{\includegraphics[width=0.44\linewidth]{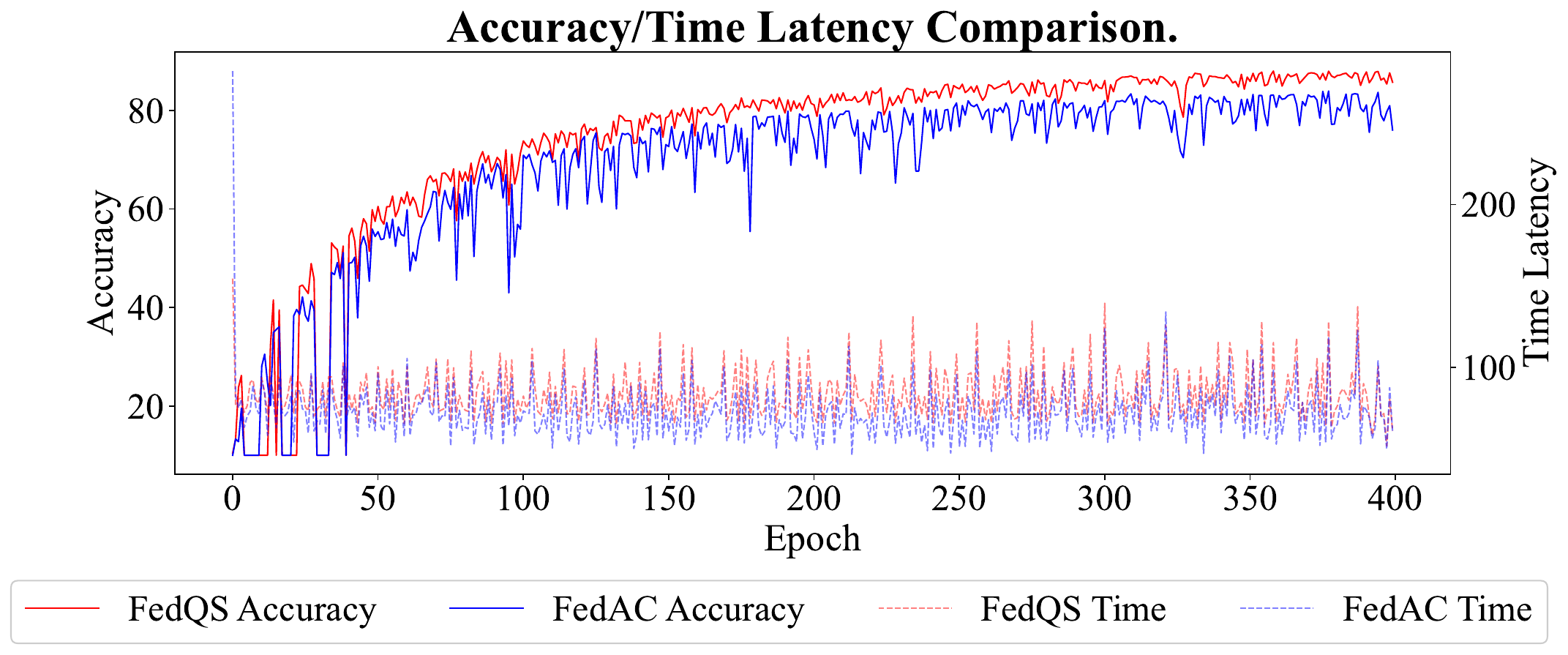}}
    \caption{The comparison of accuracy and time latency between \tool-SGD and the baselines with gradient aggregation strategy under CV tasks.}
    \label{fig:time_latency_sgd}
\end{figure}

\begin{table}[htp]
\centering
\scriptsize
\setlength{\tabcolsep}{1pt}
\caption{Convergence stability comparisons between \tool and baselines.}
\begin{threeparttable}
\begin{tabular}{c|c|c c| c c c c|c c| c c c c}
    \hline
        \multirow{2}{*}{Metrics}& \multirow{2}{*}{Tasks} & \multicolumn{12}{c}{Algorithms} \\\cline{3-14} 
        & & \makecell{FedAvg\\(SFL)}&\makecell{FedAvg\\(SAFL)} &SAFA & FedAT & \makecell{M-step} & \makecell{\tool-Avg}&\makecell{FedSGD\\(SFL)} &\makecell{FedSGD\\(SAFL)} &FedBuff & WKAFL & FedAC & \makecell{\tool-SGD}  \\ \hline
        \multirow{7}{*}{\makecell{$T_s-T_f$ \\ (\# epoch)}} 
        & $x=0.1$ & 43 & 266 & 19 & 5 & 259 & 227 & 44 & 279 & 66 & 293 & 310 & 226  \\
        & $x=0.5$ & 22 & 33 & 5 & 29 & 40 & 20 & 20 & 45 & 5 & 45 & 322 & 24   \\
        & $x=1$ & 6 & 27 & 0 & 0 & 31 & 17 & 5 & 40 & 11 & 34 & 110 & 14  \\
        & $R=200$ & 0 & 0 & 0 & 0 & 11 & 7 & 0 & 23 & 0 & 4 & 25 & 0 \\
        & $R=600$ & 0 & 2 & 0 & 0 & 4 & 15 & 0 & 21 & 0 & 0 & 39 & 7   \\
        & Gender & 12 & 51 & 24 & 0 & 128 & 48 & 16 & 202 & 0 & 0 & 16 & 26   \\
        & Ethnicity & 24 & 151 & 114 & 0 & 45 & 77 & 27 & 251 & 178 & 0 & 48 & 31   \\\hline
\end{tabular}
\begin{tablenotes}
    \footnotesize
    \item[*] In column ``Tasks'', $x$ is the parameter of the Dirichlet distribution within CV tasks; $N$ is the number of roles within NLP tasks; Gender and Ethnicity are the data types within RWD tasks.
    \item[*]  The target accuracy of convergence stability is set to 80\% of convergence accuracy in CV and NLP tasks and 95\% of convergence accuracy in RWD tasks.
\end{tablenotes}
\end{threeparttable}
\label{tab:performance-convergence-stability}
\end{table}

\begin{table}[htp]
	\centering
	\small
 \renewcommand\arraystretch{1.15}
	\setlength{\tabcolsep}{2pt}
 \caption{Average performance of \tool, FedAvg, and FedSGD w.r.t. the number of clients and resource distributions.}
 \adjustspace{-5px}
	\begin{threeparttable}
		\begin{tabular}{c|c c  c c|c c c c|c c c c|c c c c}
			\hline
			\multirow{5}{*}{Metrics} & \multicolumn{16}{c}{Tasks} \\ \cline{2-17}
			& \multicolumn{8}{c|}{$N=50$} & \multicolumn{8}{c}{$N=200$} \\ \cline{2-17}
			& \multicolumn{4}{c|}{1:20} & \multicolumn{4}{c|}{1:50} & \multicolumn{4}{c|}{1:100} & \multicolumn{4}{c}{1:50} \\ \cline{2-17}
			& \multicolumn{2}{c}{Fed} & \multicolumn{2}{c|}{\tool} & \multicolumn{2}{c}{Fed} & \multicolumn{2}{c|}{\tool} & \multicolumn{2}{c}{Fed} & \multicolumn{2}{c|}{\tool} & \multicolumn{2}{c}{Fed} & \multicolumn{2}{c}{\tool} \\ \cline{2-17}
			& Avg & SGD & Avg & SGD & Avg & SGD & Avg  & SGD & Avg & SGD & Avg & SGD & Avg & SGD & Avg & SGD\\\hline
			M1 & 70.1 & 77.4 & \textbf{79.2} & \textbf{80.7} & 80.6 & 81.1 & \textbf{83.7} & \textbf{84.8} & 49.4 & 74.4 & \textbf{64.7} & \textbf{80.1} & 61.3 & 77.7& \textbf{65.9} & \textbf{82.9}\\\hline
			M2 & 123 & 57 & \textbf{118} & \textbf{44} & 108 & 57 &\textbf{102} & \textbf{50} & 190 & 158 & \textbf{182} & \textbf{123} & 183 & 145 & \textbf{138} & \textbf{104} \\\hline
			M3 & 0.0 & 37.0 & 3.0 & \textbf{26.3} & 1.0 & 15.0 & 4.3 & \textbf{5.0} & 0.0 & 7.3 & 0.3 & \textbf{3.0} & 0.7 & 4.0 & 1.66 &\textbf{ 3.0} \\
			\hline
		\end{tabular}
		\begin{tablenotes}
			\footnotesize 
                \item[*] In Metrics, M1 means Accuracy (\%), M2 means Conv. speed (\# epochs), and M3 means \# Oscillations.
			\item[*] In the table, $N=200$ means the task has 200 clients, and 1:50 means the fastest client exhibits a training speed 50 times that of the slowest one. ``Fed + Avg'' means FedAvg. The threshold used to calculate the number of oscillations is set to 15. The target accuracy of convergence stability is set to 80\% of convergence accuracy.
			\item[*] Each result is an average value of three experiments corresponding to $x=0.1,0.5,1$ in CV tasks.
		\end{tablenotes}
	\end{threeparttable}
 \adjustspace{-4ex}
	\label{tab:multi_full}
\end{table}

\begin{figure}[!htp]
    \centering
    \subfloat[Avg ($x=0.1$)]{\includegraphics[width=0.25\linewidth]{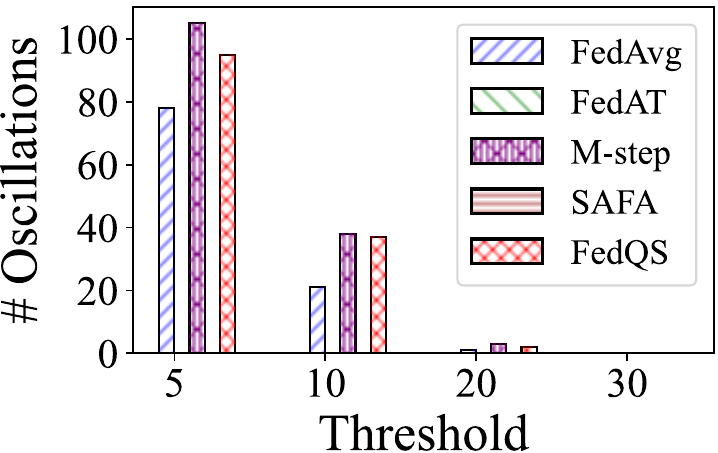}}
    \hspace{5mm}
    \subfloat[Avg ($x=0.5$)]{\includegraphics[width=0.25\linewidth]{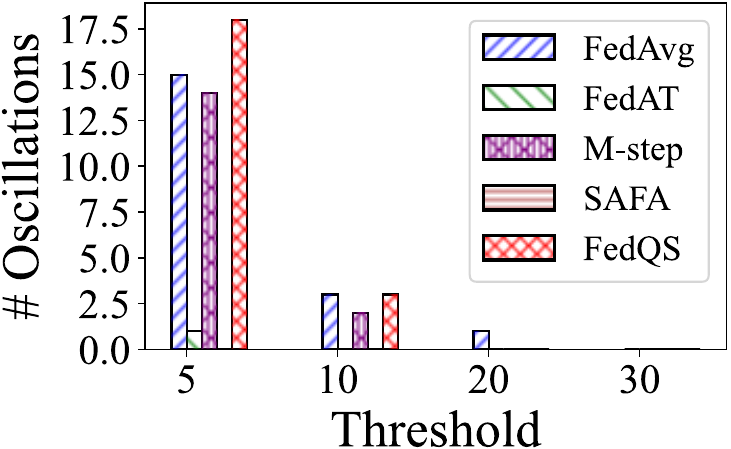}}
    \hspace{1mm}
    \subfloat[Avg ($x=1$)]{\includegraphics[width=0.25\linewidth]{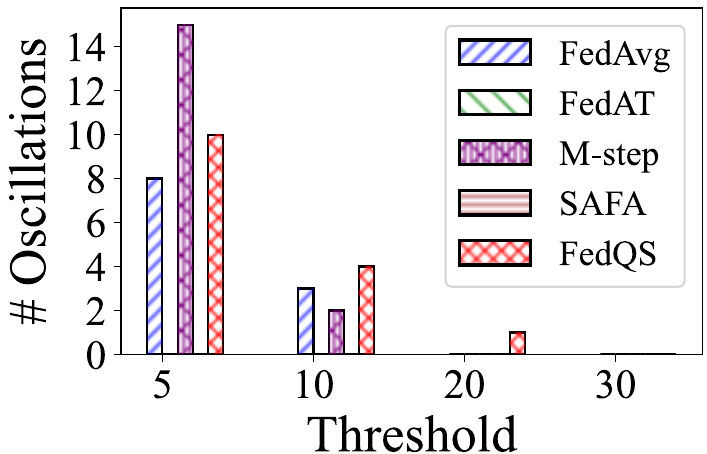}}
    \hspace{5mm}
    \subfloat[SGD ($x=0.1$)]{\includegraphics[width=0.25\linewidth]{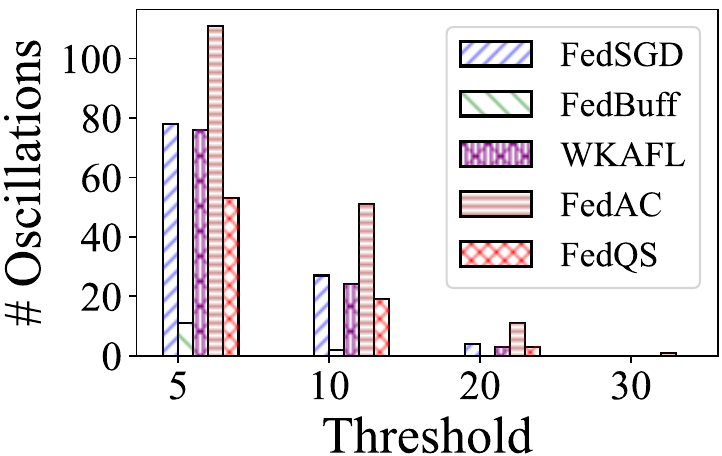}}
    \hspace{5mm}
    \subfloat[SGD ($x=0.5$)]{\includegraphics[width=0.25\linewidth]{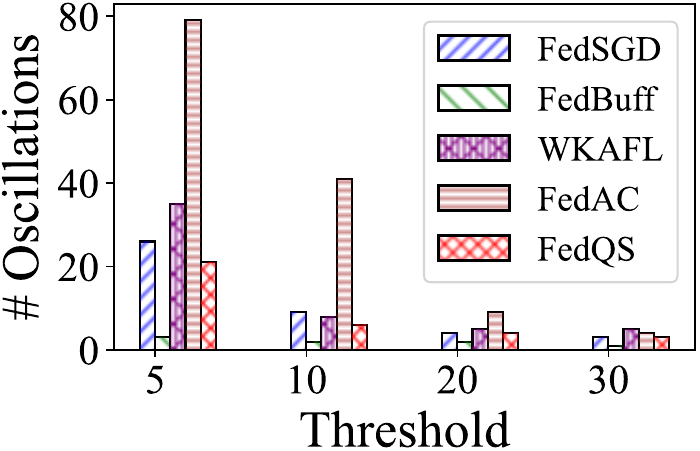}}
    \hspace{5mm}
    \subfloat[SGD ($x=1$)]{\includegraphics[width=0.25\linewidth]{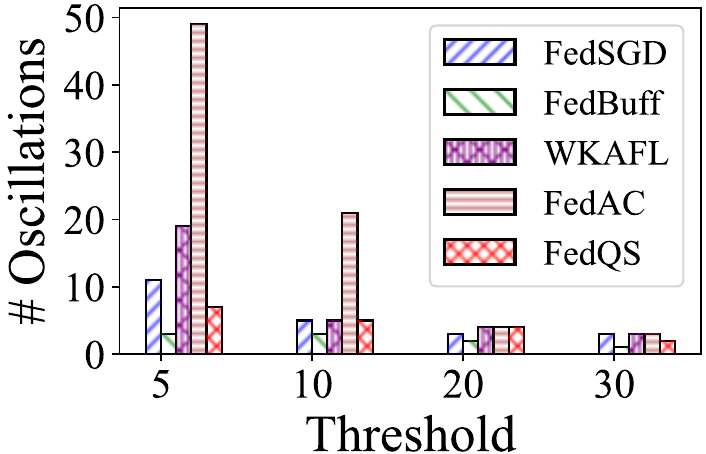}}
    \caption{Statistics of oscillations under various thresholds when applying the ResNet-18 model to the CIFAR-10 dataset with $x=0.1,0.5,1$.}
    \adjustspace{-4ex}
    \label{fig:ots}
\end{figure}

\begin{figure}[!htp]
    \centering
    \hspace{4mm}
    \subfloat{\includegraphics[width=0.9\linewidth]{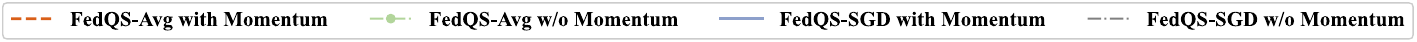}}
    \setcounter{subfigure}{0}
    \subfloat[$x=0.1$]{\includegraphics[width=0.45\linewidth]{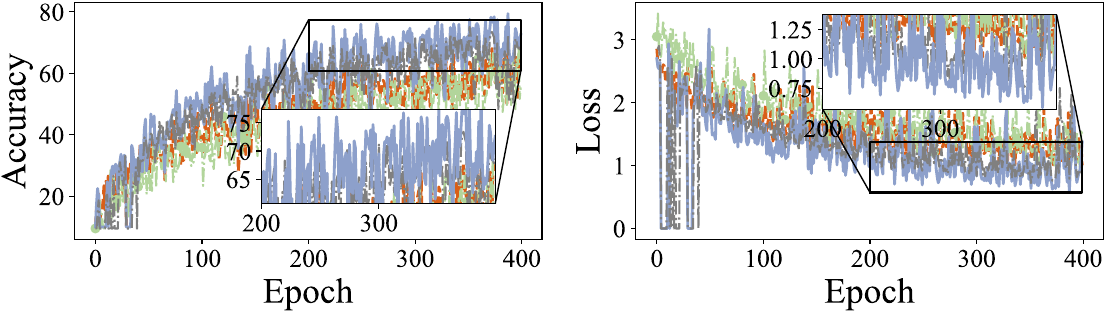}}
    \hspace{1mm}
    \subfloat[$x=1$]{\includegraphics[width=0.45\linewidth]{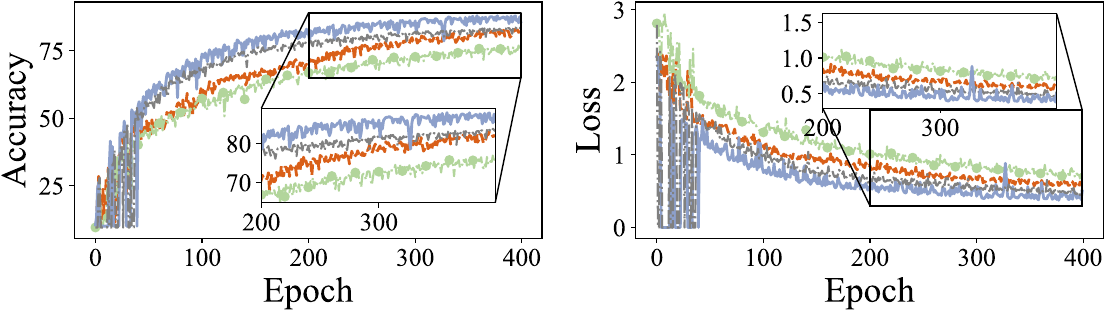}}
    \caption{Accuracy and Loss of \tool with or without Momentum terms under CV tasks.}
    \label{fig_breakdown_cv}
\end{figure}

\begin{figure}[!htp]
    \adjustspace{-2ex}
	\centering
        \subfloat[Accuracy ($x=0.1$)]{\includegraphics[width=0.27\linewidth]{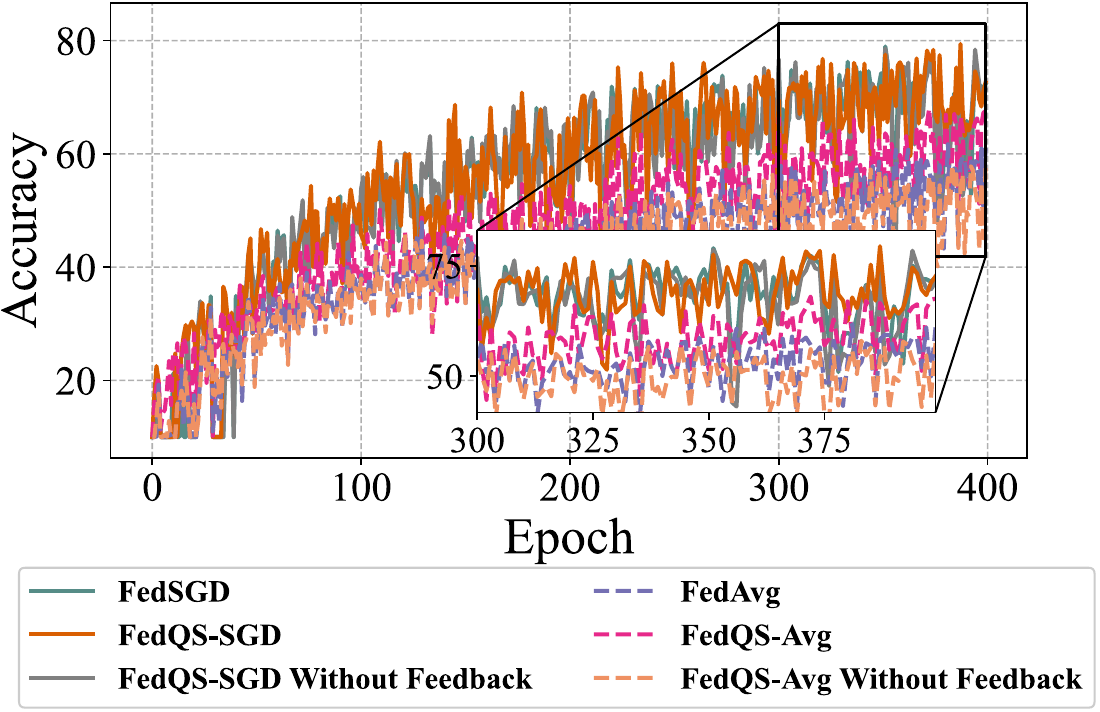}}
        \hspace{1mm}
	\subfloat[Accuracy ($x=0.5$)]{\includegraphics[width=0.27\linewidth]{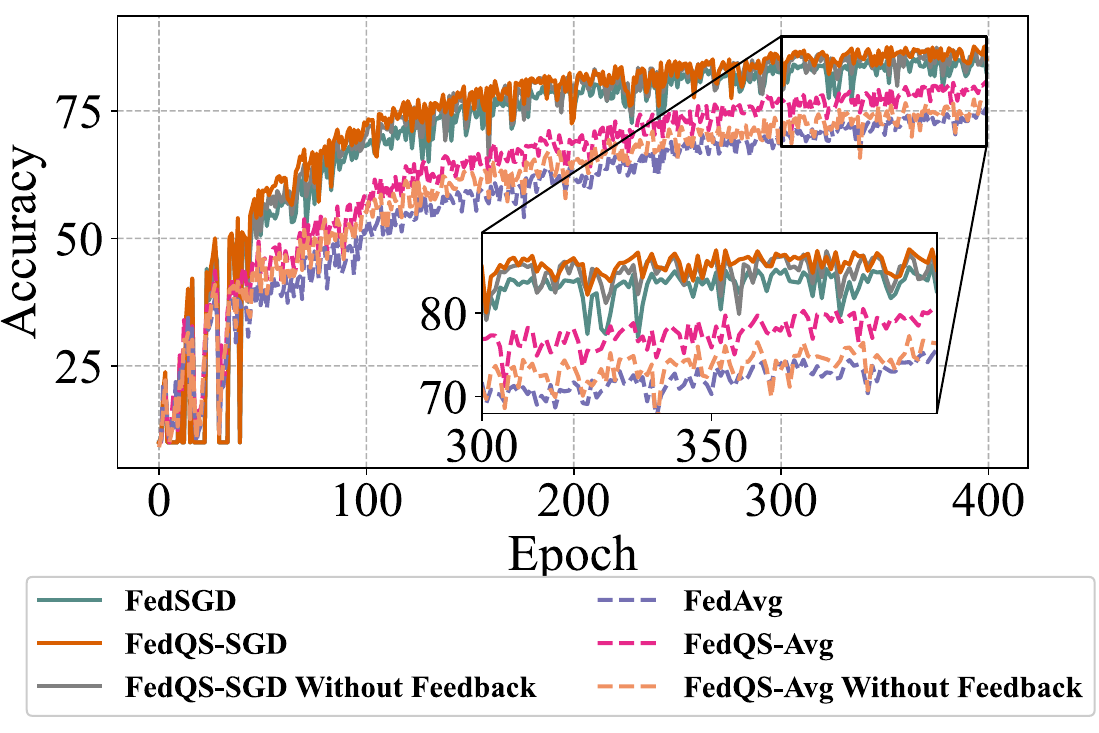}}
        \hspace{1mm}
        \subfloat[Accuracy ($x=1$)]{\includegraphics[width=0.27\linewidth]{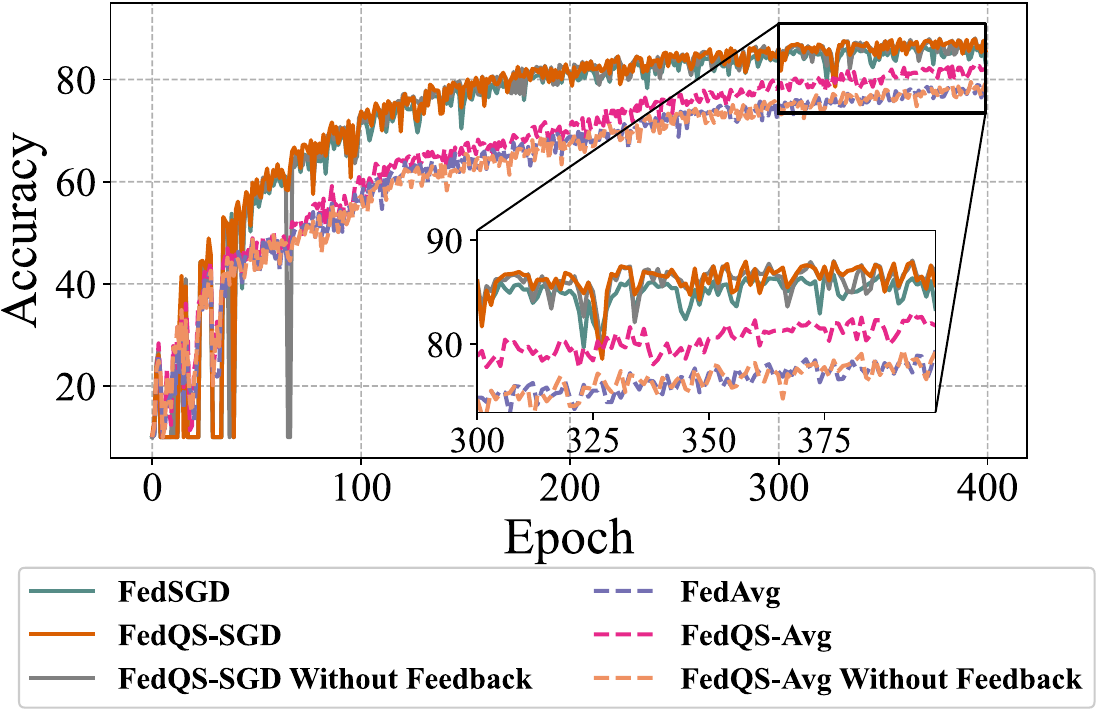}}
        \hspace{1mm}
        \subfloat[Oscillation ($x=0.1$)]{\includegraphics[width=0.27\linewidth]{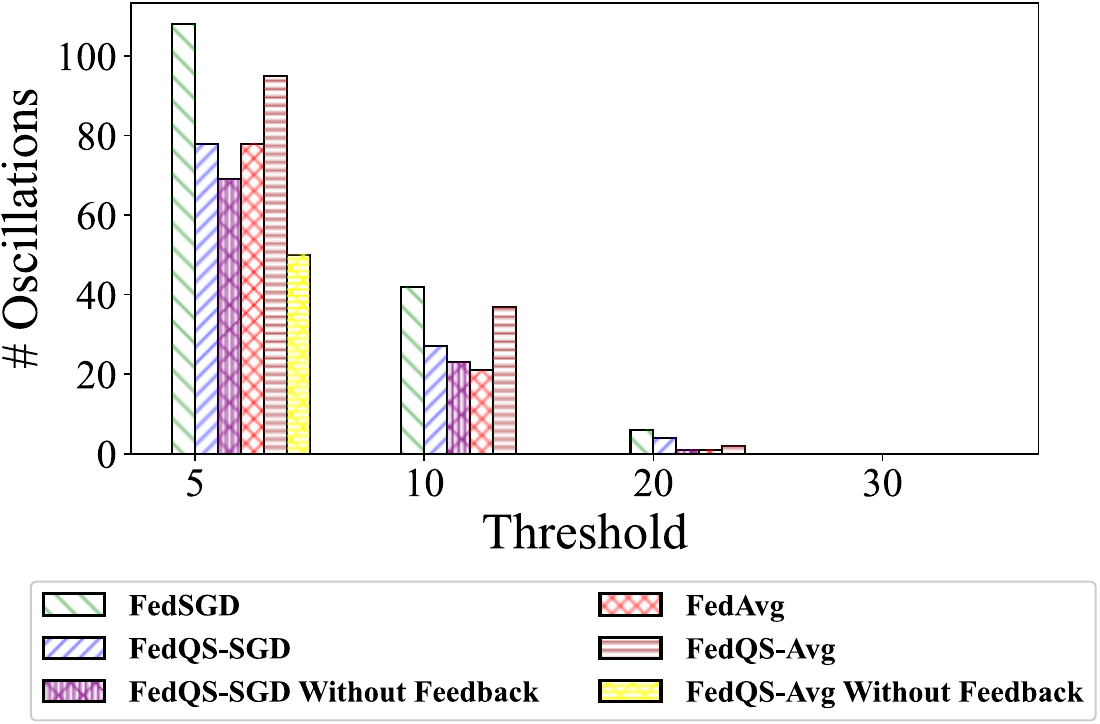}}
        \hspace{1mm}
        \subfloat[Oscillation ($x=0.5$)]{\includegraphics[width=0.27\linewidth]{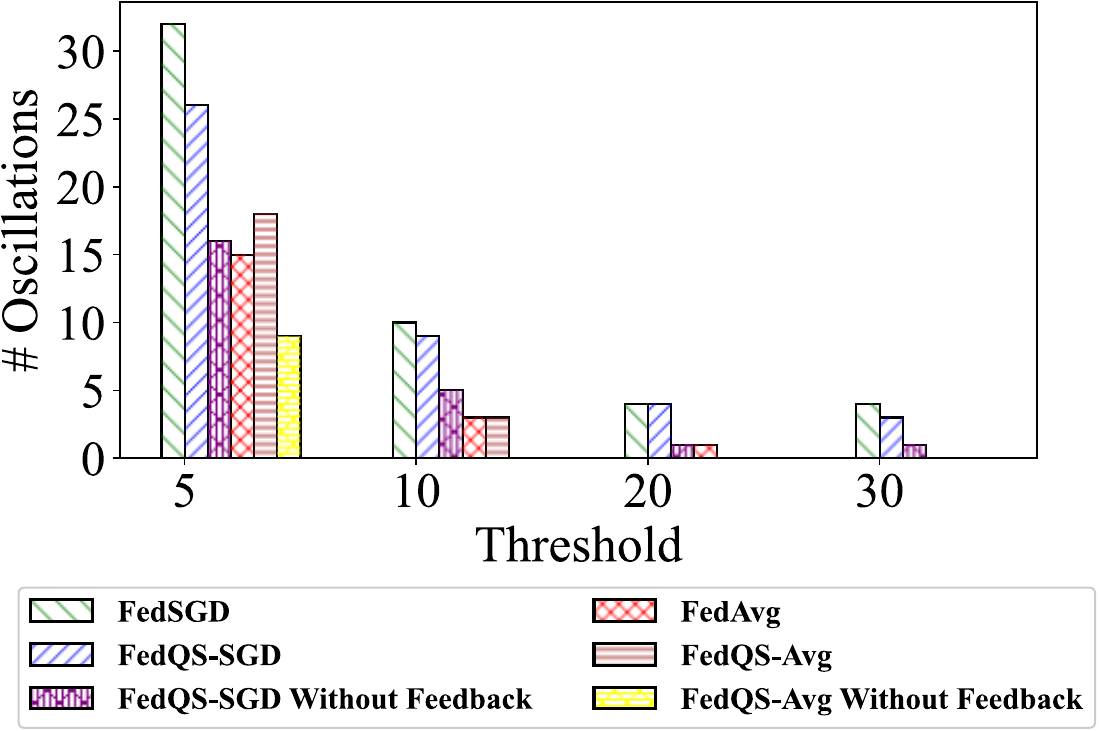}}
        \hspace{1mm}
        \subfloat[Oscillation ($x=1$)]{\includegraphics[width=0.27\linewidth]{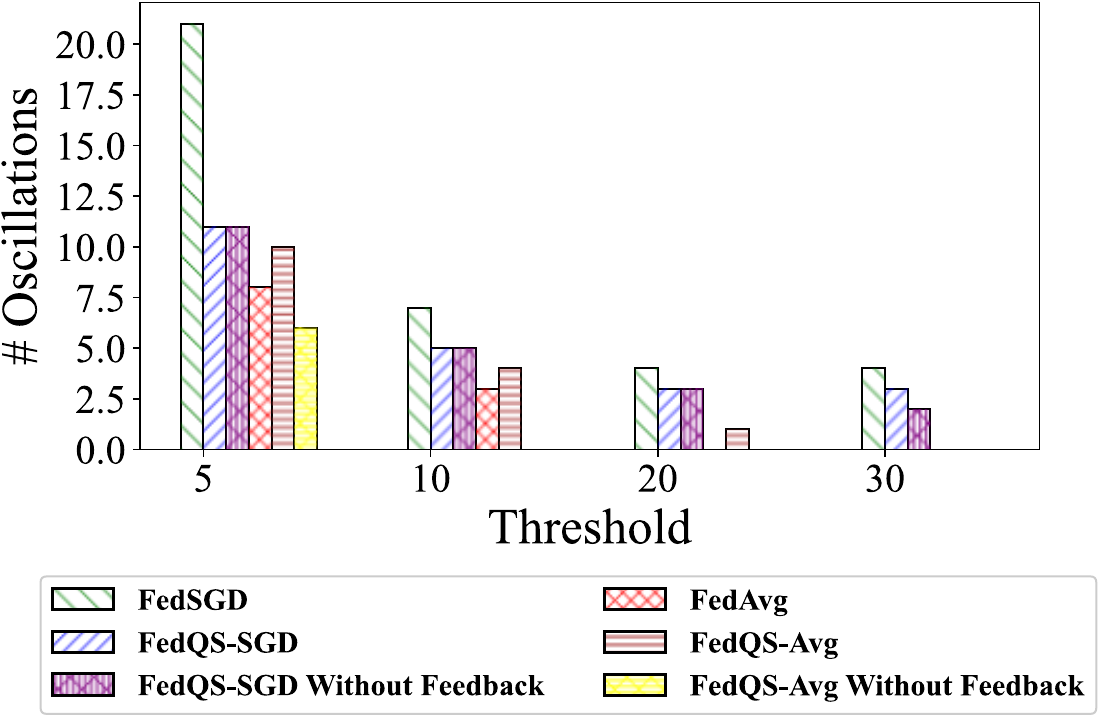}}
	\caption{The impact of feedback mechanism on global accuracy and oscillations in CV tasks ($x=0.1, 0.5, 1$).}
	\label{fig_feedback_cv_full}
   \adjustspace{-4ex}
\end{figure}

\begin{table}[!htp]
    \centering
    \small
    \caption{Impact of different learning rates on FedSGD, FedAvg, and \tool's performance in a CV task ($x=0.5$) and an NLP task ($R=600$).}
    \begin{tabular}{c|c|l|c c|c c}
    \hline
       \multirow{2}{*}{Metrics} & \multirow{2}{*}{Task} & \multirow{2}{*}{Learning Rate} & \multicolumn{4}{c}{Algorithms} \\\cline{4-7}
        & & & FedSGD & \tool-SGD & FedAvg & \tool-Avg \\\hline
        \multirow{7}{*}{Accuracy (\%)} & \multirow{4}{*}{CV} & $\eta_0=0.1$ & 83.9 & \textbf{86.1}& 73.7& \textbf{80.3} \\
        & & $\eta_0=0.05$ & 84.9 & \textbf{85.6}& 72.8& \textbf{77.1} \\
        & & $\eta_0=0.01$ & 81.5 & \textbf{82.9}& 58.1& \textbf{67.1} \\
        & & $\eta_0=0.001$ & 61.1 & \textbf{80.7}& 40.9& \textbf{61.4} \\\cline{2-7}
        & \multirow{3}{*}{NLP} & $\eta_0=0.1$ & 49.6 & \textbf{52.5} & 45.5& \textbf{50.1} \\
        & & $\eta_0=0.05$ & 46.3 & \textbf{51.6} & 44.7& \textbf{46.9} \\
        & & $\eta_0=0.01$ & 35.9 & \textbf{48.7} & 30.3& \textbf{41.8} \\\hline
        \multirow{7}{*}{\makecell{Convergence Speed \\ (\# epoch)}} & \multirow{4}{*}{CV} & $\eta_0=0.1$ & 86 & \textbf{82}& 144& \textbf{109} \\
        & & $\eta_0=0.05$ & 88 & \textbf{78}& 149& \textbf{114} \\
        & & $\eta_0=0.01$ & 90 & \textbf{84}& 157& \textbf{121} \\
        & & $\eta_0=0.001$ & 131 & \textbf{129}& 161& \textbf{123} \\\cline{2-7}
        & \multirow{3}{*}{NLP} & $\eta_0=0.1$ & 67 & \textbf{50}& 99& \textbf{78} \\
        & & $\eta_0=0.05$ & 74 & \textbf{68}& 102& \textbf{78} \\
        & & $\eta_0=0.01$ & 128 & \textbf{113}& 145& 162 \\
    \hline
    \end{tabular}
    \label{tab:learing_rate}
\end{table}

\begin{table}[!htp]
    \centering
    \small
    \caption{Impact of different changing rates of the learning rate $a$ on \tool's performance in CV and NLP tasks.}
    \begin{tabular}{c|c|l|c c c}
    \hline
       \multirow{3}{*}{Algorithms} & \multirow{3}{*}{Tasks}& \multirow{3}{*}{Value} & \multicolumn{3}{c}{Metrics} \\\cline{4-6}
       & & & Accuracy (\%) & \makecell{Convergence\\ Speed  (\# epoch)} & \# Oscillations \\\hline
       \multirow{8}{*}{\tool-Avg} & \multirow{4}{*}{CV} & 0.001 & 74.02 & 110 & \textbf{6.9}\\
       &  & 0.002 & \textbf{74.14} & \textbf{108} & 7.6\\
       &  & 0.005 & 73.88 & 113 & 8.2\\
       &  & 0.01 & 71.65 & 124 & 10.3\\\cline{2-6}
       & \multirow{4}{*}{NLP} & 0.001 & \textbf{49.88} & 93 & 3.6\\
       &  & 0.002 & 49.75 & \textbf{91} & \textbf{3.5}\\
       &  & 0.005 & 49.56 & 95 & 3.9\\
       &  & 0.01 & 43.25 & 103 & 3.6\\\hline
       \multirow{8}{*}{\tool-SGD} & \multirow{4}{*}{CV} & 0.001 & \textbf{80.63} & 84 & 4.3\\
       &  & 0.002 & 80.59 & \textbf{81} & \textbf{4.0}\\
       &  & 0.005 & 80.44 & 83 & 4.7\\
       &  & 0.01 & 74.83 & 92 & 6.3\\\cline{2-6}
       & \multirow{4}{*}{NLP} & 0.001 & \textbf{50.88} & \textbf{46} & \textbf{8.3}\\
       &  & 0.002 & 50.76 & 48 & 8.5\\
       &  & 0.005 & 50.13 & 53 & 8.8\\
       &  & 0.01 & 48.36 & 56 & 9.4\\
    \hline
    \end{tabular}
    \label{tab:a}
\end{table}

\begin{table}[!htp]
    \centering
    \small
    \caption{Impact of different initial momentum $m_0$ on \tool's performance in CV and NLP tasks.}
    \begin{tabular}{c|c|c|c c c}
    \hline
        \multirow{3}{*}{Algorithms} & \multirow{3}{*}{Tasks}& \multirow{3}{*}{Value} & \multicolumn{3}{c}{Metrics} \\\cline{4-6}
       & & & Accuracy (\%) & \makecell{Convergence\\ Speed  (\# epoch)} & \# Oscillations \\\hline
       \multirow{8}{*}{\tool-Avg} & \multirow{4}{*}{CV} & 0.1 & \textbf{74.14} & \textbf{108} & \textbf{7.6}\\
       &  & 0.2 & 73.26 & 112 & 7.9\\
       &  & 0.3 & 72.48 & 116 & 7.6\\
       &  & 0.4 & 70.69 & 123 & 7.8\\\cline{2-6}
       & \multirow{4}{*}{NLP} & 0.1 & \textbf{49.75} & \textbf{91} & \textbf{3.5}\\
       &  & 0.2 & 49.44 & 93 & 3.6\\
       &  & 0.3 & 49.03 & 96 & 4.1\\
       &  & 0.4 & 48.46 & 113 & 5.3\\\hline
       \multirow{8}{*}{\tool-SGD} & \multirow{4}{*}{CV} & 0.1 & \textbf{80.59} & \textbf{81} & \textbf{4.0}\\
       &  & 0.2 & 80.43 & 82 & 4.3\\
       &  & 0.3 & 80.34 & 82 & 4.6\\
       &  & 0.4 & 80.06 & 86 & 3.8\\\cline{2-6}
       & \multirow{4}{*}{NLP} & 0.1 & \textbf{50.76} & \textbf{48} & \textbf{8.5}\\
       &  & 0.2 & 50.43 & 46 & 8.6\\
       &  & 0.3 & 50.22 & 49 & 8.9\\
       &  & 0.4 & 50.13 & 48 & 8.8\\
    \hline
    \end{tabular}
    \label{tab:m}
\end{table}

\begin{table}[!htp]
    \centering
    \small
    \caption{Impact of different changing rates of the momentum $k$ on \tool's performance in CV and NLP tasks.}
    \begin{tabular}{c|c|c|c c c}
    \hline
       \multirow{3}{*}{Algorithms} & \multirow{3}{*}{Tasks}& \multirow{3}{*}{Value} & \multicolumn{3}{c}{Metrics} \\\cline{4-6}
       & & & Accuracy (\%) & \makecell{Convergence\\ Speed  (\# epoch)} & \# Oscillations \\\hline
       \multirow{8}{*}{\tool-Avg} & \multirow{4}{*}{CV} & 0.1 & \textbf{74.25} & 108 & 4.3\\
       &  & 0.2 & 74.14 & 108 & \textbf{4.0}\\
       &  & 0.3 & 74.03 & 116 & 4.6\\
       &  & 0.4 & 74.09 & 123 & 4.9\\\cline{2-6}
       & \multirow{4}{*}{NLP} & 0.1 & 49.65 & 93 & 3.6\\
       &  & 0.2 & 49.75 & \textbf{91} & 3.5\\
       &  & 0.3 & \textbf{49.76} & 100 & \textbf{3.4}\\
       &  & 0.4 & 49.66 & 112 & 3.8\\\hline
       \multirow{8}{*}{\tool-SGD} & \multirow{4}{*}{CV} & 0.1 & \textbf{80.77} & 83 & 9.4\\
       &  & 0.2 & 80.59 & \textbf{81} & 7.6\\
       &  & 0.3 & 79.85 & 92 & \textbf{7.1}\\
       &  & 0.4 & 79.33 & 116 & 8.4\\\cline{2-6}
       & \multirow{4}{*}{NLP} & 0.1 & \textbf{50.99} & \textbf{46} & \textbf{8.3}\\
       &  & 0.2 & 50.76 & 48 & 8.5\\
       &  & 0.3 & 50.41 & 53 & 9.1\\
       &  & 0.4 & 49.87 & 66 & 9.6\\
    \hline
    \end{tabular}
    \label{tab:k}
\end{table}

\section{Proof details}
\label{app:proof_details}
We will first give some basic notations of the FL tasks. Note that in the following part, all $||\cdot||$ symbols represent $\mathcal{L}_2$ norms and $<\cdot, \cdot>$ symbols represent inner products. In order to simplify and save space, we have denoted that $\nabla F_{g,e}(w_{g,e-1}^t) \triangleq \nabla F_g(w_{g,e}^t)$, $\nabla F_{i,e}(w_{i,e-1}^t) \triangleq \nabla F_i(w_{i,e}^t)$.

Additionally, when performing theoretical analysis, we also refer to the classic paper~\cite{li2019convergence} and assume that the server uses an ideal dataset (based on balanced, independent, and identically distributed data from all participants) to train an ideal global model (which is just used to theoretical analysis and does not exist in reality), with the training process defined by Equation~\ref{equ:ideal_global_update}.
\begin{equation}
    \widetilde{w}_{g,e}^t = \widetilde{w}_{g,e-1}^t - \eta_g \nabla F_{g,e}(\widetilde{w}_{g,e-1}^t),
    \label{equ:ideal_global_update}
\end{equation}
where we define $\widetilde{w}_{g}^0 \triangleq w_{g}^{0}$, $F_g$ satisfies Assumption~\ref{ass:l-smooth}, and $\nabla F_g$ satisfies Assumption~\ref{ass:bounded_gra}.

It's evident that we have the following relation between the ideal global model and the real global model:
\begin{equation}
    \widetilde{w}_{g,E}^t = \widetilde{w}_{g}^t\ ,\  \widetilde{w}_{g,0}^t = w_{g}^{t-1}.
    \label{equ:relation_ideal_real}
\end{equation}

Then, based on the Assumption~\ref{ass:bounded_gra} and~\ref{ass:non-iid_degree}, we can give the following lemma:
\begin{lemma}
    Let $a_i := \nabla F_g(\widetilde w_g^t) - \nabla F_i(w_i^t), b_i := \nabla F_i(w_i^t).$ Given the Assumption~\ref{ass:l-smooth} and~\ref{ass:non-iid_degree}, then any one of the following conditions is sufficient to guarantee $\mathbb{E}\|\nabla F_g(\widetilde w_g^t)\|^2 \le \delta^2$. (1) \textit{Global Optimum}: $\nabla F_g(\widetilde w_g^t) = 0$. (2) \textit{Local-parameter closeness}: there exist constants $\varepsilon_w\ge0$ and $\widetilde{\delta}\ge0$ such that $\mathbb{E}\|w_i^t-\widetilde w_g^t\|^2 \le \varepsilon_w^2, \mathbb{E}\|\nabla F_g(\widetilde w_g^t)-\nabla F_i(\widetilde w_g^t)\|^2 \le \widetilde{\delta}^2,$ and the following numerical relation holds: $\widetilde{\delta}^2 + L^2\varepsilon_w^2 + G_c^2 \le \frac{\delta^2}{3}.$ (3) \textit{Negative correlation}: The cross-term satisfies $\mathbb{E}\langle a_i,b_i\rangle \le -\tfrac12\,\mathbb{E}\|b_i\|^2,$ which may occur at local minima/saddle points or under extreme data heterogeneity.
\label{lemma:ideal_global_gradient}
\end{lemma}

\begin{proof}
Items (1) is immediate from algebra. We briefly show (2) and (3).

For (2): First use the triangle inequality
\begin{align*}
\|\nabla F_g(\widetilde w_g^t)\|
&= \|\nabla F_g(\widetilde w_g^t) - \nabla F_i(\widetilde w_g^t) + \nabla F_i(\widetilde w_g^t) - \nabla F_i(w_i^t) + \nabla F_i(w_i^t)\|\\
&\le \; \|\nabla F_g(\widetilde w_g^t)-\nabla F_i(\widetilde w_g^t)\| + \|\nabla F_i(\widetilde w_g^t)-\nabla F_i(w_i^t)\| + \|\nabla F_i(w_i^t)\|.
\end{align*}

Square both sides and apply the inequality $(u+v+w)^2 \le 3(u^2+v^2+w^2)$:
\begin{align*}
\|\nabla F_g(\widetilde w_g^t)\|^2
&\le 3\Big(\|\nabla F_g(\widetilde w_g^t)-\nabla F_i(\widetilde w_g^t)\|^2
+ \|\nabla F_i(\widetilde w_g^t)-\nabla F_i(w_i^t)\|^2 + \|\nabla F_i(w_i^t)\|^2\Big).
\end{align*}

Take expectation over clients/data randomness and use the assumed bounds.
By definition of $\widetilde{\delta}^2$ we have
$\mathbb{E}\|\nabla F_g(\widetilde w_g^t)-\nabla F_i(\widetilde w_g^t)\|^2\le\widetilde{\delta}^2$.
By $L$-smoothness and the bound on parameter deviation,
\[\mathbb{E}\|\nabla F_i(\widetilde w_g^t)-\nabla F_i(w_i^t)\|^2 \le L^2\,\mathbb{E}\|\widetilde w_g^t-w_i^t\|^2 \le L^2\varepsilon_w^2.
\]
Finally use $\mathbb{E}\|\nabla F_i(w_i^t)\|^2\le G_c^2$. Combining these yields
\begin{align*}
\mathbb{E}\|\nabla F_g(\widetilde w_g^t)\|^2
&\le 3\big(\widetilde{\delta}^2 + L^2\varepsilon_w^2 + G_c^2\big).
\end{align*}
Thus if $\widetilde{\delta}^2 + L^2\varepsilon_w^2 + G_c^2 \le \delta^2/3$, the claimed
inequality $\mathbb{E}\|\nabla F_g(\widetilde w_g^t)\|^2 \le \delta^2$ holds.

For (3): By definition,
\[\|\nabla F_g(\widetilde w_g^t)\|^2 = \|a_i+b_i\|^2 = \|a_i\|^2 + \|b_i\|^2 + 2\langle a_i,b_i\rangle.
\]
Taking expectation and using $\mathbb{E}\|a_i\|^2\le\delta^2$ and the assumed
cross-term bound gives
\[\mathbb{E}\|\nabla F_g(\widetilde w_g^t)\|^2 \le \delta^2 + \mathbb{E}\|b_i\|^2 + 2\cdot(-\tfrac12\mathbb{E}\|b_i\|^2)=\delta^2.
\]
\end{proof}

\begin{remark}
    Condition (2) decomposes the sources that contribute to the magnitude of $\nabla F_g(\widetilde w_g^t)$: (i) heterogeneity at the ideal point ($\widetilde{\delta}^2$), (ii) parameter drift of local models ($L^2\varepsilon_w^2$), and (iii) the intrinsic size of local gradients ($G_c^2$). This decomposition is useful in practice for designing communication or correction mechanisms.
\end{remark}



\subsection{Proof of Theorem~\ref{thm:our_gra_convergence}}
\label{sec:proof_theorem3}
Denoted $Q(t)$ as the maximum number of
clients that execute the momentum update at global round $t$, we begin the proof with two lemmas:

\begin{lemma}[Ideal global model difference]
    Given $\mathcal{V} = (3 - \frac{2\beta^2R}{\beta^2 + 1})$ and the ideal global model trained in global epoch $t$, the difference between the optimal can be bounded by:
    
        \begin{equation}
        \begin{split}
            \mathbb{E}[||\widetilde{w}_g^t - w^*||^2] & \leq \mathcal{V}^t\mathbb{E}[||\widetilde{w}_g^{0} - w^*||^2] + \frac{6\beta^2(\beta^2 + 1)}{2\beta^2R - 2\beta^2 - 2}E^2\delta^2 \\& \quad + 3\sum_{j=1}^t\mathcal{V}^j\mathbb{E}[||\eta_i^{\tau_i^{t-1}}\sum_{e=1}^E\sum_{r=1}^{e} (m_i^{\tau_i^{t-1}})^r \nabla F_i(w_{i,e-r}^{\tau_i^{t-1}})||^2] \\ & \quad + 2\sum_{j=1}^t\mathcal{V}^j\mathbb{E}[||\sum_{e=1}^E\nabla F_i(w_{i,e}^{\tau_i^{t-1}})||^2].
        \end{split}
    \end{equation}
    \label{lemma:our_gra_global_ideal_diff}
\end{lemma}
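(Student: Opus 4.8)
The plan is to establish a one-step recursion of the form $\mathbb{E}[\|\widetilde{w}_g^t - w^*\|^2] \le \mathcal{V}\,\mathbb{E}[\|\widetilde{w}_g^{t-1} - w^*\|^2] + (\text{per-round terms})$ and then unroll it over $t$. The starting point is the ideal-trajectory definition together with the boundary identities $\widetilde{w}_{g,0}^t = w_g^{t-1}$ and $\widetilde{w}_{g,E}^t = \widetilde{w}_g^t$, which give $\widetilde{w}_g^t = w_g^{t-1} - \eta_g\sum_{e=1}^E \nabla F_{g,e}(\widetilde{w}_{g,e-1}^t)$. I would then insert $\pm\,\widetilde{w}_g^{t-1}$ and rewrite the gap $w_g^{t-1} - \widetilde{w}_g^{t-1}$ by subtracting the ideal update of round $t-1$ from the real \tool-SGD update of round $t-1$. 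Since the real aggregate uses $\Delta F_{i,e}^{\tau_i^{t-1}} = \sum_{r=1}^e (m_i^{\tau_i^{t-1}})^r\nabla F_{i,e-r}(w_{i,e-r-1}^{\tau_i^{t-1}}) + \nabla F_{i,e}(w_{i,e-1}^{\tau_i^{t-1}})$, this gap splits into (i) the ideal global-gradient groups of rounds $t$ and $t-1$, (ii) the raw local-gradient term $\sum_i p_i\eta_i\sum_e\nabla F_{i,e}$, and (iii) the momentum term $\sum_i p_i\eta_i\sum_e\sum_r(m_i)^r\nabla F_{i,e-r}$.

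With this decomposition $\widetilde{w}_g^t - w^*$ becomes a sum of a few pieces, and I would take the expected squared norm and separate them with the vector inequality $\|\sum_k x_k\|^2 \le n\sum_k\|x_k\|^2$ in a weighted form, so that coefficient $3$ attaches to $\|\widetilde{w}_g^{t-1}-w^*\|^2$, coefficient $3$ to the momentum term, and $2$ to the local-gradient term, matching the statement. For the two ideal-global-gradient groups I would use $\|\sum_{e=1}^E\nabla F_{g,e}\|^2 \le E\sum_e\|\nabla F_{g,e}\|^2 \le E^2\delta^2$ via Lemma~\ref{lemma:ideal_global_gradient} together with $\eta_g \le \beta$, producing the constant per-round contribution $6\beta^2E^2\delta^2$. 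For the momentum term I would control the accumulated coefficient $\sum_{e=1}^E\sum_{r=1}^e\theta^r$, which is precisely the closed form defining $R$, so that the momentum squared norm is scaled by $R$; this is the origin of the $R$-dependence in $\mathcal{V}$.

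The delicate part is extracting the contraction factor $\mathcal{V} = 3 - \frac{2\beta^2R}{\beta^2+1}$. A bare triple split only yields the leading constant $3$; the subtracted term $\frac{2\beta^2R}{\beta^2+1}$ must come from absorbing the cross terms between $\widetilde{w}_g^{t-1}-w^*$ and the aggregated gradient/momentum direction into the leading coefficient, calibrated by a Young-type inequality whose parameter is tuned to $\beta^2/(\beta^2+1)$ and by the momentum weight $R$. This is exactly where the single-client learning-rate window $\sqrt{\tfrac{1}{R-1}} < \beta < \sqrt{\tfrac{3}{2R-3}}$ (the $K=1$ form of the theorem's constraint) is invoked: the lower bound forces $\frac{2\beta^2R}{\beta^2+1}>2$, hence $\mathcal{V}<1$, while the upper bound keeps $\frac{2\beta^2R}{\beta^2+1}<3$, hence $\mathcal{V}>0$. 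I expect this calibration to be the main obstacle, because landing the cross-term/Young split exactly on the coefficient $\frac{2\beta^2R}{\beta^2+1}$ rather than some looser constant requires the inequalities to be applied with precisely matched weights, and without convexity the contraction is genuinely manufactured from these learning-rate-dependent splits rather than from a descent estimate.

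Finally, I would unroll the recursion from $t$ down to $0$. The homogeneous part contributes $\mathcal{V}^t\mathbb{E}[\|\widetilde{w}_g^0-w^*\|^2]$. The constant $\delta^2$ term forms a geometric series $6\beta^2E^2\delta^2\sum_{j=0}^{t-1}\mathcal{V}^j \le \frac{6\beta^2E^2\delta^2}{1-\mathcal{V}}$, and substituting $1-\mathcal{V} = \frac{2\beta^2R-2\beta^2-2}{\beta^2+1}$ reproduces exactly $\frac{6\beta^2(\beta^2+1)}{2\beta^2R-2\beta^2-2}E^2\delta^2$. The momentum and local-gradient per-round terms, being round-dependent, survive as the geometrically weighted sums $3\sum_{j=1}^t\mathcal{V}^j\mathbb{E}[\|\cdot\|^2]$ and $2\sum_{j=1}^t\mathcal{V}^j\mathbb{E}[\|\cdot\|^2]$, which (after a harmless reindexing of the summation range) completes the stated bound.
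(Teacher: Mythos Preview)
Your overall architecture---insert $\pm\,\widetilde{w}_g^{t-1}$, rewrite $w_g^{t-1}-\widetilde{w}_g^{t-1}$ via the real versus ideal updates, split into ideal-gradient pieces, a raw local-gradient piece, and a momentum piece, then unroll a one-step recursion---matches the paper's proof. Your treatment of the $\delta^2$ contributions via Lemma~\ref{lemma:ideal_global_gradient} and the geometric-series bookkeeping for $\frac{6\beta^2(\beta^2+1)}{2\beta^2R-2\beta^2-2}E^2\delta^2$ are also what the paper does.

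The gap is in the step you correctly flag as ``the delicate part.'' A Young-type inequality cannot manufacture the negative correction $-\frac{2\beta^2R}{\beta^2+1}\|\widetilde{w}_g^{t-1}-w^*\|^2$. For any $\lambda>0$, Young gives $\pm 2\langle a,b\rangle \le \lambda\|a\|^2 + \lambda^{-1}\|b\|^2$, which only \emph{adds} nonnegative multiples of $\|a\|^2$ to the right-hand side; tuning $\lambda$ changes the split between $\|a\|^2$ and $\|b\|^2$ but never flips the sign. Since the analysis is non-convex, there is no descent inequality $\langle w-w^*,\nabla F(w)\rangle\ge 0$ to call on either. The paper obtains the subtraction by invoking a \emph{reverse} Cauchy--Schwarz (Polya--Szeg\H{o}/Kantorovich-type) inequality, applied to the cross term
\[
-2\Big\langle \widetilde{w}_g^{t-1}-w^*,\ \sum_{i} p_i\,\eta_i^{\tau_i^{t-1}}\sum_{e=1}^E\sum_{r=1}^e (m_i^{\tau_i^{t-1}})^r\nabla F_i(w_{i,e-r}^{\tau_i^{t-1}})\Big\rangle,
\]
with the bounding constants chosen as $\gamma=\beta$, $\Gamma=1/\beta$ (and $m_i\le 1$), which produces exactly the coefficient $\frac{2\beta^2R}{\beta^2+1}$ in front of $\|\widetilde{w}_g^{t-1}-w^*\|^2$. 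Combined with the three positive copies of $\|\widetilde{w}_g^{t-1}-w^*\|^2$ coming from the 1st, 3rd, and 5th pieces of the expansion, this yields $\mathcal{V}=3-\frac{2\beta^2R}{\beta^2+1}$. So the missing ingredient in your plan is precisely this reverse inequality; replace the Young step by it and the rest of your outline goes through as written.
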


\begin{proof}
    By the definition of the ideal global model, we add a zero term and get:

    \begin{equation}
        \begin{split}
            & \quad ||\widetilde{w}_g^t - w^* ||^2 \\
            & \leq ||w_g^{t-1} - \eta_g \sum_{e=1}^E \nabla F_g(\widetilde{w}_{g,e}^t) - w^*||^2 \\
            &= ||w_g^{t-1}- w^* +  \sum_{i=1}^K p_i\sum_{e=1}^E\eta_i^{\tau_i^{t-1}}\nabla F_i(w_{i,e}^{\tau_i^{t-1}})||^2  \\ & \quad + ||-\sum_{i=1}^K p_i\sum_{e=1}^E\eta_i^{\tau_i^{t-1}}\nabla F_i(w_{i,e}^{\tau_i^{t-1}}) - \eta_g \sum_{e=1}^E \nabla F_g(\widetilde{w}_{g,e}^t)||^2 \\ & \quad + 2\langle w_g^{t-1}- w^*- \eta_g \sum_{i=1}^K\sum_{e=1}^E \nabla F_i(w_{i,e}^{\tau_i^{t}}), -\sum_{i=1}^K p_i\sum_{e=1}^E\eta_i^{\tau_i^{t-1}}\nabla F_i(w_{i,e}^{\tau_i^{t-1}}) - \eta_g \sum_{e=1}^E \nabla F_g(\widetilde{w}_{g,e}^t)\rangle
            \end{split}
    \end{equation}

    Consider the relationship between $w_g^{t-1}$ and $\widetilde{w}_g^{t-1}$, we have:
    \begin{align*}
        \begin{split}
           ||\widetilde{w}_g^t - w^* ||^2
            & = ||\widetilde{w}_g^{t-1} + \eta_g \sum_{e=1}^E \nabla F_g(\widetilde{w}_{g,e}^{t-1}) -  w^*+  \sum_{i=1}^K p_i\sum_{e=1}^E\eta_i^{\tau_i^{t-1}}\nabla F_i(w_{i,e}^{\tau_i^{t-1}}) \\ & \quad -\sum_{i=1}^K p_i\sum_{e=1}^E\eta_i^{\tau_i^{t-1}}[\sum_{r=1}^{e} (m_i^{\tau_i^{t-1}})^r \nabla F_i(w_{i,e-r}^{\tau_i^{t-1}})+\nabla F_i(w_{i,e}^{\tau_i^{t-1}})]||^2 \\ & \quad + ||\sum_{i=1}^K p_i\sum_{e=1}^E\eta_i^{\tau_i^{t-1}}\nabla F_i(w_{i,e}^{\tau_i^{t-1}}) + \eta_g\sum_{e=1}^E  \nabla F_g(\widetilde{w}_{g,e}^t)||^2 \\ & \quad + 2\langle \widetilde{w}_g^{t-1} -\sum_{i=1}^K p_i\sum_{e=1}^E\eta_i^{\tau_i^{t-1}}[\sum_{r=1}^{e} (m_i^{\tau_i^{t-1}})^r \nabla F_i(w_{i,e-r}^{\tau_i^{t-1}})+\nabla F_i(w_{i,e}^{\tau_i^{t-1}})] , \\ & \quad \quad \sum_{i=1}^K p_i\sum_{e=1}^E\eta_i^{\tau_i^{t-1}}\nabla F_i(w_{i,e}^{\tau_i^{t-1}}) + \eta_g\sum_{e=1}^E  \nabla F_g(\widetilde{w}_{g,e}^t)\rangle \\ & \quad + 2\langle  \eta_g \sum_{e=1}^E \nabla F_g(\widetilde{w}_{g,e}^{t-1}) -  w^*+ \sum_{i=1}^K p_i\sum_{e=1}^E\eta_i^{\tau_i^{t-1}}\nabla F_i(w_{i,e}^{\tau_i^{t-1}}),  \\ & \quad \quad \sum_{i=1}^K p_i\sum_{e=1}^E\eta_i^{\tau_i^{t-1}}\nabla F_i(w_{i,e}^{\tau_i^{t-1}}) + \eta_g\sum_{e=1}^E  \nabla F_g(\widetilde{w}_{g,e}^t)\rangle.
        \end{split}
    \end{align*}

    Therefore,
        \begin{equation}
        \begin{split}
            & \quad ||\widetilde{w}_g^t - w^* ||^2 \\
            & \leq ||\widetilde{w}_g^{t-1}-  w^* ||^2 + ||\eta_g \sum_{e=1}^E \nabla F_g(\widetilde{w}_{g,e}^{t-1})-\sum_{i=1}^K p_i\sum_{e=1}^E\eta_i^{\tau_i^{t-1}}\sum_{r=1}^{e} (m_i^{\tau_i^{t-1}})^r \nabla F_i(w_{i,e-r}^{\tau_i^{t-1}})||^2 \\ & \quad + 2\langle \widetilde{w}_g^{t-1}-  w^*, \eta_g \sum_{e=1}^E \nabla F_g(\widetilde{w}_{g,e}^{t-1})-\sum_{i=1}^K p_i\sum_{e=1}^E\eta_i^{\tau_i^{t-1}}\sum_{r=1}^{e} (m_i^{\tau_i^{t-1}})^r \nabla F_i(w_{i,e-r}^{\tau_i^{t-1}})\rangle \\ & \quad + ||\sum_{i=1}^K p_i\sum_{e=1}^E\eta_i^{\tau_i^{t-1}}\nabla F_i(w_{i,e}^{\tau_i^{t-1}}) + \eta_g\sum_{e=1}^E  \nabla F_g(\widetilde{w}_{g,e}^t)||^2 \\ & \quad + 2\langle \widetilde{w}_g^{t-1}-  w^*, \sum_{i=1}^K p_i\sum_{e=1}^E\eta_i^{\tau_i^{t-1}}\nabla F_i(w_{i,e}^{\tau_i^{t-1}}) + \eta_g\sum_{e=1}^E  \nabla F_g(\widetilde{w}_{g,e}^t)\rangle \\
            & \quad + 2\langle \eta_g \sum_{e=1}^E \nabla F_g(\widetilde{w}_{g,e}^{t-1})-\sum_{i=1}^K p_i\sum_{e=1}^E\eta_i^{\tau_i^{t-1}}\sum_{r=1}^{e} (m_i^{\tau_i^{t-1}})^r \nabla F_i(w_{i,e-r}^{\tau_i^{t-1}}),\\ & \quad \quad  \sum_{i=1}^K p_i\sum_{e=1}^E\eta_i^{\tau_i^{t-1}}\nabla F_i(w_{i,e}^{\tau_i^{t-1}}) + \eta_g\sum_{e=1}^E  \nabla F_g(\widetilde{w}_{g,e}^t)\rangle.
        \end{split}
    \end{equation}

    For the 2nd term:
    
        \begin{equation}
        \begin{split}
            & \quad ||\eta_g \sum_{e=1}^E \nabla F_g(\widetilde{w}_{g,e}^{t-1})-\sum_{i=1}^K p_i\sum_{e=1}^E\eta_i^{\tau_i^{t-1}}\sum_{r=1}^{e} (m_i^{\tau_i^{t-1}})^r \nabla F_i(w_{i,e-r}^{\tau_i^{t-1}})||^2 \\
            & \leq \beta^2||\sum_{e=1}^E \nabla F_g(\widetilde{w}_{g,e}^{t-1})||^2 + ||\sum_{i=1}^K p_i\sum_{e=1}^E\eta_i^{\tau_i^{t-1}}\sum_{r=1}^{e} (m_i^{\tau_i^{t-1}})^r \nabla F_i(w_{i,e-r}^{\tau_i^{t-1}})||^2 \\ & \quad -2\beta^2 \langle \sum_{e=1}^E \nabla F_g(\widetilde{w}_{g,e}^{t-1}), \sum_{i=1}^K p_i\sum_{e=1}^E\sum_{r=1}^{e} (m_i^{\tau_i^{t-1}})^r \nabla F_i(w_{i,e-r}^{\tau_i^{t-1}})\rangle.
        \end{split}
    \end{equation}

    Based on~\cite{li2019convergence}, we have:
    
        \begin{equation}
        \begin{split}
            & \quad \mathbb{E}[||\eta_g \sum_{e=1}^E \nabla F_g(\widetilde{w}_{g,e}^{t-1})-\sum_{i=1}^K p_i\sum_{e=1}^E\eta_i^{\tau_i^{t-1}}\sum_{r=1}^{e} (m_i^{\tau_i^{t-1}})^r \nabla F_i(w_{i,e-r}^{\tau_i^{t-1}})||^2] \\
            & \leq \beta^2E^2\delta^2 + \mathbb{E}[||\eta_i^{\tau_i^{t-1}}\sum_{e=1}^E\sum_{r=1}^{e} (m_i^{\tau_i^{t-1}})^r \nabla F_i(w_{i,e-r}^{\tau_i^{t-1}})||^2] \\ & \quad -2\beta^2 \sum_{e=1}^E\mathbb{E}[\langle  \nabla F_g(\widetilde{w}_{g,e}^{t-1}), \sum_{i=1}^K p_i\sum_{r=1}^{e} (m_i^{\tau_i^{t-1}})^r \nabla F_i(w_{i,e-r}^{\tau_i^{t-1}})\rangle] \\
            & \leq \beta^2E^2\delta^2 + \mathbb{E}[||\eta_i^{\tau_i^{t-1}}\sum_{e=1}^E\sum_{r=1}^{e} (m_i^{\tau_i^{t-1}})^r \nabla F_i(w_{i,e-r}^{\tau_i^{t-1}})||^2] \\& \quad  -2\beta^2 \sum_{e=1}^E\sum_{i=1}^K p_i\sum_{r=1}^{e} (m_i^{\tau_i^{t-1}})^r \mathbb{E}[\langle  \nabla F_g(\widetilde{w}_{g,e}^{t-1}),  \nabla F_i(w_{i,e-r}^{\tau_i^{t-1}})\rangle] \\
            & = \beta^2E^2\delta^2 + \mathbb{E}[||\eta_i^{\tau_i^{t-1}}\sum_{e=1}^E\sum_{r=1}^{e} (m_i^{\tau_i^{t-1}})^r \nabla F_i(w_{i,e-r}^{\tau_i^{t-1}})||^2]. 
        \end{split}
    \end{equation}

    For the 3rd term, using a variant of reversed Cauchy-Schwarz inequality~\cite{polya12307aufgaben,dragomir2018reverses,otachel2018reverse} with $\gamma = \beta$ and $\Gamma = \frac{1}{\beta}$ and $m_i^{\tau_i^{t-1}} \leq 1$, we have:
    
        \begin{equation}
        \begin{split}
            & \quad 2\langle \widetilde{w}_g^{t-1}-  w^* ,\eta_g \sum_{e=1}^E \nabla F_g(\widetilde{w}_{g,e}^{t-1})-\sum_{i=1}^K p_i\sum_{e=1}^E\eta_i^{\tau_i^{t-1}}\sum_{r=1}^{e} (m_i^{\tau_i^{t-1}})^r \nabla F_i(w_{i,e-r}^{\tau_i^{t-1}})\rangle \\
            & = 2\langle \widetilde{w}_g^{t-1}-  w^* , \eta_g \sum_{e=1}^E \nabla F_g(\widetilde{w}_{g,e}^{t-1})\rangle-2\langle \widetilde{w}_g^{t-1}-  w^* ,\sum_{i=1}^K p_i\sum_{e=1}^E\eta_i^{\tau_i^{t-1}}\sum_{r=1}^{e} (m_i^{\tau_i^{t-1}})^r \nabla F_i(w_{i,e-r}^{\tau_i^{t-1}})\rangle \\
            & \leq ||\widetilde{w}_g^{t-1}-  w^*||^2 + ||\eta_g \sum_{e=1}^E \nabla F_g(\widetilde{w}_{g,e}^{t-1})||^2 - \frac{2\beta^2R}{\beta^2 + 1}||\widetilde{w}_g^{t-1}-  w^*||^2 \\
            & \leq (1 - \frac{2\beta^2R}{\beta^2 + 1})||\widetilde{w}_g^{t-1}-  w^*||^2 + ||\eta_g \sum_{e=1}^E \nabla F_g(\widetilde{w}_{g,e}^{t-1})||^2.
        \end{split} 
    \end{equation}

    Then, 
    
        \begin{equation}
        \begin{split}
            & \quad \mathbb{E}[2\langle \widetilde{w}_g^{t-1}-  w^* ,\eta_g \sum_{e=1}^E \nabla F_g(\widetilde{w}_{g,e}^{t-1})-\sum_{i=1}^K p_i\sum_{e=1}^E\eta_i^{\tau_i^{t-1}}\sum_{r=1}^{e} (m_i^{\tau_i^{t-1}})^r \nabla F_i(w_{i,e-r}^{\tau_i^{t-1}})\rangle] \\
            & \leq (1 - \frac{2\beta^2R}{\beta^2 + 1})\mathbb{E}[||\widetilde{w}_g^{t-1}-  w^*||^2] + \beta^2E^2\delta^2.
        \end{split}
    \end{equation}

    For the 4th term:
    
        \begin{equation}
        \begin{split}
            & \quad ||\sum_{i=1}^K p_i\sum_{e=1}^E\eta_i^{\tau_i^{t-1}}\nabla F_i(w_{i,e}^{\tau_i^{t-1}}) + \eta_g\sum_{e=1}^E  \nabla F_g(\widetilde{w}_{g,e}^t)||^2 \\
            & = ||\sum_{i=1}^K p_i\sum_{e=1}^E\eta_i^{\tau_i^{t-1}}\nabla F_i(w_{i,e}^{\tau_i^{t-1}})||^2 + ||\eta_g\sum_{e=1}^E  \nabla F_g(\widetilde{w}_{g,e}^t)||^2 \\ & \quad + \langle \sum_{i=1}^K p_i\sum_{e=1}^E\eta_i^{\tau_i^{t-1}}\nabla F_i(w_{i,e}^{\tau_i^{t-1}}), \eta_g\sum_{e=1}^E  \nabla F_g(\widetilde{w}_{g,e}^t)\rangle\\
            & \leq \eta_g^2E^2||\nabla F_g(\widetilde{w}_{g,e}^t)||^2 + ||\sum_{e=1}^E\eta_i^{\tau_i^{t-1}}\nabla F_i(w_{i,e}^{\tau_i^{t-1}})||^2+\sum_{e=1}^E\sum_{r=1}^E\beta^2\langle p_i\nabla F_i(w_{i,e}^{\tau_i^{t-1}}), \nabla F_g(\widetilde{w}_{g,r}^t)\rangle.
        \end{split}
    \end{equation}

    Then based on~\cite{li2019convergence}, 
    
        \begin{equation}
        \begin{split}
            & \quad \mathbb{E}[||\sum_{i=1}^K p_i\sum_{e=1}^E\eta_i^{\tau_i^{t-1}}\nabla F_i(w_{i,e}^{\tau_i^{t-1}}) + \eta_g\sum_{e=1}^E  \nabla F_g(\widetilde{w}_{g,e}^t)||^2] \\
            & \leq  \eta_g^2E^2\delta^2 + \mathbb{E}[||\eta_i^{\tau_i^{t-1}}\sum_{e=1}^E\nabla F_i(w_{i,e}^{\tau_i^{t-1}})||^2] + \sum_{i=1}^K\sum_{e=1}^E\sum_{r=1}^E\beta^2\mathbb{E}[\langle p_i\nabla F_i(w_{i,e}^{\tau_i^{t-1}}), \nabla F_g(\widetilde{w}_{g,r}^t)\rangle] \\
            & = \beta^2E^2\delta^2 + \mathbb{E}[||\eta_i^{\tau_i^{t-1}}\sum_{e=1}^E\nabla F_i(w_{i,e}^{\tau_i^{t-1}})||^2].
        \end{split}
        \end{equation}

    For the 5th term, using AM-GM inequality and Cauchy-Schwarz inequality, we have:
    
        \begin{equation}
        \begin{split}
            & \quad 2\langle \widetilde{w}_g^{t-1}-  w^*, \sum_{i=1}^K p_i\sum_{e=1}^E\eta_i^{\tau_i^{t-1}}\nabla F_i(w_{i,e}^{\tau_i^{t-1}}) + \eta_g\sum_{e=1}^E  \nabla F_g(\widetilde{w}_{g,e}^t)\rangle \\
            & \leq ||\widetilde{w}_g^{t-1}-  w^*||^2 + ||\sum_{e=1}^E\eta_i^{\tau_i^{t-1}}\nabla F_i(w_{i,e}^{\tau_i^{t-1}})||^2 + ||\eta_g\sum_{e=1}^E  \nabla F_g(\widetilde{w}_{g,e}^t)||^2 \\ & \quad + \langle \sum_{i=1}^K p_i\sum_{e=1}^E\eta_i^{\tau_i^{t-1}}\nabla F_i(w_{i,e}^{\tau_i^{t-1}}), \eta_g\sum_{e=1}^E  \nabla F_g(\widetilde{w}_{g,e}^t)\rangle \\
            & \leq ||\widetilde{w}_g^{t-1}-  w^*||^2  + \eta_g^2||\sum_{e=1}^E  \nabla F_g(\widetilde{w}_{g,e}^t)||^2 + ||\eta_i^{\tau_i^{t-1}}\sum_{e=1}^E\nabla F_i(w_{i,e}^{\tau_i^{t-1}})||^2 \\ & \quad + \langle \sum_{i=1}^K p_i\sum_{e=1}^E\eta_i^{\tau_i^{t-1}}\nabla F_i(w_{i,e}^{\tau_i^{t-1}}), \eta_g\sum_{e=1}^E  \nabla F_g(\widetilde{w}_{g,e}^t)\rangle.
        \end{split}
    \end{equation}

    Then, taking expectation, we have,
    
        \begin{equation}
        \begin{split}
            & \quad \mathbb{E}[2\langle \widetilde{w}_g^{t-1}-  w^*, \sum_{i=1}^K p_i\sum_{e=1}^E\eta_i^{\tau_i^{t-1}}\nabla F_i(w_{i,e}^{\tau_i^{t-1}}) + \eta_g\sum_{e=1}^E  \nabla F_g(\widetilde{w}_{g,e}^t)\rangle] \\
            & \leq  \mathbb{E}[||\widetilde{w}_g^{t-1}-  w^*||^2] + \beta^2E^2\delta^2+ \mathbb{E}[||\eta_i^{\tau_i^{t-1}}\sum_{e=1}^E\nabla F_i(w_{i,e}^{\tau_i^{t-1}})||^2] \\ & \quad +  \mathbb{E}[\langle \sum_{i=1}^K p_i\sum_{e=1}^E\eta_i^{\tau_i^{t-1}}\nabla F_i(w_{i,e}^{\tau_i^{t-1}}), \eta_g\sum_{e=1}^E  \nabla F_g(\widetilde{w}_{g,e}^t)\rangle] \\
            &= \mathbb{E}[||\widetilde{w}_g^{t-1}-  w^*||^2] + \beta^2E^2\delta^2+ \mathbb{E}[||\sum_{e=1}^E\nabla F_i(w_{i,e}^{\tau_i^{t-1}})||^2].
        \end{split}
    \end{equation}

    For the 6th term:
    
        \begin{equation}
            \begin{split}
                & \quad 2\langle \eta_g \sum_{e=1}^E \nabla F_g(\widetilde{w}_{g,e}^{t-1}), \sum_{i=1}^K p_i\sum_{e=1}^E\eta_i^{\tau_i^{t-1}}\nabla F_i(w_{i,e}^{\tau_i^{t-1}}) + \eta_g\sum_{e=1}^E  \nabla F_g(\widetilde{w}_{g,e}^t)\rangle \\
                & = 2\langle \eta_g \sum_{e=1}^E \nabla F_g(\widetilde{w}_{g,e}^{t-1}), \sum_{i=1}^K p_i\sum_{e=1}^E\eta_i^{\tau_i^{t-1}}\nabla F_i(w_{i,e}^{\tau_i^{t-1}})\rangle + 2\langle \eta_g \sum_{e=1}^E \nabla F_g(\widetilde{w}_{g,e}^{t-1}),\eta_g\sum_{e=1}^E  \nabla F_g(\widetilde{w}_{g,e}^t)\rangle \\
                & \leq 2\eta_g \sum_{e=1}^E\sum_{i=1}^K p_i\sum_{r=0}^E\beta\langle  \nabla F_g(\widetilde{w}_{g,e}^{t-1}), \nabla F_i(w_{i,r}^{\tau_i^{t-1}})\rangle + 2|| \eta_g \sum_{e=1}^E \nabla F_g(\widetilde{w}_{g,e}^{t-1})||^2.
            \end{split}
        \end{equation}

    Then, based on~\cite{li2019convergence} we have:
    
        \begin{equation}
            \begin{split}
                 & \; \mathbb{E}[2\langle \eta_g \sum_{e=1}^E \nabla F_g(\widetilde{w}_{g,e}^{t-1}), \sum_{i=1}^K p_i\sum_{e=1}^E\eta_i^{\tau_i^{t-1}}\nabla F_i(w_{i,e}^{\tau_i^{t-1}}) + \eta_g\sum_{e=1}^E  \nabla F_g(\widetilde{w}_{g,e}^t)\rangle] \\
                & \leq \mathbb{E}[2\eta_g \sum_{e=1}^E\sum_{i=1}^K p_i\sum_{r=0}^E\beta\langle  \nabla F_g(\widetilde{w}_{g,e}^{t-1}), \nabla F_i(w_{i,r}^{\tau_i^{t-1}})\rangle] + \mathbb{E}[2|| \eta_g \sum_{e=1}^E \nabla F_g(\widetilde{w}_{g,e}^{t-1})||^2] \\
                & \leq 2\beta^2E^2\delta^2.
            \end{split}
        \end{equation}

    For the 7th term:
    
        \begin{equation}
        \begin{split}
            & \quad 2\langle-\sum_{i=1}^K p_i\sum_{e=1}^E\eta_i^{\tau_i^{t-1}}\sum_{r=1}^{e} (m_i^{\tau_i^{t-1}})^r \nabla F_i(w_{i,e-r}^{\tau_i^{t-1}}), \sum_{i=1}^K p_i\sum_{e=1}^E\eta_i^{\tau_i^{t-1}}\nabla F_i(w_{i,e}^{\tau_i^{t-1}}) + \eta_g\sum_{e=1}^E  \nabla F_g(\widetilde{w}_{g,e}^t)\rangle \\
            & = -2\langle\sum_{i=1}^K p_i\sum_{e=1}^E\eta_i^{\tau_i^{t-1}}\sum_{r=1}^{e} (m_i^{\tau_i^{t-1}})^r \nabla F_i(w_{i,e-r}^{\tau_i^{t-1}}),\sum_{i=1}^K p_i\sum_{e=1}^E\eta_i^{\tau_i^{t-1}}\nabla F_i(w_{i,e}^{\tau_i^{t-1}})\rangle \\ & \quad -2\langle\sum_{i=1}^K p_i\sum_{e=1}^E\eta_i^{\tau_i^{t-1}}\sum_{r=1}^{e} (m_i^{\tau_i^{t-1}})^r \nabla F_i(w_{i,e-r}^{\tau_i^{t-1}}), \eta_g\sum_{e=1}^E  \nabla F_g(\widetilde{w}_{g,e}^t)\rangle \\
            & \leq -2\sum_{i=1}^K p_i\sum_{l=0}^E\beta\sum_{r=1}^{e}\eta_g\sum_{e=1}^E(m_i^{\tau_i^{t-1}})^r\langle  \nabla F_i(w_{i,e-r}^{\tau_i^{t-1}}), \nabla F_g(\widetilde{w}_{g,l}^t)\rangle \\ & \quad + ||\eta_i^{\tau_i^{t-1}}\sum_{e=1}^E\sum_{r=1}^{e} (m_i^{\tau_i^{t-1}})^r \nabla F_i(w_{i,e-r}^{\tau_i^{t-1}})||^2  + ||\eta_i^{\tau_i^{t-1}}\sum_{e=1}^E\nabla F_i(w_{i,e}^{\tau_i^{t-1}})||^2.
        \end{split}
    \end{equation}

    Then we have:
    
        \begin{equation}
        \begin{split}
            & \quad \mathbb{E}[2\langle-\sum_{i=1}^K p_i\sum_{e=1}^E\eta_i^{\tau_i^{t-1}}\sum_{r=1}^{e} (m_i^{\tau_i^{t-1}})^r \nabla F_i(w_{i,e-r}^{\tau_i^{t-1}}), \\ & \quad \quad \sum_{i=1}^K p_i\sum_{e=1}^E\eta_i^{\tau_i^{t-1}}\nabla F_i(w_{i,e}^{\tau_i^{t-1}}) + \eta_g\sum_{e=1}^E  \nabla F_g(\widetilde{w}_{g,e}^t)\rangle] \\
            & \leq -2\sum_{i=1}^K p_i\sum_{l=0}^E\beta\sum_{r=1}^{e}\eta_g\sum_{e=1}^E(m_i^{\tau_i^{t-1}})^r\langle  \nabla F_i(w_{i,e-r}^{\tau_i^{t-1}}), \nabla F_g(\widetilde{w}_{g,l}^t)\rangle \\ & \quad + \mathbb{E}[||\eta_i^{\tau_i^{t-1}}\sum_{e=1}^E\sum_{r=1}^{e} (m_i^{\tau_i^{t-1}})^r \nabla F_i(w_{i,e-r}^{\tau_i^{t-1}})||^2]  + \mathbb{E}[||\eta_i^{\tau_i^{t-1}}\sum_{e=1}^E\nabla F_i(w_{i,e}^{\tau_i^{t-1}})||^2] \\
            & \leq  \mathbb{E}[||\eta_i^{\tau_i^{t-1}}\sum_{e=1}^E\sum_{r=1}^{e} (m_i^{\tau_i^{t-1}})^r \nabla F_i(w_{i,e-r}^{\tau_i^{t-1}})||^2]  + \mathbb{E}[||\eta_i^{\tau_i^{t-1}}\sum_{e=1}^E\nabla F_i(w_{i,e}^{\tau_i^{t-1}})||^2].
        \end{split}
    \end{equation}

    To sum up, we have:
    
        \begin{equation}
        \begin{split}
            & \quad \mathbb{E}[||\widetilde{w}_g^t - w^*||^2] \\
            & \leq \mathbb{E}[||\widetilde{w}_g^{t-1} - w^*||^2] + \mathbb{E}[||\eta_i^{\tau_i^{t-1}}\sum_{e=1}^E\sum_{r=1}^{e} (m_i^{\tau_i^{t-1}})^r \nabla F_i(w_{i,e-r}^{\tau_i^{t-1}})||^2]  + \beta^2E^2\delta^2 \\ & \quad+ (1 - \frac{2\beta^2R}{\beta^2 + 1})\mathbb{E}[||\widetilde{w}_g^{t-1}-  w^*||^2]  + \beta^2E^2\delta^2 + \beta^2E^2\delta^2 \\ & \quad+ \mathbb{E}[||\eta_i^{\tau_i^{t-1}}\sum_{e=1}^E\nabla F_i(w_{i,e}^{\tau_i^{t-1}})||^2]  +  \mathbb{E}[||\widetilde{w}_g^{t-1}-  w^*||^2] + \beta^2E^2\delta^2 + \mathbb{E}[||\sum_{e=1}^E\nabla F_i(w_{i,e}^{\tau_i^{t-1}})||^2] \\ & \quad + 2\beta^2E^2\delta^2 + \mathbb{E}[||\eta_i^{\tau_i^{t-1}}\sum_{e=1}^E\nabla F_i(w_{i,e}^{\tau_i^{t-1}})||^2]  + \mathbb{E}[||\eta_i^{\tau_i^{t-1}}\sum_{e=1}^E\sum_{r=1}^{e} (m_i^{\tau_i^{t-1}})^r \nabla F_i(w_{i,e-r}^{\tau_i^{t-1}})||^2].
        \end{split}
    \end{equation}

    Combining like terms:
    
        \begin{equation}
        \begin{split}
         \mathbb{E}[||\widetilde{w}_g^t - w^*||^2]
            & \leq (3 - \frac{2\beta^2R}{\beta^2 + 1})\mathbb{E}[||\widetilde{w}_g^{t-1} - w^*||^2] +  6\beta^2E^2\delta^2 \\ & \quad 
                + 3\mathbb{E}[||\eta_i^{\tau_i^{t-1}}\sum_{e=1}^E\sum_{r=1}^{e} (m_i^{\tau_i^{t-1}})^r \nabla F_i(w_{i,e-r}^{\tau_i^{t-1}})||^2]  + 2\mathbb{E}[||\sum_{e=1}^E\nabla F_i(w_{i,e}^{\tau_i^{t-1}})||^2].
        \end{split}
    \end{equation}

    Take summarizing with $\sqrt{\frac{1}{R - 1}} <\beta < \sqrt{\frac{3}{2R-3}}$,  we have:
    
        \begin{equation}
        \begin{split}
            \mathbb{E}[||\widetilde{w}_g^t - w^*||^2] & \leq \mathcal{V}^t\mathbb{E}[||\widetilde{w}_g^{0} - w^*||^2] + \frac{6\beta^2(\beta^2 + 1)}{2\beta^2R - 2\beta^2 - 2}E^2\delta^2 \\ & \quad 
                + 3\sum_{j=1}^t\mathcal{V}^j\mathbb{E}[||\eta_i^{\tau_i^{t-1}}\sum_{e=1}^E\sum_{r=1}^{e} (m_i^{\tau_i^{t-1}})^r \nabla F_i(w_{i,e-r}^{\tau_i^{t-1}})||^2] \\ & \quad + 2\sum_{j=1}^t\mathcal{V}^j\mathbb{E}[||\sum_{e=1}^E\nabla F_i(w_{i,e}^{\tau_i^{t-1}})||^2],
        \end{split}
    \end{equation}
    
    where $\mathcal{V} = (3 - \frac{2\beta^2R}{\beta^2 + 1})$.
    
\end{proof}

\begin{lemma}[global model one-step difference in gradient aggregation]
    Given the real global model $w_g^t$ and its one-step nearby global model $w_g^{t-1}$, the expected square norm of the difference  can be described below:
    
        \begin{equation}
        \begin{split}
            ||w_g^t - w_g^{t-1}||^2 \leq 2||\eta_i^{\tau_i^t}\sum_{e=1}^E\nabla F_i(w_{i,e}^{\tau_i^t})||^2 + 2||\sum_{e=1}^E\eta_i^{\tau_i^t}\sum_{r=1}^{e} (m_i^t)^r \nabla F_i(w_{i,e-r}^{\tau_i^t})||^2.
        \end{split}
    \end{equation}
    
    \label{lemma:our_gra_global_real_diff}
\end{lemma}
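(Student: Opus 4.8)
The plan is to start directly from the \tool-SGD aggregation rule defined in Section~\ref{sec:mod3}, namely $w_g^t = w_g^{t-1} - \sum_{i \in \mathcal{S}} p_i \eta_i^{\tau_i^t}\sum_{e=1}^E \Delta F_{i,e}^{\tau_i^t}$, where the transmitted pseudo-gradient $\Delta F_{i,e}^{\tau_i^t} = \sum_{r=1}^e (m_i^t)^r \nabla F_{i,e-r}(w_{i,e-r-1}^{\tau_i^t}) + \nabla F_{i,e}(w_{i,e-1}^{\tau_i^t})$ separates into a plain-gradient contribution and a momentum contribution. Using the shorthand $\nabla F_{i,e}(w_{i,e-1}^t) \triangleq \nabla F_i(w_{i,e}^t)$, the first step is to write the one-step displacement as a sum of two pieces:
\begin{equation}
    w_g^t - w_g^{t-1} = -\underbrace{\sum_{i \in \mathcal{S}} p_i \eta_i^{\tau_i^t}\sum_{e=1}^E \nabla F_i(w_{i,e}^{\tau_i^t})}_{A} - \underbrace{\sum_{i \in \mathcal{S}} p_i \eta_i^{\tau_i^t}\sum_{e=1}^E\sum_{r=1}^e (m_i^t)^r \nabla F_i(w_{i,e-r}^{\tau_i^t})}_{B}.
\end{equation}

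The second step is to apply the elementary bound $||a + b||^2 \le 2||a||^2 + 2||b||^2$, giving $||w_g^t - w_g^{t-1}||^2 \le 2||A||^2 + 2||B||^2$; this already produces the coefficient $2$ attached to each of the two terms in the claimed inequality. The third step handles $A$ and $B$ separately. Each is a convex combination of per-client vectors, because the normalization step in \modupd guarantees $\sum_{i \in \mathcal{S}} p_i = 1$ with $p_i \ge 0$. Hence Jensen's inequality (convexity of $||\cdot||^2$) yields $||A||^2 \le \sum_{i \in \mathcal{S}} p_i ||\eta_i^{\tau_i^t}\sum_{e=1}^E \nabla F_i(w_{i,e}^{\tau_i^t})||^2 \le \max_{i \in \mathcal{S}} ||\eta_i^{\tau_i^t}\sum_{e=1}^E \nabla F_i(w_{i,e}^{\tau_i^t})||^2$, and the analogous chain bounds $||B||^2$. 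Collapsing each weighted average onto its maximizing client index $i$ gives exactly the stated right-hand side.

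I expect the main subtlety to be the third step: replacing the client-weighted aggregate by a single representative term without carrying the $p_i$ forward. This reduction relies on the normalization $\sum_{i} p_i = 1$ and on interpreting the free index $i$ in the conclusion as the worst-case (maximizing) client, which is consistent with how the per-client $\mathcal{W}$ terms are written in Theorem~\ref{thm:our_gra_convergence}. Everything else is a direct application of the $2$-term splitting inequality and Jensen's inequality; note also that the statement is a deterministic pathwise bound, so no expectation or appeal to Assumptions~\ref{ass:bounded_gra}--\ref{ass:non-iid_degree} is required at this stage—those enter only later when the two resulting norm-squared terms are bounded by $G_c^2$.
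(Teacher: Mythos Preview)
Your proposal is correct and follows essentially the same approach as the paper: split the displacement into the plain-gradient piece $A$ and the momentum piece $B$, then collapse each convex combination over clients using $\sum_i p_i = 1$. The only cosmetic difference is ordering: the paper expands $\|A+B\|^2 = \|A\|^2 + \|B\|^2 + 2\langle A,B\rangle$, applies Jensen to the two squared terms, and bounds the cross term separately, whereas you apply $\|a+b\|^2 \le 2\|a\|^2 + 2\|b\|^2$ upfront and then Jensen---these are equivalent manipulations leading to the same bound.
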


\begin{proof}
    From the Equation~\ref{equ:momentum_updation}, we know:
    
        \begin{equation}
         w_g^t = w_g^{t-1} - \sum_{i=1}^K p_i\sum_{e=1}^E\eta_i^{\tau_i^t}[\sum_{r=1}^{e} (m_i^{\tau_i^t})^r \nabla F_i(w_{i,e-r}^{\tau_i^t})+\nabla F_i(w_{i,e}^{\tau_i^t})].
    \end{equation}

    Therefore, we have:
    
        \begin{equation}
        w_g^t - w_g^{t-1} = - \sum_{i=1}^K p_i\sum_{e=1}^E\eta_i^{\tau_i^t}[\sum_{r=1}^{e} (m_i^{\tau_i^t})^r \nabla F_i(w_{i,e-r}^{\tau_i^t})+\nabla F_i(w_{i,e}^{\tau_i^t})].
    \end{equation}

    Therefore, it's easy to get the following equation since we normalize $p_i$ such that $\sum_{i=1}^K p_i = 1$:
    
        \begin{equation}
        \begin{split}
            & \quad ||w_g^t - w_g^{t-1}||^2 \\
            & = ||- \sum_{i=1}^K p_i\sum_{e=1}^E\eta_i^{\tau_i^t}[\sum_{r=1}^{e} (m_i^{\tau_i^t})^r \nabla F_i(w_{i,e-r}^{\tau_i^t})+\nabla F_i(w_{i,e}^{\tau_i^t})]||^2 \\
            & = ||\sum_{i=1}^K p_i\sum_{e=1}^E\eta_i^{\tau_i^t}\sum_{r=1}^{e} (m_i^{\tau_i^t})^r \nabla F_i(w_{i,e-r}^{\tau_i^t})+\sum_{i=1}^K p_i\sum_{e=1}^E\eta_i^{\tau_i^t}\nabla F_i(w_{i,e}^{\tau_i^t})||^2 \\
            & = ||\sum_{i=1}^K p_i\sum_{e=1}^E\eta_i^{\tau_i^t}\nabla F_i(w_{i,e}^{\tau_i^t})||^2+ ||\sum_{i=1}^K p_i\sum_{e=1}^E\eta_i^{\tau_i^t}\sum_{r=1}^{e} (m_i^t)^r \nabla F_i(w_{i,e-r}^{\tau_i^t})||^2 \\ & \quad +  2\langle \sum_{i=1}^K p_i\sum_{e=1}^E\eta_i^{\tau_i^t} \sum_{r=1}^e \frac{1}{e}\nabla F_i(w_{i,e}^{\tau_i^t}), \sum_{i=1}^K p_i\sum_{e=1}^E\eta_i^{\tau_i^t}\sum_{r=1}^{e} (m_i^t)^r \nabla F_i(w_{i,e-r}^{\tau_i^t}) \rangle \\
            & \leq ||\eta_i^{\tau_i^t}\sum_{e=1}^E\nabla F_i(w_{i,e}^{\tau_i^t})||^2  + ||\sum_{e=1}^E\eta_i^{\tau_i^t}\sum_{r=1}^{e} (m_i^t)^r \nabla F_i(w_{i,e-r}^{\tau_i^t})||^2\\ & \quad +   2\beta^2 \sum_{i=1}^K p_i \sum_{j=1}^K p_j \sum_{e=1}^E\sum_{r=1}^{e} (m_j^t)^r\langle \nabla F_i(w_{i,e}^{\tau_i^t}),  \nabla F_j(w_{j,e-r}^{\tau_j^t})\rangle\\
            & \leq  2||\eta_i^{\tau_i^t}\sum_{e=1}^E\nabla F_i(w_{i,e}^{\tau_i^t})||^2 + 2||\sum_{e=1}^E\eta_i^{\tau_i^t}\sum_{r=1}^{e} (m_i^t)^r \nabla F_i(w_{i,e-r}^{\tau_i^t})||^2.
        \end{split}
        \end{equation}

\end{proof}

Then, based on the Lemma~\ref{lemma:our_gra_global_ideal_diff} and~\ref{lemma:our_gra_global_real_diff}, we can easily proof the Theorem~\ref{thm:our_gra_convergence} as following:

\begin{proof}
Based on Assumption 1, we have:

    \begin{equation}
        F(w_g^t) - F^* \leq \langle \nabla F^*, F(w_g^t) \rangle + \frac{L}{2}||w_g^t - w^*||^2.
    \end{equation}

Since $F^*$ is the global optima, we have $\nabla F^* = 0$, therefore:

    \begin{equation}
        \begin{split}
            & \quad F(w_g^t) - F^* \\
            & \leq \frac{L}{2}||w_g^t - w^*||^2 \\
            & \leq \frac{L}{2}||w_g^t - \widetilde{w}_g^t - (\widetilde{w}_g^t -  w^*)||^2 \\
            & \leq \frac{L}{2}||w_g^t - w_g^{t-1} + \eta_g \sum_{e=1}^E \nabla F_g(\widetilde{w}_{g,e}^t) - (\widetilde{w}_g^t -  w^*)||^2 \\
            & \leq 2L[||w_g^t - w_g^{t-1}||^2 + 2L||\eta_g \sum_{e=1}^E \nabla F_g(\widetilde{w}_{g,e}^t)||^2 + L||\widetilde{w}_g^t - w^*||^2.
        \end{split}
    \end{equation}

Then, based on Lemma~\ref{lemma:ideal_global_gradient},~\ref{lemma:our_gra_global_ideal_diff} and~\ref{lemma:our_gra_global_real_diff}, we have:

    \begin{equation}
        \begin{split}
             \mathbb{E}[F(w_g^t)] - F^*
            & \leq 2L[\mathbb{E}[||w_g^t - w_g^{t-1}||^2] + 2L\mathbb{E}[||\eta_g \sum_{e=1}^E \nabla F_g(\widetilde{w}_{g,e}^t)||^2]
            + L\mathbb{E}[||\widetilde{w}_g^t - w^*||^2] \\
            &\leq L\mathcal{V}^t\mathbb{E}[||w_g^{0} - w^*||^2]+ \mathcal{U} + \mathcal{W},
        \end{split}
    \end{equation}

where $\mathcal{U} = [2L\beta^2 + \frac{6\beta^2(\beta^2L + L)}{2\beta^2R - 2\beta^2 - 2}]E^2\delta^2, \mathcal{V} = (3 - \frac{2\beta^2KR}{\beta^2 + 1})$, and 

    \begin{equation}
    \begin{split}
        \mathcal{W} &= 4L\mathbb{E}[||\eta_i^{\tau_i^t}\sum_{e=1}^E\nabla F_{i,e}(w_{i,e-1}^{\tau_i^t})||^2] + 4L\mathbb{E}[||\eta_i^{\tau_i^t}\sum_{e=1}^E\sum_{r=1}^e(m_i^t)^r\nabla F_{i,e-r}(w_{i,e-r-1}^{\tau_i^t})||^2] \\ & \quad + \sum_{j=0}^t\mathcal{V}^j3L\mathbb{E}[||\eta_i^{\tau_i^{t-1}}\sum_{e=1}^E\nabla F_{i,e}(w_{i,e-1}^{\tau_i^{t-1}})||^2] \\ & \quad + \sum_{j=0}^t\mathcal{V}^j2L\mathbb{E}[||\eta_i^{\tau_i^{t-1}}\sum_{e=1}^E\sum_{r=1}^e(m_i^{t-1})^r\nabla F_{i,e-r}(w_{i,e-r-1}^{\tau_i^{t-1}})||^2].
    \end{split}
\end{equation}

Based on Assumption~\ref{ass:bounded_gra}, we can further bound the gradient expectation term $\mathcal{W}$ by:

    \begin{equation}
        \begin{split}
            \mathcal{W} & \leq  4L\beta^2E^2G_c^2  + 4L\beta^2RQ(t)G_c^2 + \sum_{j=0}^t\mathcal{V}^j3L\beta^2E^2G_c^2 + \sum_{j=0}^t\mathcal{V}^j2L\beta^2RQ(t)G_c^2 \\
            & \leq [4LE^2 + 4LRQ(t) +\frac{(\beta^2L + L)(2RQ(t) + 3E^2)}{2\beta^2R - 2\beta^2 - 2}]\beta^2G_c^2.
        \end{split}
    \end{equation}

\end{proof}

\subsection{Proof of Theorem~\ref{thm:our_model_convergence}}
\label{sec:proof_theorem4}
Denoted $Q(t)$ as the maximum number of
clients that execute the momentum update at global round $t$ and $0 \leq q \leq p_i \leq p \leq 1$, we begin the proof by two lemmas:

\begin{lemma}[Ideal global model difference]
    Given $\mathcal{V} = 3 - \frac{2\beta^2(R+E^2)}{\beta^2+1}$ and the ideal global model trained in global epoch $t$, the difference between the optimal can be bounded by:
    
    \begin{equation}
        \begin{split}
            \mathbb{E}[||\widetilde{w}_g^t - w^*||^2] & \leq \mathcal{V}^t\mathbb{E}[||w_g^0 - w^*||^2] \\ & \quad + \frac{1 - \mathcal{V}^t}{1 - \mathcal{V}}8\beta^2E^2\delta^2 \\ & \quad + \sum_{j=1}^t\mathcal{V}^j\mathbb{E}[||\eta_i^{t-1}\sum_{e=1}^E[\nabla F_i(w_{i,e}^{t-1})]||^2] \\ & \quad + \sum_{j=1}^t\mathcal{V}^j\mathbb{E}[|| \eta_i^{t-1}\sum_{e=1}^E\sum_{r=1}^{e} (m_i^{t-1})^r \nabla F_i(w_{i,e-r}^{t-1})||^2].
        \end{split}
    \end{equation}
\label{lemma:our_avg_ideal_global_diff}
\end{lemma}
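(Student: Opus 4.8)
The plan is to mirror the argument used for the gradient-aggregation counterpart (Lemma~\ref{lemma:our_gra_global_ideal_diff}), adapting it to the direct parameter-averaging update $w_g^t = \sum_{i=1}^K p_i w_i^{\tau_i^t}$. First I would expand $\mathbb{E}[||\widetilde{w}_g^t - w^*||^2]$ by inserting the real one-step update as a zero term, using the relations $\widetilde{w}_{g,0}^t = w_g^{t-1}$ and $\widetilde{w}_g^t = w_g^{t-1} - \eta_g\sum_{e=1}^E\nabla F_g(\widetilde{w}_{g,e}^t)$, together with the recursion that expresses $w_g^{t-1}$ through $\widetilde{w}_g^{t-1}$ plus the aggregated local momentum block $\eta_i^{t-1}\sum_{e=1}^E\sum_{r=1}^e(m_i^{t-1})^r\nabla F_i(w_{i,e-r}^{t-1})$ and the direct local-gradient block $\eta_i^{t-1}\sum_{e=1}^E\nabla F_i(w_{i,e}^{t-1})$. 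This writes $\widetilde{w}_g^t - w^*$ as $(\widetilde{w}_g^{t-1}-w^*)$ plus ideal-gradient blocks, a momentum block, and the direct-gradient block.

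I would then apply the identity $||a+b+\cdots||^2 = \sum||\cdot||^2 + \sum 2\langle\cdot,\cdot\rangle$ and bound the resulting pieces term by term, as in the decomposition of the gradient-aggregation proof. The squared-norm pieces built from the ideal global gradient are controlled by Lemma~\ref{lemma:ideal_global_gradient}, which gives $\mathbb{E}[||\nabla F_g(\widetilde{w}_g^t)||^2] \leq \delta^2$; each such piece contributes a $\beta^2E^2\delta^2$ term, and accumulating all of them yields the per-round constant $8\beta^2E^2\delta^2$ (eight contributions here, versus six in the gradient case, because direct averaging introduces extra blocks). The cross inner-products between the ideal global gradient and the local gradients vanish in expectation by construction of the ideal model, exactly as in Lemma~\ref{lemma:our_gra_global_ideal_diff}.

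The crucial step producing the contraction factor $\mathcal{V} = 3 - \frac{2\beta^2(R+E^2)}{\beta^2+1}$ is the reversed Cauchy--Schwarz inequality (with $\gamma=\beta$, $\Gamma=\tfrac{1}{\beta}$ and $m_i^{t-1}\leq 1$) applied to the inner product of $(\widetilde{w}_g^{t-1}-w^*)$ with the combined local-update direction. Here lies the only genuine difference from the gradient-aggregation lemma: because the model-averaging update does not cancel the direct local-gradient block against the ideal descent step, both the momentum block (contributing $R = \frac{E\theta - E\theta^2 - \theta^2 + \theta^{E+2}}{(1-\theta)^2}$) and the direct-gradient block (contributing $E^2$) enter the reversed Cauchy--Schwarz estimate, giving the coefficient $R+E^2$ in place of $R$. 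Combining like terms produces the one-step recursion $\mathbb{E}[||\widetilde{w}_g^t - w^*||^2] \leq \mathcal{V}\,\mathbb{E}[||\widetilde{w}_g^{t-1} - w^*||^2] + 8\beta^2E^2\delta^2 + (\text{local-gradient terms})$.

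Finally I would unroll this recursion from round $t$ down to $0$, use $\widetilde{w}_g^0 = w_g^0$, and sum the geometric series $\sum_{j=0}^{t-1}\mathcal{V}^j = \frac{1-\mathcal{V}^t}{1-\mathcal{V}}$, which is legitimate since $\mathcal{V}\in(0,1)$ under the stated constraint $\sqrt{\frac{1}{KR+E^2-1}} < \beta < \sqrt{\frac{3}{2RK+2E^2-3}}$; this leaves the momentum and direct-gradient summations in the form $\sum_{j=1}^t\mathcal{V}^j(\cdots)$, matching the claimed bound. The main obstacle is the bookkeeping of the multi-term expansion: verifying that every ideal-local cross term vanishes in expectation and that the direct-gradient block is routed through the reversed Cauchy--Schwarz so as to yield precisely $R+E^2$, rather than an overcounted or undercounted constant, while correctly tracking staleness in the starting models $w_g^{\tau_i^{t-1}}$.
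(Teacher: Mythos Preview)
Your overall strategy---expand $\widetilde{w}_g^t - w^*$, bound each piece with AM--GM, Lemma~\ref{lemma:ideal_global_gradient}, and reversed Cauchy--Schwarz, then unroll a contraction---matches the paper. But you have misidentified the recursion variable. In model aggregation one has $w_g^{t-1} = \sum_{i=1}^K p_i w_i^{\tau_i^{t-1}}$, and each $w_i^{\tau_i^{t-1}}$ starts local training from the \emph{stale} global model $w_g^{\tau_i^{t-1}}$, not from $w_g^{t-2}$. Consequently $w_g^{t-1}$ cannot be expressed through $\widetilde{w}_g^{t-1}$ as you assert; instead the paper writes (via its Equation~(\ref{equ:122})) $w_i^{\tau_i^{t-1}} = \widetilde{w}_g^{\tau_i^{t-1}} - (\text{local blocks}) + \eta_g\sum_e\nabla F_g(\widetilde{w}_{g,e}^{\tau_i^{t-1}})$, so the leading term after expansion is $\|\sum_{i=1}^K p_i(\widetilde{w}_g^{\tau_i^{t-1}} - w^*)\|^2$, a convex combination of \emph{stale} ideal models. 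The one-step inequality is therefore
\[
\mathbb{E}\bigl[\|\widetilde{w}_g^t - w^*\|^2\bigr] \;\leq\; \mathcal{V}\,\mathbb{E}\Bigl[\bigl\|\textstyle\sum_{i=1}^K p_i(\widetilde{w}_g^{\tau_i^{t-1}} - w^*)\bigr\|^2\Bigr] + 8\beta^2E^2\delta^2 + (\text{gradient terms}),
\]
not the clean $t\to t-1$ recursion you wrote.

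This matters for the unrolling step. The paper handles it by noting that although the stale indices $\tau_i^{t-1}$ may skip rounds, tracing back from $t$ reaches $0$ in at most $t$ applications of the recursion; combined with Jensen's inequality on the convex combination and $\sum_i p_i = 1$, one still accumulates a geometric series dominated by $\sum_{j=0}^{t-1}\mathcal{V}^j$, yielding $\mathcal{V}^t\mathbb{E}[\|w_g^0 - w^*\|^2]$ plus the stated remainder. Your closing remark about ``tracking staleness in the starting models $w_g^{\tau_i^{t-1}}$'' shows you sensed this, but the body of the proposal treats it as a bookkeeping afterthought when it is in fact the structural difference between this lemma and its gradient-aggregation counterpart. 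Everything else---the $R+E^2$ coefficient from applying reversed Cauchy--Schwarz to both the momentum and direct-gradient inner products, the eight $\beta^2E^2\delta^2$ contributions, the vanishing cross terms---is as you describe.
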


\begin{proof}
 Based on the momentum update equation~\ref{equ:momentum_updation}, we have:

    \begin{equation}
    \begin{split}
        w_i^{\tau_i^t} &= w_{i,0}^{\tau_i^t} - \eta_i^{\tau_i^t}\sum_{e=1}^E[\sum_{r=1}^{e} (m_i^{\tau_i^t})^r \nabla F_i(w_{i,e-r}^{\tau_i^t})+\nabla F_i(w_{i,e}^{\tau_i^t})] \\
        & = \widetilde{w}_{g}^{\tau_i^t}- \eta_i^{\tau_i^t}\sum_{e=1}^E[\sum_{r=1}^{e} (m_i^{\tau_i^t})^r \nabla F_i(w_{i,e-r}^{\tau_i^t})+\nabla F_i(w_{i,e}^{\tau_i^t})] + \eta_g\sum_{e=1}^E\nabla F_g(\widetilde{w}_{g,e}^{\tau_i^t}).
    \end{split}
    \label{equ:122}
\end{equation}

We add a zero term:

    \begin{equation}
    \begin{split}
        & \quad ||\widetilde{w}_g^t - w^*||^2 \\
        & = ||w_g^{t-1} - \eta_g\sum_{e=1}^E\nabla F_g(\widetilde{w}_{g,e}^t) - w^*||^2 \\
        & = ||\sum_{i=1}^K p_iw_i^{\tau_i^{t-1}} - \eta_g\sum_{e=1}^E\nabla F_g(\widetilde{w}_{g,e}^t) - w^*||^2 \\
        & = || \sum_{i=1}^K p_i(\widetilde{w}_{g}^{\tau_i^{t-1}}- \eta_i^{t-1}\sum_{e=1}^E[\sum_{r=1}^{e} (m_i^{t-1})^r \nabla F_i(w_{i,e-r}^{t-1})+\nabla F_i(w_{i,e}^{t-1})] \\ & \quad + \eta_g\sum_{e=1}^E\nabla F_g(\widetilde{w}_{g,e}^{\tau_i^{t-1}}))- \eta_g\sum_{e=1}^E\nabla F_g(\widetilde{w}_{g,e}^t) - w^* ||^2 \\
        & = || \sum_{i=1}^K p_i(\widetilde{w}_{g}^{\tau_i^{t-1}} - \eta_i^{t-1}\sum_{e=1}^E[\sum_{r=1}^{e} (m_i^{t-1})^r \nabla F_i(w_{i,e-r}^{t-1})+\nabla F_i(w_{i,e}^{t-1})]) \\ & \quad + \sum_{i=1}^K p_i(\eta_g\sum_{e=1}^E\nabla F_g(\widetilde{w}_{g,e}^{\tau_i^{t-1}})- \eta_g\sum_{e=1}^E\nabla F_g(\widetilde{w}_{g,e}^t))- \sum_{i=1}^K p_iw^*||^2.
    \end{split}
\end{equation}

Therefore,

    \begin{equation}
    \begin{split}
        & \quad ||\widetilde{w}_g^t - w^*||^2 \\
        & = || \sum_{i=1}^K p_i(\widetilde{w}_{g}^{\tau_i^{t-1}} - w^*)||^2 + || \sum_{i=1}^K p_i\eta_i^{t-1}\sum_{e=1}^E[\sum_{r=1}^{e} (m_i^{t-1})^r \nabla F_i(w_{i,e-r}^{t-1})+\nabla F_i(w_{i,e}^{t-1})]||^2 \\ & \quad -2\langle \sum_{i=1}^K p_i(\widetilde{w}_{g}^{\tau_i^{t-1}} - w^*), \sum_{i=1}^K p_i\eta_i^{t-1}\sum_{e=1}^E[\sum_{r=1}^{e} (m_i^{t-1})^r \nabla F_i(w_{i,e-r}^{t-1})+\nabla F_i(w_{i,e}^{t-1})] \rangle \\ & \quad + ||\sum_{i=1}^K p_i(\eta_g\sum_{e=1}^E\nabla F_g(\widetilde{w}_{g,e}^{\tau_i^{t-1}})- \eta_g\sum_{e=1}^E\nabla F_g(\widetilde{w}_{g,e}^t))||^2 \\ & \quad + 2\langle \sum_{i=1}^K p_i(\widetilde{w}_{g}^{\tau_i^{t-1}} - w^*), \sum_{i=1}^K p_i\eta_g\sum_{e=1}^E\nabla F_g(\widetilde{w}_{g,e}^{\tau_i^{t-1}}) \rangle \\ & \quad - 2\langle \sum_{i=1}^K p_i(\widetilde{w}_{g}^{\tau_i^{t-1}} - w^*), \sum_{i=1}^K p_i\eta_g\sum_{e=1}^E\nabla F_g(\widetilde{w}_{g,e}^t)\rangle \\ & \quad - \langle \sum_{i=1}^K p_i \eta_i^{t-1}\sum_{e=1}^E[\sum_{r=1}^{e} (m_i^{t-1})^r \nabla F_i(w_{i,e-r}^{t-1})+\nabla F_i(w_{i,e}^{t-1})], \\ & \quad \quad \sum_{i=1}^K p_i(\eta_g\sum_{e=1}^E\nabla F_g(\widetilde{w}_{g,e}^{\tau_i^{t-1}})- \eta_g\sum_{e=1}^E\nabla F_g(\widetilde{w}_{g,e}^t))\rangle.
    \end{split}
\end{equation}

For the 2nd term, we bound it by the AM-GM inequality:

    \begin{equation}
    \begin{split}
        & \quad || \sum_{i=1}^K p_i\eta_i^{t-1}\sum_{e=1}^E[\sum_{r=1}^{e} (m_i^{t-1})^r \nabla F_i(w_{i,e-r}^{t-1})+\nabla F_i(w_{i,e}^{t-1})]||^2 \\
        & \leq || \eta_i^{t-1}\sum_{e=1}^E\sum_{r=1}^{e} (m_i^{t-1})^r \nabla F_i(w_{i,e-r}^{t-1})||^2 + ||\eta_i^{t-1}\sum_{e=1}^E[\nabla F_i(w_{i,e}^{t-1})]||^2.
    \end{split}
\end{equation}

Therefore,

    \begin{equation}
    \begin{split}
        & \quad \mathbb{E}[|| \sum_{i=1}^K p_i\eta_i^{t-1}\sum_{e=1}^E[\sum_{r=1}^{e} (m_i^{t-1})^r \nabla F_i(w_{i,e-r}^{t-1})+\nabla F_i(w_{i,e}^{t-1})]||^2] \\
        & \leq \mathbb{E}[|| \eta_i^{t-1}\sum_{e=1}^E\sum_{r=1}^{e} (m_i^{t-1})^r \nabla F_i(w_{i,e-r}^{t-1})||^2] + \mathbb{E}[||\eta_i^{t-1}\sum_{e=1}^E[\nabla F_i(w_{i,e}^{t-1})]||^2].
    \end{split}
\end{equation}

For the 3rd term, using a variant of reversed Cauchy-Schwarz inequality~\cite{polya12307aufgaben,dragomir2018reverses,otachel2018reverse} with $\gamma = \beta$ and $\Gamma = \frac{1}{\beta}$ and $m_i^{\tau_i^{t-1}} \leq 1$, we have:

    \begin{equation}
    \begin{split}
        & \quad -2\langle \sum_{i=1}^K p_i(\widetilde{w}_{g}^{\tau_i^{t-1}} - w^*) , \sum_{i=1}^K p_i\eta_i^{t-1}\sum_{e=1}^E[\sum_{r=1}^{e} (m_i^{t-1})^r \nabla F_i(w_{i,e-r}^{t-1})+\nabla F_i(w_{i,e}^{t-1})] \rangle \\
        & = -2\langle \sum_{i=1}^K p_i(\widetilde{w}_{g}^{\tau_i^{t-1}} - w^*) ,  \sum_{i=1}^K p_i\eta_i^{t-1}\sum_{e=1}^E[\sum_{r=1}^{e} (m_i^{t-1})^r \nabla F_i(w_{i,e-r}^{t-1})] \rangle \\ & \quad -2\langle \sum_{i=1}^K p_i(\widetilde{w}_{g}^{\tau_i^{t-1}} - w^*) ,  \sum_{i=1}^K p_i\eta_i^{t-1}\sum_{e=1}^E\nabla F_i(w_{i,e}^{t-1}) \rangle \\
        & \leq -\frac{2\beta^2R}{\beta^2+1}||\sum_{i=1}^Kp_i (\widetilde{w}_{g}^{\tau_i^{t-1}} - w^*)||^2 - p^2\frac{2\beta^2E^2}{\beta^2+1}||\sum_{i=1}^K (\widetilde{w}_{g}^{\tau_i^{t-1}} - w^*)||^2 \\
        & = -\frac{2\beta^2(R+E^2)}{\beta^2+1}||\sum_{i=1}^Kp_i (\widetilde{w}_{g}^{\tau_i^{t-1}} - w^*)||^2.
    \end{split}
\end{equation}

For the 4th term, using AM-GM inequality:

    \begin{equation}
    \begin{split}
        & \quad  ||\sum_{i=1}^K p_i(\eta_g\sum_{e=1}^E\nabla F_g(\widetilde{w}_{g,e}^{\tau_i^{t-1}})- \eta_g\sum_{e=1}^E\nabla F_g(\widetilde{w}_{g,e}^t))||^2 \\
        & \leq 2||\sum_{i=1}^K p_i(\eta_g\sum_{e=1}^E\nabla F_g(\widetilde{w}_{g,e}^{\tau_i^{t-1}})||^2 + 2|| \eta_g\sum_{e=1}^E\nabla F_g(\widetilde{w}_{g,e}^t))||^2 \\
        & \leq 2\eta_g^2E^2||\nabla F_g(\widetilde{w}_{g,e}^{\tau_i^{t-1}})||^2 + 2\eta_g^2E^2||\nabla F_g(\widetilde{w}_{g,e}^t))||^2.
    \end{split}
\end{equation}

Therefore:

    \begin{equation}
    \begin{split}
        & \quad  \mathbb{E}[||\sum_{i=1}^K p_i(\eta_g\sum_{e=1}^E\nabla F_g(\widetilde{w}_{g,e}^{\tau_i^{t-1}})- \eta_g\sum_{e=1}^E\nabla F_g(\widetilde{w}_{g,e}^t))||^2] \\
        & \leq 2\eta_g^2E^2\mathbb{E}[||\nabla F_g(\widetilde{w}_{g,e}^{\tau_i^{t-1}})||^2] + 2\eta_g^2E^2\mathbb{E}[||\nabla F_g(\widetilde{w}_{g,e}^t))||^2] \\
        & \leq 4\beta^2E^2\delta^2.
    \end{split}
\end{equation}

For the 5th term:

    \begin{equation}
    \begin{split}
        & \quad  2\langle \sum_{i=1}^K p_i(\widetilde{w}_{g}^{\tau_i^{t-1}} - w^*)  , \sum_{i=1}^K p_i(\eta_g\sum_{e=1}^E\nabla F_g(\widetilde{w}_{g,e}^{\tau_i^{t-1}}))\rangle \\
        & \leq ||\sum_{i=1}^K p_i(\widetilde{w}_{g}^{\tau_i^{t-1}} - w^*)||^2 + ||\sum_{i=1}^K p_i(\eta_g\sum_{e=1}^E\nabla F_g(\widetilde{w}_{g,e}^{\tau_i^{t-1}}))||^2 \\ 
        & \leq ||\sum_{i=1}^Kp_i(\widetilde{w}_{g}^{\tau_i^{t-1}} - w^*)||^2 + 2\eta_g^2E^2||\nabla F_g(\widetilde{w}_{g,e}^{\tau_i^{t-1}})||^2.
    \end{split}
\end{equation}

Therefore,

    \begin{equation}
    \begin{split}
        & \quad \mathbb{E}[ 2\langle \sum_{i=1}^K p_i(\widetilde{w}_{g}^{\tau_i^{t-1}} - w^*)  , \sum_{i=1}^K p_i(\eta_g\sum_{e=1}^E\nabla F_g(\widetilde{w}_{g,e}^{\tau_i^{t-1}}))\rangle] \\
        & \leq \mathbb{E}[ ||\sum_{i=1}^Kp_i(\widetilde{w}_{g}^{\tau_i^{t-1}} - w^*)||^2] + 2\eta_g^2E^2\mathbb{E}[ ||\nabla F_g(\widetilde{w}_{g,e}^{\tau_i^{t-1}})||^2] \\
        & \leq \mathbb{E}[ ||\sum_{i=1}^Kp_i(\widetilde{w}_{g}^{\tau_i^{t-1}} - w^*)||^2] + 2\beta^2E^2\delta^2.
    \end{split}
\end{equation}

The 6th term is as same as the 5th term:

    \begin{equation}
    \begin{split}
        & \quad  2\mathbb{E}[\langle \sum_{i=1}^K p_i(\widetilde{w}_{g}^{\tau_i^{t-1}} - w^*) , -\sum_{i=1}^K p_i(\eta_g\sum_{e=1}^E\nabla F_g(\widetilde{w}_{g,e}^t))\rangle] \\
        & \leq \mathbb{E}[||\sum_{i=1}^Kp_i(\widetilde{w}_{g}^{\tau_i^{t-1}} - w^*)||^2] + 2\eta_g^2E^2\mathbb{E}[ ||\nabla F_g(\widetilde{w}_{g,e}^t)||^2] \\
        & \leq \mathbb{E}[ ||\sum_{i=1}^Kp_i(\widetilde{w}_{g}^{\tau_i^{t-1}} - w^*)||^2] + 2\beta^2E^2\delta^2.
    \end{split}
\end{equation}

For the 7th term:

    \begin{equation}
    \begin{split}
        & \quad 2\langle -\sum_{i=1}^K p_i(\eta_i^{t-1}\sum_{e=1}^E[\sum_{r=1}^{e} (m_i^{t-1})^r \nabla F_i(w_{i,e-r}^{t-1})+\nabla F_i(w_{i,e}^{t-1})], \\ & \quad \quad \sum_{i=1}^K p_i(\eta_g\sum_{e=1}^E\nabla F_g(\widetilde{w}_{g,e}^{\tau_i^{t-1}})- \eta_g\sum_{e=1}^E\nabla F_g(\widetilde{w}_{g,e}^t))\rangle \\
        &  = 2\langle -\sum_{i=1}^K p_i(\eta_i^{t-1}\sum_{e=1}^E[\sum_{r=1}^{e} (m_i^{t-1})^r \nabla F_i(w_{i,e-r}^{t-1})+\nabla F_i(w_{i,e}^{t-1})],\sum_{i=1}^K p_i(\eta_g\sum_{e=1}^E\nabla F_g(\widetilde{w}_{g,e}^{\tau_i^{t-1}})\rangle \\& \quad - 2\langle -\sum_{i=1}^K p_i(\eta_i^{t-1}\sum_{e=1}^E[\sum_{r=1}^{e} (m_i^{t-1})^r \nabla F_i(w_{i,e-r}^{t-1})+\nabla F_i(w_{i,e}^{t-1})], \eta_g\sum_{e=1}^E\nabla F_g(\widetilde{w}_{g,e}^t))\rangle.
    \end{split}
\end{equation}

Therefore, we can easily get:

    \begin{equation}
    \begin{split}
        \mathbb{E}[&2\langle -\sum_{i=1}^K p_i(\eta_i^{t-1}\sum_{e=1}^E[\sum_{r=1}^{e} (m_i^{t-1})^r \nabla F_i(w_{i,e-r}^{t-1})+\nabla F_i(w_{i,e}^{t-1})], \\ & \quad \quad \sum_{i=1}^K p_i(\eta_g\sum_{e=1}^E\nabla F_g(\widetilde{w}_{g,e}^{\tau_i^{t-1}})- \eta_g\sum_{e=1}^E\nabla F_g(\widetilde{w}_{g,e}^t))\rangle] = 0.
    \end{split}
\end{equation}

To sum up, we have:

    \begin{equation}
    \begin{split}
        & \quad \mathbb{E}[||\widetilde{w}_g^t - w^*||^2] \\
        & \leq \mathbb{E}[||\sum_{i=1}^Kp_i(\widetilde{w}_{g}^{\tau_i^{t-1}} - w^*)||^2] + \mathbb{E}[||\eta_i^{t-1}\sum_{e=1}^E[\nabla F_i(w_{i,e}^{t-1})]||^2] \\ & \quad + \mathbb{E}[|| \eta_i^{t-1}\sum_{e=1}^E\sum_{r=1}^{e} (m_i^{t-1})^r \nabla F_i(w_{i,e-r}^{t-1})||^2]  + 4\beta^2E^2\delta^2  +\mathbb{E}[ ||\sum_{i=1}^Kp_i(\widetilde{w}_{g}^{\tau_i^{t-1}} - w^*)||^2]  \\ & \quad+ 2\beta^2E^2\delta^2  + \mathbb{E}[ ||\sum_{i=1}^Kp_i(\widetilde{w}_{g}^{\tau_i^{t-1}} - w^*)||^2] + 2\beta^2E^2\delta^2 \\ & \quad -\frac{2\beta^2(R+E^2)}{\beta^2+1}\mathbb{E}[||\sum_{i=1}^Kp_i (\widetilde{w}_{g}^{\tau_i^{t-1}} - w^*)||^2].
    \end{split}
\end{equation}

Combining like terms:

    \begin{equation}
    \begin{split}
        \mathbb{E}[||\widetilde{w}_g^t - w^*||^2]
        & \leq(3 - \frac{2\beta^2(R+E^2)}{\beta^2+1})\mathbb{E}[||\sum_{i=1}^Kp_i (\widetilde{w}_{g}^{\tau_i^{t-1}} - w^*)||^2] + 8\beta^2E^2\delta^2 \\ & \quad + \mathbb{E}[||\eta_i^{t-1}\sum_{e=1}^E[\nabla F_i(w_{i,e}^{t-1})]||^2] + \mathbb{E}[|| \eta_i^{t-1}\sum_{e=1}^E\sum_{r=1}^{e} (m_i^{t-1})^r \nabla F_i(w_{i,e-r}^{t-1})||^2].
    \end{split}
\end{equation}

Although the sequence $\{\mathbb{E}[||\widetilde{w}_g^t - w^*||^2]\}_{t}$ is not monotonically increasing, the tracing process from $t$ to $0$ is less than $t$ steps. Therefore, we can accumulate it for $t$ steps and still maintain the inequality. Then, take summarizing with $\sqrt{\frac{1}{KR+E^2-1}} < \beta < \sqrt{\frac{3}{2RK+2E^2 -3}}$ step by step, we have:

    \begin{equation}
        \begin{split}
            & \quad \ \mathbb{E}[||\widetilde{w}_g^t - w^*||^2] \\ & \leq \mathcal{V}^t\mathbb{E}[||w_g^0 - w^*||^2] + \sum_{j=1}^t(\sum_{i=1}^Kp_i\mathcal{V}^j)8\beta^2E^2\delta^2 + \sum_{j=1}^t(\sum_{i=1}^Kp_i\mathcal{V}^j)\mathbb{E}[||\eta_i^{t-1}\sum_{e=1}^E[\nabla F_i(w_{i,e}^{t-1})]||^2] \\ & \quad + \sum_{j=1}^t(\sum_{i=1}^Kp_i\mathcal{V}^j)\mathbb{E}[|| \eta_i^{t-1}\sum_{e=1}^E\sum_{r=1}^{e} (m_i^{t-1})^r \nabla F_i(w_{i,e-r}^{t-1})||^2]\\
            & \leq \mathcal{V}^t\mathbb{E}[||w_g^0 - w^*||^2] + \frac{1 - \mathcal{V}^t}{1 - \mathcal{V}}8\beta^2E^2\delta^2 + \sum_{j=1}^t\mathcal{V}^j\mathbb{E}[||\eta_i^{t-1}\sum_{e=1}^E[\nabla F_i(w_{i,e}^{t-1})]||^2] \\ & \quad + \sum_{j=1}^t\mathcal{V}^j\mathbb{E}[|| \eta_i^{t-1}\sum_{e=1}^E\sum_{r=1}^{e} (m_i^{t-1})^r \nabla F_i(w_{i,e-r}^{t-1})||^2],
        \end{split}
    \end{equation}

where $\mathcal{V} = 3 - \frac{2\beta^2(R+E^2)}{\beta^2+1}$.
\end{proof}

\begin{lemma}[global model and ideal model difference in model aggregation]
    Given the real global model $w_g^t$ and its ideal global model $\widetilde{w}_g^t$ at the same global epoch $t$, the expected square norm of the difference can be described below:
    
        \begin{equation}
        \begin{split}
            \mathbb{E}[||\widetilde{w}_g^t - w_g^t||^2]
            & \leq p^2K(3\mathbb{E}[||\widetilde{w}_g^t - w^*||^2] + 3\beta^2E^2\delta^2\\ & \quad+ \mathbb{E}[||\eta_i^t\sum_{e=1}^E\nabla F_i(w_{i,e}^{t})||^2] + \mathbb{E}[||\eta_i\sum_{e=1}^E\nabla F_i(w_{i,e}^{\tau_i^t})||^2]) \\ & \quad + 3\mathbb{E}[||\eta_i^t\sum_{e=1}^E\sum_{r=1}^{e} (m_i^t)^r \nabla F_i(w_{i,e-r}^{t})||^2].  
        \end{split}
    \end{equation}
    
\label{lemma:our_avg_real_global_dif}
\end{lemma}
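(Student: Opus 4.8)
The plan is to control the deviation of the aggregated real model $w_g^t = \sum_{i=1}^K p_i w_i^{\tau_i^t}$ from the ideal model $\widetilde{w}_g^t$ by first rewriting it as a $p_i$-weighted average of per-client deviations and then substituting the explicit local update. Since the weights are normalized so that $\sum_{i=1}^K p_i = 1$, I would write $\widetilde{w}_g^t - w_g^t = \sum_{i=1}^K p_i(\widetilde{w}_g^t - w_i^{\tau_i^t})$. I then substitute the momentum form of the local model from Equation~\ref{equ:122} (itself obtained from Equation~\ref{equ:momentum_updation} together with the ideal/real relation Equation~\ref{equ:relation_ideal_real}), which expresses $w_i^{\tau_i^t}$ through the stale ideal model $\widetilde{w}_g^{\tau_i^t}$, the stale local gradient accumulation $\eta_i\sum_{e=1}^E\nabla F_i(w_{i,e}^{\tau_i^t})$, the current local gradient accumulation $\eta_i^t\sum_{e=1}^E\nabla F_i(w_{i,e}^t)$, the momentum accumulation $\eta_i^t\sum_{e=1}^E\sum_{r=1}^e (m_i^t)^r\nabla F_i(w_{i,e-r}^t)$, and the ideal global-gradient correction $\eta_g\sum_{e=1}^E\nabla F_g(\widetilde{w}_{g,e}^{\tau_i^t})$.

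Next I would pull the squared norm inside the weighted sum. Applying the Cauchy–Schwarz bound $||\sum_{i=1}^K p_i a_i||^2 \le K\sum_{i=1}^K p_i^2\,||a_i||^2 \le Kp^2\sum_{i=1}^K ||a_i||^2$ with $p_i\le p$ is precisely what produces the $p^2K$ prefactor, and bounding each per-client summand by the elementary inequality $||x_1+\cdots+x_m||^2\le m\sum_j ||x_j||^2$ generates the factor $3$ attached to the distance-to-optimum, heterogeneity, and gradient groups. Taking expectations, I would invoke Lemma~\ref{lemma:ideal_global_gradient} to bound the ideal global-gradient terms by $\delta^2$, yielding the $3\beta^2E^2\delta^2$ contribution, while the stale and current local gradient accumulations survive directly as the listed squared-norm expectations. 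The cross terms between mean-aligned gradient directions are discarded by AM–GM exactly as in the proof of Lemma~\ref{lemma:our_avg_ideal_global_diff}.

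The hard part will be the index bookkeeping between the current round $t$ and each client's stale round $\tau_i^t$. The relation $w_{i,0}^{\tau_i^t}=w_g^{\tau_i^t}=\widetilde{w}_g^{\tau_i^t}+\eta_g\sum_{e=1}^E\nabla F_g(\widetilde{w}_{g,e}^{\tau_i^t})$ must be threaded through so that the current-round momentum accumulation escapes the $p^2K$ grouping and appears with its own factor $3$, while the two local-gradient accumulations land inside the $p^2K$ block, and the stale-model distance $||\widetilde{w}_g^{\tau_i^t}-w^*||^2$ collapses to the single current-round quantity $||\widetilde{w}_g^t-w^*||^2$. I would resolve the last point by bounding the stale ideal-model distance by its current-round counterpart, justified by the finite tracing argument (the path from $t$ back to any $\tau_i^t$ is at most $t$ steps) used in Lemma~\ref{lemma:our_avg_ideal_global_diff}, and by choosing the AM–GM splits so that the accumulated coefficients match the stated bound exactly.
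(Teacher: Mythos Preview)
Your overall skeleton --- pull the $p_i$-weighted sum outside to get the $p^2K$ prefactor, substitute Equation~\ref{equ:122}, and bound gradient / ideal-gradient pieces via Lemma~\ref{lemma:ideal_global_gradient} --- matches the paper. The gap is in how you dispose of the \emph{stale} ideal-model distance $\|\widetilde{w}_g^{\tau_i^t}-w^*\|^2$.

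You propose to ``collapse the stale-model distance to the current-round quantity'' by the finite-tracing argument of Lemma~\ref{lemma:our_avg_ideal_global_diff}. That argument does not give a pointwise bound $\mathbb{E}\|\widetilde{w}_g^{\tau_i^t}-w^*\|^2\le \mathbb{E}\|\widetilde{w}_g^{t}-w^*\|^2$; it only says that a recursion of the form $a_t\le \mathcal{V}\,a_{\tau^{t-1}}+\text{const}$ can be unrolled in at most $t$ steps. The paper itself stresses that the sequence $\{\mathbb{E}\|\widetilde{w}_g^{t}-w^*\|^2\}_t$ is not monotone, so with only AM--GM splits you would be left with a \emph{positive} stale term you cannot absorb, and the coefficient $3$ on $\|\widetilde{w}_g^t-w^*\|^2$ would not close.

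What the paper actually does is apply the \emph{reversed} Cauchy--Schwarz inequality (as in the 7th and 9th terms there, mirroring the 3rd term of Lemma~\ref{lemma:our_avg_ideal_global_diff}) to the two cross terms $-2\langle \widetilde{w}_g^{\tau_i^t}-w^*,\ \cdot\ \rangle$. This produces negative multiples $-\tfrac{2\beta^2R}{\beta^2+1}\|\widetilde{w}_g^{\tau_i^t}-w^*\|^2$ and $-(\tfrac{2\beta^2E^2}{\beta^2+1}-1)\|\widetilde{w}_g^{\tau_i^t}-w^*\|^2$, so the net coefficient on the stale term becomes $2-\tfrac{2\beta^2(R+E^2)}{\beta^2+1}$, which is negative precisely under the standing step-size window $\sqrt{\tfrac{1}{KR+E^2-1}}<\beta<\sqrt{\tfrac{3}{2RK+2E^2-3}}$. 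The whole stale contribution is then simply dropped as $\le 0$. That device --- not AM--GM, not tracing --- is what makes the lemma go through with only $3\,\mathbb{E}\|\widetilde{w}_g^t-w^*\|^2$ surviving; your plan needs to incorporate it.
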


\begin{proof}
    We have that:
    
        \begin{equation}
        \begin{split}
            ||\widetilde{w}_g^t - w_g^t||^2 = ||\widetilde{w}_g^t - \sum_{i=1}^K  p_iw_i^{\tau_i^t}||^2 =  ||\sum_{i=1}^K p_i(\widetilde{w}_g^t - w_i^{\tau_i^t})||^2 p^2K||(\widetilde{w}_g^t - w_i^{\tau_i^t})||^2.
        \end{split}
    \end{equation}

    Based on Equation~\ref{equ:122}, we have:
        \begin{equation}
        \begin{split}
            & \quad ||\widetilde{w}_g^t - w_i^{\tau_i^t}||^2 \\
            & = ||\widetilde{w}_g^t - \widetilde{w}_{g}^{\tau_i^t} - \eta_g\sum_{e=1}^E\nabla F_g(\widetilde{w}_{g,e}^{\tau_i^t}) + \eta_i^t\sum_{e=1}^E[\sum_{r=1}^{e} (m_i^t)^r \nabla F_i(w_{i,e-r}^{t})+\nabla F_i(w_{i,e}^{t})] ||^2 \\
            & = ||\widetilde{w}_g^t - \widetilde{w}_g^{\tau_i^t}+ \eta_i^t\sum_{e=1}^E\sum_{r=1}^{e} (m_i^t)^r \nabla F_i(w_{i,e-r}^{t})||^2 + ||\eta_i\sum_{e=1}^E\nabla F_i(w_{i,e}^{\tau_i^t}) - \eta_g\sum_{e=1}^E\nabla F_g(\widetilde{w}_{g,e}^{\tau_i^t})||^2 \\ & \quad + 2\langle \widetilde{w}_g^t - \widetilde{w}_g^{\tau_i^t}+ \eta_i^t\sum_{e=1}^E\sum_{r=1}^{e} (m_i^t)^r \nabla F_i(w_{i,e-r}^{t}),  \eta_i\sum_{e=1}^E\nabla F_i(w_{i,e}^{\tau_i^t}) - \eta_g\sum_{e=1}^E\nabla F_g(\widetilde{w}_{g,e}^{\tau_i^t})\rangle\\
            & \leq ||\widetilde{w}_g^t - w^*||^2 + ||\widetilde{w}_g^{\tau_i^t} - w^*||^2 + ||- \eta_g\sum_{e=1}^E\nabla F_g(\widetilde{w}_{g,e}^{\tau_i^t})||^2 + ||\eta_i^t\sum_{e=1}^E\sum_{r=1}^{e} (m_i^t)^r \nabla F_i(w_{i,e-r}^{t})||^2 \\ & \quad + ||\eta_i^t\sum_{e=1}^E\nabla F_i(w_{i,e}^{t})||^2  + 2\langle \widetilde{w}_g^t - w^* ,\eta_i^t\sum_{e=1}^E\sum_{r=1}^{e} (m_i^t)^r \nabla F_i(w_{i,e-r}^{t})\rangle  \\ & \quad - 2\langle \widetilde{w}_g^{\tau_i^t} - w^*, \eta_i^t\sum_{e=1}^E\sum_{r=1}^{e} (m_i^t)^r \nabla F_i(w_{i,e-r}^{t})\rangle \\ & \quad + 2\langle \widetilde{w}_g^t - w^* ,\eta_i\sum_{e=1}^E\nabla F_i(w_{i,e}^{\tau_i^t}) - \eta_g\sum_{e=1}^E\nabla F_g(\widetilde{w}_{g,e}^{\tau_i^t})\rangle \\ & \quad  - 2\langle \widetilde{w}_g^{\tau_i^t} - w^*, \eta_i\sum_{e=1}^E\nabla F_i(w_{i,e}^{\tau_i^t}) - \eta_g\sum_{e=1}^E\nabla F_g(\widetilde{w}_{g,e}^{\tau_i^t})\rangle  \\ & \quad \\ & \quad + 2\langle  \eta_i^t\sum_{e=1}^E\sum_{r=1}^{e} (m_i^t)^r \nabla F_i(w_{i,e-r}^{t}),\eta_i\sum_{e=1}^E\nabla F_i(w_{i,e}^{\tau_i^t}) \rangle \\ & \quad - 2\langle  \eta_i^t\sum_{e=1}^E\sum_{r=1}^{e} (m_i^t)^r \nabla F_i(w_{i,e-r}^{t}),\eta_g\sum_{e=1}^E\nabla F_g(\widetilde{w}_{g,e}^{\tau_i^t})\rangle.
        \end{split}
    \end{equation}

    For the 3th term:
    
      \begin{equation}
        \begin{split}
            & ||- \eta_g\sum_{e=1}^E\nabla F_g(\widetilde{w}_{g,e}^{\tau_i^t})||^2 \leq \eta_g^2E^2||\nabla F_g(\widetilde{w}_{g,e}^{\tau_i^t})||^2.
        \end{split}
    \end{equation}

    Therefore, we have:
    
      \begin{equation}
        \begin{split}
            \mathbb{E}[||- \eta_g\sum_{e=1}^E\nabla F_g(\widetilde{w}_{g,e}^{\tau_i^t})||^2]\leq \eta_g^2E^2\mathbb{E}[||\nabla F_g(\widetilde{w}_{g,e}^{\tau_i^t})||^2] \leq \beta^2E^2\delta^2.
        \end{split}
    \end{equation}

    For the 6th term, using AM-GM inequality and Cauchy-Schwarz inequality, we have:
    
      \begin{equation}
        \begin{split}
            & \quad \ \mathbb{E}[2\langle \widetilde{w}_g^t - w^* ,\eta_i^t\sum_{e=1}^E\sum_{r=1}^{e} (m_i^t)^r \nabla F_i(w_{i,e-r}^{t})\rangle]  \\ & \leq \mathbb{E}[||\widetilde{w}_g^t - w^*||^2] +\mathbb{E}[||\eta_i^t\sum_{e=1}^E\sum_{r=1}^{e} (m_i^t)^r \nabla F_i(w_{i,e-r}^{t})||^2].
        \end{split}
    \end{equation}

    Similarly, the 8th term satisfies:
    
      \begin{equation}
        \begin{split}
            & \quad \mathbb{E}[2\langle \widetilde{w}_g^t - w^* ,\eta_i\sum_{e=1}^E\nabla F_i(w_{i,e}^{\tau_i^t}) - \eta_g\sum_{e=1}^E\nabla F_g(\widetilde{w}_{g,e}^{\tau_i^t})\rangle] \\
            & \leq \mathbb{E}[||\widetilde{w}_g^t - w^*||^2] + \mathbb{E}[||\eta_i\sum_{e=1}^E\nabla F_i(w_{i,e}^{\tau_i^t}) - \eta_g\sum_{e=1}^E\nabla F_g(\widetilde{w}_{g,e}^{\tau_i^t})||^2] \\
            & \leq \mathbb{E}[||\widetilde{w}_g^t - w^*||^2] + \beta^2E^2\mathbb{E}[||\nabla F_i(w_{i,e}^{\tau_i^t}) - \nabla F_g(\widetilde{w}_{g,e}^{\tau_i^t})||^2] \\
            & \leq \mathbb{E}[||\widetilde{w}_g^t - w^*||^2] + \beta^2E^2\delta^2.
        \end{split}
    \end{equation}

    For the 7th term, based on the reversed Cauchy-Schwarz inequality~\cite{polya12307aufgaben,dragomir2018reverses,otachel2018reverse} with $\gamma = \beta$ and $\Gamma = \frac{1}{\beta}$ and $m_i^{\tau_i^{t-1}} \leq 1$:
    
      \begin{equation}
        \begin{split}
            \mathbb{E}[- 2\langle \widetilde{w}_g^{\tau_i^t} - w^*, \eta_i^t\sum_{e=1}^E\sum_{r=1}^{e} (m_i^t)^r \nabla F_i(w_{i,e-r}^{t})\rangle] \leq -\frac{2\beta^2R}{\beta^2+1}\mathbb{E}[||\widetilde{w}_g^{\tau_i^t} - w^*||^2].
        \end{split}
    \end{equation}

    Similarly, the 9th term satisfies:
    
      \begin{equation}
        \begin{split}
            & \quad \mathbb{E}[- 2\langle \widetilde{w}_g^{\tau_i^t} - w^*, \eta_i\sum_{e=1}^E\nabla F_i(w_{i,e}^{\tau_i^t}) - \eta_g\sum_{e=1}^E\nabla F_g(\widetilde{w}_{g,e}^{\tau_i^t})\rangle] \\
            & = - 2\mathbb{E}[\langle \widetilde{w}_g^{\tau_i^t} - w^*, \eta_i\sum_{e=1}^E\nabla F_i(w_{i,e}^{\tau_i^t})] +2\mathbb{E}[\langle \widetilde{w}_g^{\tau_i^t} - w^*, \eta_g\sum_{e=1}^E\nabla F_g(\widetilde{w}_{g,e}^{\tau_i^t})\rangle \\
            & \leq -\frac{2\beta^2E^2}{\beta^2+1}\mathbb{E}[||\widetilde{w}_g^{\tau_i^t} - w^*||^2] + \mathbb{E}[||\widetilde{w}_g^{\tau_i^t} - w^*||^2] + \beta^2E^2\mathbb{E}[||\nabla F_g(\widetilde{w}_{g,e}^{\tau_i^t})||^2] \\
            & \leq -(\frac{2\beta^2E^2}{\beta^2+1}-1)\mathbb{E}[||\widetilde{w}_g^{\tau_i^t} - w^*||^2]+ \beta^2E^2\delta^2.
        \end{split}
    \end{equation}

    For the 10th term, we use AM-GM inequality and Cauchy-Schwarz inequality to bound it:
    
      \begin{equation}
        \begin{split}
           & \quad \mathbb{E}[2\langle  \eta_i^t\sum_{e=1}^E\sum_{r=1}^{e} (m_i^t)^r \nabla F_i(w_{i,e-r}^{t}), \eta_i\sum_{e=1}^E\nabla F_i(w_{i,e}^{\tau_i^t}) \rangle] \\ & \leq \mathbb{E}[||\eta_i^t\sum_{e=1}^E\sum_{r=1}^{e} (m_i^t)^r \nabla F_i(w_{i,e}^{t})||^2] + \mathbb{E}[||\eta_i\sum_{e=1}^E\nabla F_i(w_{i,e}^{\tau_i^t})||^2].
        \end{split}
    \end{equation}

    For the 11th term, based on~\cite{li2019convergence}, we have:
    
      \begin{equation}
        \begin{split}
            \mathbb{E}[-2\langle  \eta_i^t\sum_{e=1}^E\sum_{r=1}^{e} (m_i^t)^r \nabla F_i(w_{i,e-r}^{t}),\eta_g\sum_{e=1}^E\nabla F_g(\widetilde{w}_{g,e}^{\tau_i^t})\rangle]= 0.
        \end{split}
    \end{equation}

    To sum up, we have:
    
        \begin{equation}
        \begin{split}
            & \quad \mathbb{E}[||\widetilde{w}_g^t - w_g^t||^2] \\
            & \leq p^2K\mathbb{E}[(||\widetilde{w}_g^t - w_i^{\tau_i^t})||^2] \\
            & \leq p^2K(\mathbb{E}[||\widetilde{w}_g^t - w^*||^2] + \mathbb{E}[||\widetilde{w}_g^{\tau_i^t} - w^*||^2] + \beta^2E^2\delta^2 \\ & \quad + \mathbb{E}[||\eta_i^t\sum_{e=1}^E\sum_{r=1}^{e} (m_i^t)^r \nabla F_i(w_{i,e-r}^{t})||^2] + \mathbb{E}[||\eta_i^t\sum_{e=1}^E\nabla F_i(w_{i,e}^{t})||^2] \\ & \quad + \mathbb{E}[||\widetilde{w}_g^t - w^*||^2] + \mathbb{E}[||\eta_i^t\sum_{e=1}^E\sum_{r=1}^{e} (m_i^t)^r \nabla F_i(w_{i,e-r}^{t})||^2]  + \mathbb{E}[||\widetilde{w}_g^t - w^*||^2] + \beta^2E^2\delta^2 \\ & \quad -\frac{2\beta^2R}{\beta^2+1}\mathbb{E}[||\widetilde{w}_g^{\tau_i^t} - w^*||^2]  -(\frac{2\beta^2E^2}{\beta^2+1}-1)\mathbb{E}[||\widetilde{w}_g^{\tau_i^t} - w^*||^2]+ \beta^2E^2\delta^2 \\ & \quad + \mathbb{E}[||\eta_i^t\sum_{e=1}^E\sum_{r=1}^{e} (m_i^t)^r \nabla F_i(w_{i,e-r}^{t})||^2] + \mathbb{E}[||\eta_i\sum_{e=1}^E\nabla F_i(w_{i,e}^{\tau_i^t}||^2]).
        \end{split}
    \end{equation}

    Combining like terms, we have:
    
        \begin{equation}
        \begin{split}
            & \quad \mathbb{E}[||\widetilde{w}_g^t - w_g^t||^2] \\
            & \leq p^2K(3\mathbb{E}[||\widetilde{w}_g^t - w^*||^2] - (\frac{2\beta^2(R+E^2)}{\beta^2+1}-2) \mathbb{E}[||\widetilde{w}_g^{\tau_i^t} - w^*||^2] + \mathbb{E}[||\eta_i^t\sum_{e=1}^E\nabla F_i(w_{i,e}^{t})||^2] \\ & \quad + 3\beta^2E^2\delta^2 + 3\mathbb{E}[||\eta_i^t\sum_{e=1}^E\sum_{r=1}^{e} (m_i^t)^r \nabla F_i(w_{i,e}^{t})||^2] + \mathbb{E}[||\eta_i\sum_{e=1}^E\nabla F_i(w_{i,e}^{\tau_i^t})||^2]).
        \end{split}
    \end{equation}

    Since when $\sqrt{\frac{1}{KR+E^2-1}} < \beta < \sqrt{\frac{3}{2RK+2E^2 -3}}$, we have $2 <\frac{2\beta^2R}{\beta^2+1} < 3$, therefore $0 <\frac{2\beta^2(R+E^2)}{\beta^2+1} -2< 1$. Since $||\widetilde{w}_g^{\tau_i^t} - w^*||^2 \geq 0$, we know $(\frac{2\beta^2(R+E^2)}{\beta^2+1}-2) \mathbb{E}[||\widetilde{w}_g^{\tau_i^t} - w^*||^2] \geq 0$. Then, we have:
    
        \begin{equation}
        \begin{split}
            \mathbb{E}[||\widetilde{w}_g^t - w_g^t||^2]
            & \leq p^2K(3\mathbb{E}[||\widetilde{w}_g^t - w^*||^2] + 3\beta^2E^2\delta^2 + \mathbb{E}[||\eta_i^t\sum_{e=1}^E\nabla F_i(w_{i,e}^{t})||^2] \\ & \quad + \mathbb{E}[||\eta_i\sum_{e=1}^E\nabla F_i(w_{i,e}^{\tau_i^t})||^2])  + 3\mathbb{E}[||\eta_i^t\sum_{e=1}^E\sum_{r=1}^{e} (m_i^t)^r \nabla F_i(w_{i,e-r}^{t})||^2].   
        \end{split}
    \end{equation}
    
\end{proof}

Then, based on Lemma~\ref{lemma:our_avg_ideal_global_diff} and~\ref{lemma:our_avg_real_global_dif}, we can easily proof the Theorem~\ref{thm:our_model_convergence} as following:

\begin{proof}

Based on Assumption 1, we have:

    \begin{equation}
        F(w_g^t) - F^* \leq \langle \nabla F^*, F(w_g^t) \rangle + \frac{L}{2}||w_g^t - w^*||^2.
    \end{equation}

Since $F^*$ is the global optima, we have $\nabla F^* = 0$, therefore:

    \begin{equation}
        \begin{split}
            & \quad F(w_g^t) - F^* \\
            & \leq \frac{L}{2}||w_g^t - w^*||^2  \leq \frac{L}{2}||w_g^t - \widetilde{w}_g^t - (\widetilde{w}_g^t -  w^*)||^2 \\
            & \leq (3Lp^2K + L)\mathbb{E}[||\widetilde{w}_g^t - w^*||^2]  + p^2KL(3\beta^2E^2\delta^2 + \mathbb{E}[||\eta_i^t\sum_{e=1}^E\nabla F_i(w_{i,e}^{t})||^2] \\ & \quad + 3\mathbb{E}[||\eta_i^t\sum_{e=1}^E\sum_{r=1}^{e} (m_i^t)^r \nabla F_i(w_{i,e-r}^{t})||^2] + \mathbb{E}[||\eta_i\sum_{e=1}^E\nabla F_i(w_{i,e}^{\tau_i^t})||^2])).
        \end{split}
    \end{equation}

Then, based on Lemma~\ref{lemma:ideal_global_gradient},~\ref{lemma:our_avg_ideal_global_diff} and~\ref{lemma:our_avg_real_global_dif}, we have:

    \begin{equation}
        \begin{split}
            & \quad  \mathbb{E}[F(w_g^t)] - F^* \\
            & \leq (3Lp^2K + L)(\mathcal{V}^t\mathbb{E}[||w_g^0 - w^*||^2] + \frac{1 - \mathcal{V}^t}{1 - \mathcal{V}}8\beta^2E^2\delta^2 
           + \sum_{j=1}^t\mathcal{V}^j\mathbb{E}[||\eta_i^{t-1}\sum_{e=1}^E[\nabla F_i(w_{i,e}^{t-1})]||^2] 
            \\ & \quad + \sum_{j=1}^t\mathcal{V}^j\mathbb{E}[|| \eta_i^{t-1}\sum_{e=1}^E\sum_{r=1}^{e} (m_i^{t-1})^r \nabla F_i(w_{i,e-r}^{t-1})||^2])+ p^2KL(3\beta^2E^2\delta^2
            \\ & \quad + \mathbb{E}[||\eta_i^t\sum_{e=1}^E\nabla F_i(w_{i,e}^{t})||^2]  + 3\mathbb{E}[||\eta_i^t\sum_{e=1}^E\sum_{r=1}^{e} (m_i^t)^r \nabla F_i(w_{i,e-r}^{t})||^2] 
             + \mathbb{E}[||\eta_i\sum_{e=1}^E\nabla F_i(w_{i,e}^{\tau_i^t})||^2]) 
            \\& \leq (3LpK^2 + L)\mathcal{V}^t\mathbb{E}[||w_g^0 - w^*||^2] + \mathcal{U} + \mathcal{W},
        \end{split}
    \end{equation}

where $\mathcal{U} = [3p^2KL + \frac{8(3pK^2 + 1)(\beta^2L+L)}{2\beta^2(R+E^2) - 2\beta^2 - 2}]\beta^2E^2\delta^2$, $\mathcal{V} = (3 - \frac{2\beta^2(R+E^2)}{\beta^2+1})$, and 

    \begin{equation}
    \begin{split}
        \mathcal{W} &= (3Lp^2K + L)\sum_{j=1}^t\mathcal{V}^j\mathbb{E}[||\eta_i^{t-1}\sum_{e=1}^E[\nabla F_i(w_{i,e}^{t-1})]||^2] 
         + p^2KL(\mathbb{E}[||\eta_i\sum_{e=1}^E\nabla F_i(w_{i,e}^{\tau_i^t})||^2] \\ & \quad +\mathbb{E}[||\eta_i^t\sum_{e=1}^E\nabla F_i(w_{i,e}^{t})||^2] )
         + (3Lp^2K + L)\sum_{j=1}^t\mathcal{V}^j\mathbb{E}[|| \eta_i^{t-1}\sum_{e=1}^E\sum_{r=1}^{e} (m_i^{t-1})^r \nabla F_i(w_{i,e-r}^{t-1})||^2])
         \\ & \quad+ 3p^2KL\mathbb{E}[||\eta_i^t\sum_{e=1}^E\sum_{r=1}^{e} (m_i^t)^r \nabla F_i(w_{i,e-r}^{t})||^2].
    \end{split}
\end{equation}

Based on Assumption~\ref{ass:bounded_gra}, we can further bound the gradient expectation term $\mathcal{W}$ by:

    \begin{equation}
        \begin{split}
            \mathcal{W} & \leq \frac{(3p^2K + 1)(\beta^2L+L)}{2\beta^2(R+E^2) - 2\beta^2 - 2}\beta^2E^2G_c^2  + 2p^2KL\beta^2E^2G_c^2 \\ & \quad + \frac{(3p^2K + 1)(\beta^2L+L)}{2\beta^2(R+E^2) - 2\beta^2 - 2}\beta^2RQ(t)G_c^2 + 3p^2KL\beta^2RQ(t)G_c^2\\
            & \leq [p^2KL(2E^2+3RQ(t)) + \frac{(3p^2K + 1)(\beta^2L+L)(E^2+RQ(t))}{2\beta^2(R+E^2) - 2\beta^2 - 2}]\beta^2G_c^2.
        \end{split}
    \end{equation}

\end{proof}

\end{document}